\documentclass{article} 
\usepackage{amssymb}
\usepackage{amsmath}
\usepackage{amsthm}
\usepackage[a4paper]{geometry}
\geometry{top=1.0in, bottom=1.0in, left=1.0in, right=1.0in}

\usepackage[round]{natbib}
\setlength{\bibsep}{0.5em} %
\usepackage[colorlinks=true, linkcolor=blue]{hyperref}

\usepackage{amssymb}
\usepackage{amsmath,amscd}
\usepackage{amsthm} 
\usepackage{cancel}     %
\usepackage{centernot}  %
\usepackage{graphicx}   %
\usepackage{soul}       %
\usepackage{color}      %
\usepackage{enumitem}   %
\usepackage{commath}    %

\DeclareFontFamily{U}{mathx}{\hyphenchar\font45}
\DeclareFontShape{U}{mathx}{m}{n}{
      <5> <6> <7> <8> <9> <10> gen * mathx
      <10.95> mathx10 <12> <14.4> <17.28> <20.74> <24.88> mathx12
      }{}
\DeclareSymbolFont{mathx}{U}{mathx}{m}{n}
\DeclareMathSymbol{\intop}  {1}{mathx}{"B3}

\newcommand\indep{\independent}
\newcommand\independent{\protect\mathpalette{\protect\independenT}{\perp}}
\def\independenT#1#2{\mathrel{\rlap{$#1#2$}\mkern4mu{#1#2}}}

\newcommand{\wh}{\widehat}
\newcommand{\wb}{\overline}

\let\temp\phi
\let\phi\varphi
\let\varphi\temp

\renewcommand{\sec}{\textsection}

\newcommand{\pr}{\mathbb{P}}

\newcommand{\R}{\mathbb{R}}

\newcommand{\E}{\mathbb{E}}

\newcommand{\normalN}{\mathcal{N}}
\newcommand{\given}{\,|\,}  %

\renewcommand{\norm}[1]{\Vert#1\Vert} %

\newcommand{\ip}[1]{\langle#1\rangle}

\newcommand{\eps}{\varepsilon}

\newcommand{\les}{\lesssim}
\newcommand{\ges}{\gtrsim}

\newcommand{\iid}{\overset{\text{iid}}{\sim}}

\DeclareMathOperator{\cov}{cov}
\DeclareMathOperator{\var}{var}

\DeclareMathOperator*{\argmin}{arg\,min}

\DeclareMathOperator{\BernoulliDist}{Ber}

\newtheorem{thm}{Theorem}[section]
\newtheorem{lemma}[thm]{Lemma}
\newtheorem{cor}[thm]{Corollary}
\newtheorem{prop}[thm]{Proposition}

\newtheorem*{thm*}{Theorem}
\newtheorem*{lemma*}{Lemma}
\newtheorem*{cor*}{Corollary}
\newtheorem*{prop*}{Proposition}
\newtheorem*{conjecture*}{Conjecture}

\theoremstyle{definition}

\newtheorem*{defn*}{Definition}
\theoremstyle{definition}

\theoremstyle{definition}
\newtheorem{ex}{Example}
\theoremstyle{remark}
\newtheorem*{ex*}{Example}
\theoremstyle{definition}

\theoremstyle{definition}
\newtheorem*{assm*}{Assumption}
\theoremstyle{remark}

\theoremstyle{remark}
\newtheorem*{remark*}{Remark}

\theoremstyle{definition}
\newtheorem{cond}{Condition}

\usepackage{algorithm}
\usepackage{subcaption} %
\usepackage{booktabs}

\newcommand{\err}{z}
\newcommand{\gr}{\mathsf{G}}
\DeclareMathOperator{\pa}{pa}

\newcommand{\ord}{\prec}

\newcommand{\width}{d}
\newcommand{\layer}{L}
\newcommand{\anc}{A}
\newcommand{\diff}{\omega}
\newcommand{\resvarbd}{\xi}
\newcommand{\npestbd}{\delta}

\DeclareMathOperator{\resvar}{RV}

\newcommand{\RESIT}{\text{RESIT}}
\newcommand{\CAM}{\text{CAM}}
\newcommand{\EqVar}{\text{EqVar}}
\newcommand{\NOTEARS}{\text{NOTEARS}}
\newcommand{\GSGES}{\text{GSGES}}
\newcommand{\PC}{\text{PC}}
\newcommand{\GES}{\text{GES}}
\newcommand{\NPVAR}{\text{NPVAR}}

\DeclareMathOperator{\de}{de}

\title{A polynomial-time algorithm for learning nonparametric causal graphs}

\usepackage{authblk}
\author[]{Ming Gao}
\author[]{Yi Ding}
\author[]{Bryon Aragam}
\affil[]{\emph{University of Chicago}}

\begin{document}

\maketitle

{\let\thefootnote\relax\footnote{Contact: \texttt{minggao@uchicago.edu, dingy@uchicago.edu, bryon@chicagobooth.edu}}}

\begin{abstract}
We establish finite-sample guarantees for a polynomial-time algorithm for learning a nonlinear, nonparametric directed acyclic graphical (DAG) model from data.
The analysis is model-free and does not assume linearity, additivity, independent noise, or faithfulness.
Instead, we impose a condition on the residual variances that is closely related to previous work on linear models with equal variances. 
Compared to an optimal algorithm with oracle knowledge of the variable ordering, the additional cost of the algorithm is linear in the dimension $d$ and the number of samples $n$. 
Finally, we compare the proposed algorithm to existing approaches in a simulation study.
\end{abstract}

\section{Introduction}
\label{sec:intro}

Modern machine learning (ML) methods are driven by complex, high-dimensional, and nonparametric models that can capture highly nonlinear phenomena. These models have proven useful in wide-ranging applications including vision, robotics, medicine, and natural language. At the same time, the complexity of these methods often obscure their decisions and in many cases can lead to wrong decisions by failing to properly account for---among other things---spurious correlations, adversarial vulnerability, and invariances \citep{bottou2015two,scholkopf2019causality,buhlmann2018invariance}. This has led to a growing literature on correcting these problems in ML systems. A particular example of this that has received widespread attention in recent years is the problem of causal inference, which is closely related to these issues. While substantial methodological progress has been made towards embedding complex methods such as deep neural networks and RKHS embeddings into learning causal graphical models \citep{huang2018generalized,mitrovic2018causal,zheng2019learning,yu2019dag,lachapelle2019gradient,zhu2019causal,ng2019masked}, theoretical progress has been slower and typically reserved for particular parametric models such as linear \citep{chen2018causal,wang2018nongauss,ghoshal2017ident,ghoshal2017sem,loh2014causal,geer2013,aragam2015ccdr,aragam2015highdimdag,aragam2019globally} and generalized linear models \citep{park2017,park2018learning}.

In this paper, we study the problem of learning directed acyclic graphs (DAGs) from data in a nonparametric setting. Unlike existing work on this problem, we do not require linearity, additivity, independent noise, or faithfulness.
Our approach is model-free and nonparametric, and uses nonparametric estimators (kernel smoothers, neural networks, splines, etc.) as ``plug-in'' estimators. As such, it is agnostic to the choice of nonparametric estimator chosen. Unlike existing consistency theory in the nonparametric setting \citep{peters2014,hoyer2009,buhlmann2014,rothenhausler2018causal,huang2018generalized,tagasovska2018nonparametric,nowzohour2016}, we provide explicit (nonasymptotic) finite sample complexity bounds and show that the resulting method has polynomial time complexity.
The method we study is closely related to existing algorithms that first construct a variable ordering \citep{ghoshal2017ident,chen2018causal,ghoshal2017sem,park2020identifiability}.
Despite this being a well-studied problem, to the best of our knowledge our analysis is the first to provide explicit, simultaneous statistical and computational guarantees for learning nonparametric DAGs.

\begin{figure}[t]
\centering
\begin{subfigure}[t]{0.5\textwidth}
\includegraphics[width=\textwidth]{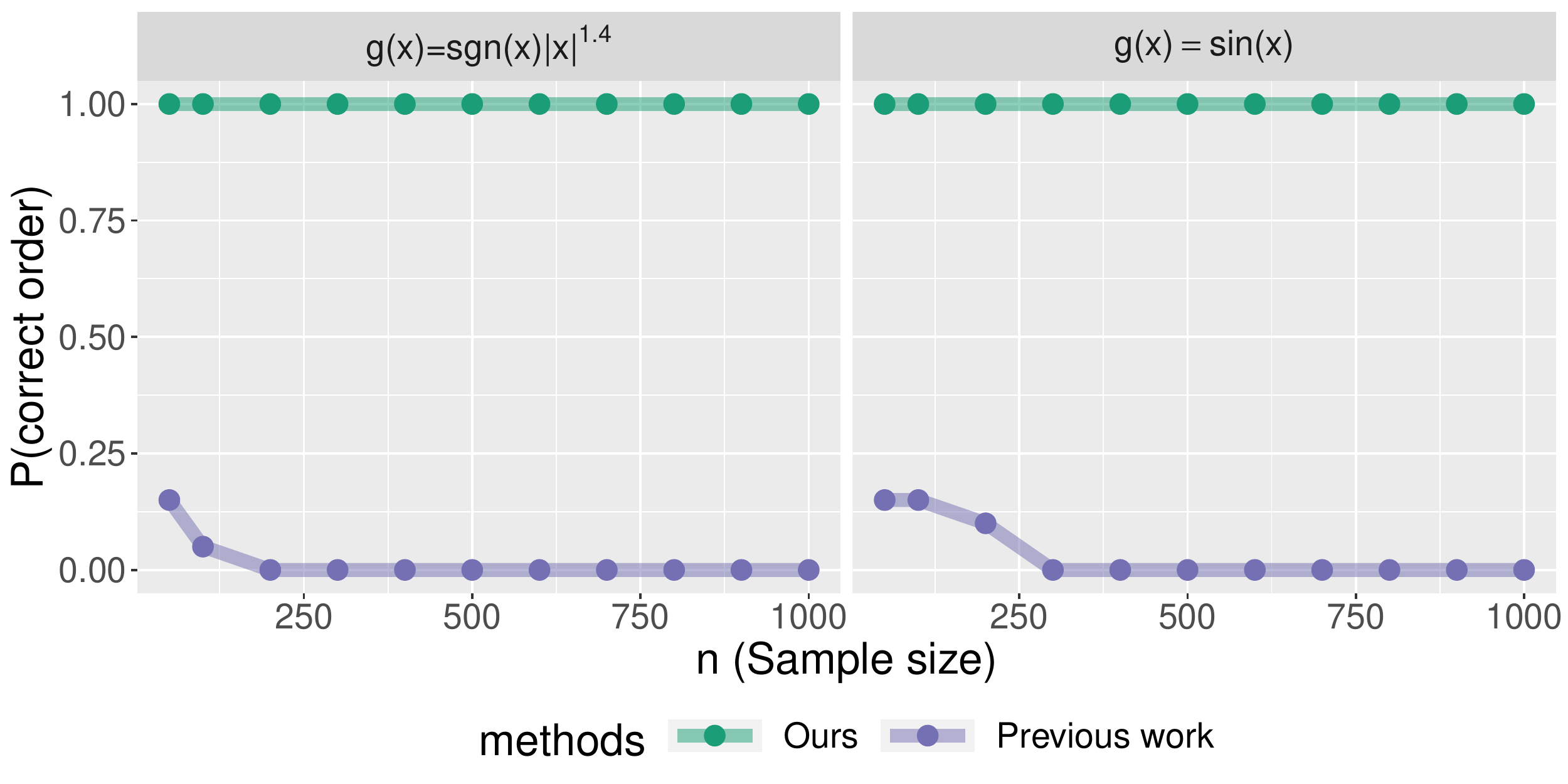}
\caption{}
\label{fig:intro:camfail}
\end{subfigure}
\hspace{1em}
\begin{subfigure}[t]{0.43\textwidth}
\includegraphics[width=\textwidth]{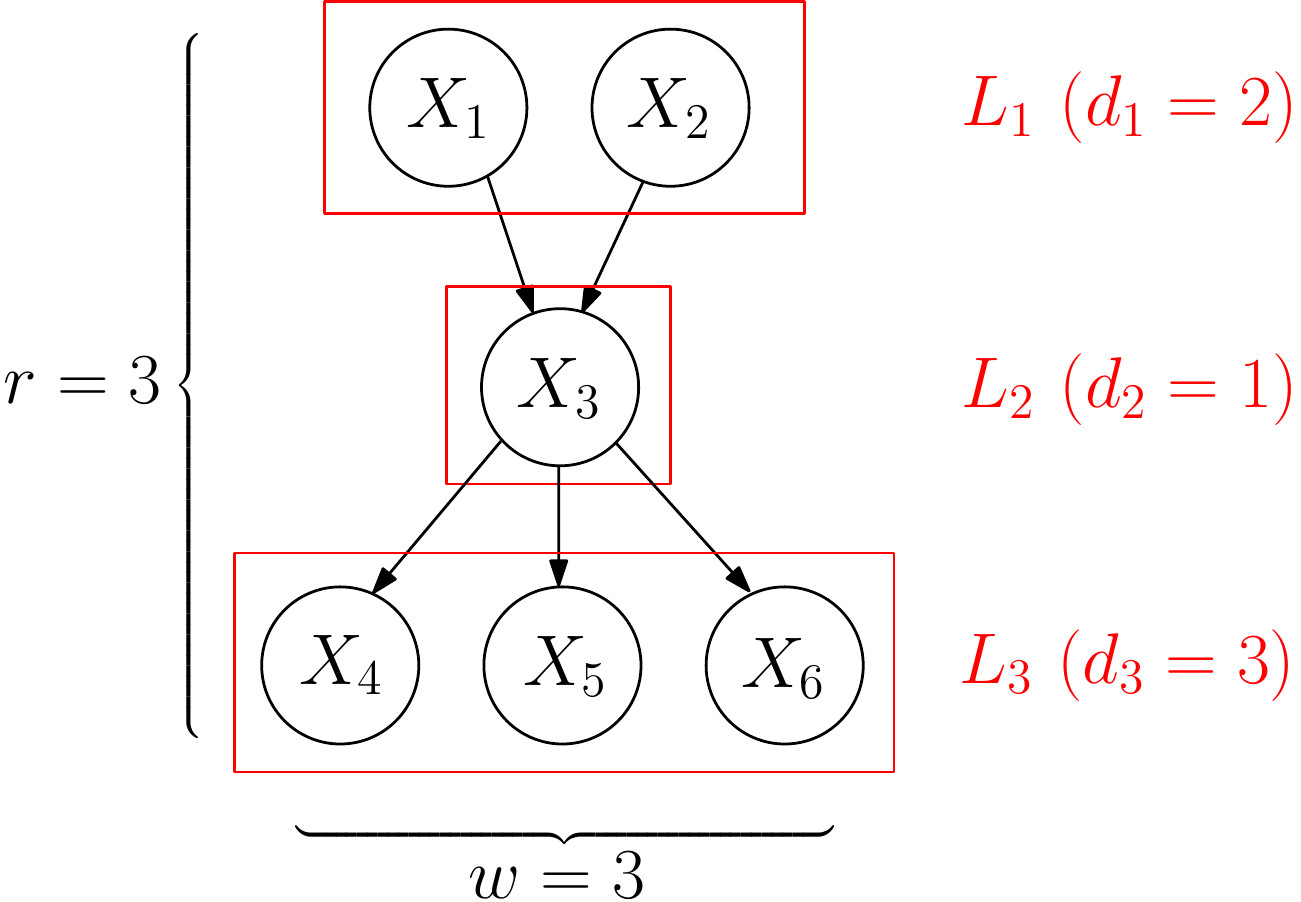}
\caption{}
\label{fig:intro:layers}
\end{subfigure}
\caption{(a) Existing methods may not find a correct topological ordering in simple settings when $d=3$. (b) Example of a layer decomposition $\layer(\gr)$ of a DAG on $d=6$ nodes.}
\label{fig:intro}
\end{figure}

\paragraph{Contributions}
Figure~\ref{fig:intro:camfail} illustrates a key motivation for our work: While there exist methods that obtain various statistical guarantees, they lack provably efficient algorithms, or vice versa. 
As a result, these methods can fail in simple settings. 
Our focus is on \emph{simultaneous} computational and statistical guarantees that are explicit and nonasymptotic in a model-free setting.
More specifically, our main contributions are as follows:

\begin{itemize}
\item We show that the algorithms of \citet{ghoshal2017ident} and \citet{chen2018causal} rigourously extend to a model-free setting, and provide a method-agnostic analysis of the resulting extension (Theorem~\ref{thm:main:sample}).  That is, the time and sample complexity bounds depend on the choice of estimator used, and this dependence is made explicit in the bounds (Section~\ref{sec:ident:alg}, Section~\ref{sec:sample}).
\item We prove that this algorithm runs in at most $O(nd^{5})$ time and needs at most $\Omega((d^{2}/\eps)^{1+\width/2})$ samples (Corollary~\ref{cor:main:sample}). 
Moreover, the exponential dependence on $d$ can be improved by imposing additional sparsity or smoothness assumptions, and can even be made polynomial (see Section~\ref{sec:sample} for discussion). This is an expected consequence of our estimator-agnostic approach.
\item We show how existing identifiability results based on ordering variances can be unified and generalized to include model-free families (Theorem~\ref{thm:gen:ident}, Section~\ref{sec:ident:nonpar}). 
\item We show that greedy algorithms such as those used in the CAM algorithm \citep{buhlmann2014} can provably fail to recover an identifiable DAG (Example~\ref{ex:cam:fail}), as shown in Figure~\ref{fig:intro:camfail} (Section~\ref{sec:ident:comparison}).
\item Finally, we run a simulation study to evaluate the resulting algorithm in a variety of settings against seven state-of-the-art algorithms (Section~\ref{sec:exp}).
\end{itemize}
Our simulation results can be summarized as follows: When implemented using generalized additive models \citep{hastie1990generalized}, our method outperforms most state-of-the-art methods, particularly on denser graphs with hub nodes. 
We emphasize here, however, that our main contributions lay in the theoretical analysis, specifically providing a polynomial-time algorithm with sample complexity guarantees.

\paragraph{Related work}
The literature on learning DAGs is vast, so we focus only on related work in the nonparametric setting.
The most closely related line work considers additive noise models (ANMs) \citep{peters2014,hoyer2009,buhlmann2014,chicharro2019conditionally}, and prove a variety of identifiability and consistency guarantees. Compared to our work, the identifiability results proved in these papers require that the structural equations are (a) nonlinear with (b) additive, independent noise. Crucially, these papers focus on (generally asymptotic) \emph{statistical} guarantees without any computational or algorithmic guarantees. 
There is also a closely related line of work for bivariate models \citep{mooij2014,monti2019causal,wu2020causal,mitrovic2018causal} as well as the post-nonlinear model \citep{zhang2009}.
\citet{huang2018generalized} proposed a greedy search algorithm using an RKHS-based generalized score, and proves its consistency assuming faithfulness. \citet{rothenhausler2018causal} study identifiability of a general family of partially linear models and prove consistency of a score-based search procedure in finding an equivalence class of structures. There is also a recent line of work on embedding neural networks and other nonparametric estimators into causal search algorithms \citep{lachapelle2019gradient,zheng2019learning,yu2019dag,ng2019masked,zhu2019causal} without theoretical guarantees. While this work was in preparation, we were made aware of the recent work \citep{park2020condvar} that proposes an algorithm that is similar to ours---also based on \citep{ghoshal2017ident} and \citep{chen2018causal}---and establishes its sample complexity for linear Gaussian models. 
In comparison to these existing lines of work, our focus is on simultaneous computational and statistical guarantees that are explicit and nonasymptotic (i.e. valid for all finite $d$ and $n$), for the fully nonlinear, nonparametric, and model-free setting.

\paragraph{Notation}
Subscripts (e.g. $X_{j}$) will always be used to index random variables and superscripts (e.g. $X_{j}^{(i)}$) to index observations.
For a matrix $W=(w_{kj})$, $w_{\cdot j}\in\R^{d}$ is the $j$th column of $W$.
We denote the indices by $[d]=\{1,\ldots,d\}$, and frequently abuse notation by identifying the indices $[d]$ with the random vector $X=(X_{1},\ldots,X_{d})$. 
For example, nodes $X_{j}$ are interchangeable with their indices $j$ (and subsets thereof), so e.g. $\var(j\given A)$ is the same as $\var(X_{j}\given X_{A})$.

\section{Background}
\label{sec:background}

Let $X=(X_{1},\ldots,X_{d})$ be a $d$-dimensional random vector and $\gr=(V,E)$ a DAG where we implicitly assume $V=X$. The \emph{parent set} of a node is defined as 
$\pa_{\gr}(X_{j})=\{i: (i,j)\in E\}$, 
or simply $\pa(j)$ for short.
A \emph{source} node is any node $X_{j}$ such that $\pa(j)=\emptyset$ and an \emph{ancestral set} is any set $A\subset V$ such that $X_{j}\in A\implies\pa(j)\subset A$. 
The graph $\gr$ is called a \emph{Bayesian network} (BN) for $X$ if it satisfies the Markov condition, i.e. that each variable is conditionally independent of its non-descendants given its parents.
Intuitively, a BN for $X$ can be interpreted as a representation of the direct and indirect relationships between the $X_{j}$, e.g. an edge $X_{i}\to X_{j}$ indicates that $X_{j}$ depends directly on $X_{i}$, and not vice versa. 
Under additional assumptions such as causal minimality and no unmeasured confounding, these arrows may be interpreted causally; for more details, see the surveys \citep{buhlmann2018invariance,scholkopf2019causality} or the textbooks \citep{lauritzen1996,koller2009,spirtes2000,pearl2009,peters2017elements}.

The goal of structure learning is to learn a DAG $\gr$ from i.i.d. observations $X^{(i)}\iid\pr(X)$. Throughout this paper, we shall exploit the following well-known fact: To learn $\gr$, it suffices to learn a topological sort of $\gr$, i.e. an ordering $\ord$ such that $X_{i}\to X_{j}\implies X_{i}\ord X_{j}$. A brief review of this material can be found in the supplement. 

\paragraph{Equal variances}
Recently, a new approach has emerged which was originally cast as an approach to learn equal variance DAGs \citep{ghoshal2017ident,chen2018causal}, although it has since been generalized beyond the equal variance case \citep{ghoshal2017sem,park2020identifiability,park2020condvar}. An equal variance DAG is a linear structural equation model (SEM) that satisfies 
\begin{align}
\label{eq:eqvar:lin}
X_{j}
= \ip{w_{\cdot j},X} + \err_{j},
\quad
\var(\err_{j}) 
= \sigma^{2},
\quad
\err_{j}\indep\pa(j),
\quad
w_{kj} = 0\iff k\notin\pa(j)
\end{align}
for some weights $w_{kj}\in\R$.
Under the model \eqref{eq:eqvar:lin}, a simple algorithm can learn the graph $\gr$ 
by first learning a topological sort $\ord$. For these models, we have the following decomposition of the variance:
\begin{align}
\label{eq:var:decomp:lin}
\var(X_{j})
&= \var(\ip{w_{\cdot j},X}) + \var(\err_{j}).
\end{align}
Thus, as long as $\var(\ip{w_{\cdot j},X})>0$, we have $\var(X_{j})>\var(\err_{j})$. It follows that as long as $\var(\err_{j})$ does not depend on $j$, it is possible to identify a source in $\gr$ by simply minimizing the residual variances. 
This is the essential idea behind algorithms based on equal variances in the linear setting \citep{ghoshal2017ident,chen2018causal}.
Alternatively, it is possible to iteratively identify best sinks by minimizing marginal precisions.
Moreover, this argument shows that the assumption of linearity is not crucial, and this idea can readily be extended to ANMs, as in \citep{park2020condvar}. Indeed, the crucial assumption in this argument is the independence of the noise $\err_{j}$ and the parents $\pa(X_{j})$; in the next section we show how these assumptions can be removed altogether.

\paragraph{Layer decomposition of a DAG}
Given a DAG $\gr$, define a collection of sets as follows: $\layer_{0}:=\emptyset$, $\anc_{j}=\cup_{m=0}^{j}\layer_{m}$ and for $j>0$, $\layer_{j}$ is the set of all source nodes in the subgraph $\gr[V-\anc_{j-1}]$ formed by removing the nodes in $\anc_{j-1}$. So, e.g., $\layer_{1}$ is the set of source nodes in $\gr$ and $\anc_{1}=\layer_{1}$. This decomposes $\gr$ into layers, where each layer $\layer_{j}$ consists of nodes that are sources in the subgraph $\gr[V-\anc_{j-1}]$, and $\anc_{j}$ is an ancestral set for each $j$. 
Let $r$ denote the number of ``layers'' in $\gr$, $\layer(\gr):=(\layer_{1},\ldots,\layer_{r})$ be the corresponding layers.
The quantity $r$ effectively measure the depth of a DAG. See Figure~\ref{fig:intro:layers} for an illustration.

Learning $\gr$ is equivalent to learning the sets $\layer_{1},\ldots,\layer_{r}$, since any topological sort $\pi$ of $\gr$ can be determined from $\layer(\gr)$, and from any sort $\pi$, the graph $\gr$ can be recovered via variable selection. Unlike a topological sort of $\gr$, which may not be unique, the layer decomposition $\layer(\gr)$ is always unique.
Therefore, without loss of generality, in the sequel we consider the problem of identifying and learning $\layer(\gr)$.

\section{Identifiability and algorithmic consequences}
\label{sec:ident}

This section sets the stage for our main results on learning nonparametric DAGs: First, we show that existing identifiability results for equal variances generalize to a family of model-free, nonparametric distributions. Second, we show that this motivates an algorithm very similar to existing algorithms in the equal variance case. We emphasize that various incarnations of these ideas have appeared in previous work \citep{ghoshal2017ident,chen2018causal,ghoshal2017sem,park2020identifiability,park2020condvar}, and our effort in this section is to unify these ideas and show that the same ideas can be applied in more general settings without linearity or independent noise.
Once this has been done, our main sample complexity result is presented in Section~\ref{sec:sample}.

\subsection{Nonparametric identifiability}
\label{sec:ident:nonpar}
In general, a BN for $X$ need not be unique, i.e. $\gr$ is not necessarily identifiable from $\pr(X)$. A common strategy in the literature to enforce identifiability is to impose structural assumptions on the conditional distributions $\pr(X_{j}\given \pa(j))$, for which there is a broad literature on identifiability. Our first result shows that identifiability is guaranteed as long as the residual variances $\E\var(X_{j}\given \pa(j))$ do not depend on $j$. This is a natural generalization of the notion of equality of variances for linear models \citep{peters2013,ghoshal2017ident,chen2018causal}. 
\begin{thm}
\label{thm:gen:ident}
If $\E\var(X_{j}\given \pa(j))\equiv\sigma^{2}$ does not depend on $j$, then $\gr$ is identifiable from $\pr(X)$.
\end{thm}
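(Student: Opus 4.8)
\medskip
\noindent The plan is to show that the layer decomposition $\layer(\gr)=(\layer_{1},\ldots,\layer_{r})$ --- which is unique and, as recalled in Section~\ref{sec:background}, determines $\gr$ --- is a functional of $\pr(X)$ alone, and can be peeled off one layer at a time from conditional residual variances. The engine is the law of total variance together with the local Markov property. Fix an ancestral set $A$ with $j\notin A$. Since $A$ is ancestral and omits $j$, every node of $A$ is a non-descendant of $j$, so $X_{j}\indep X_{A}\given X_{\pa(j)}$, whence $\E[X_{j}\given X_{A},X_{\pa(j)}]=\E[X_{j}\given X_{\pa(j)}]$ and $\var(X_{j}\given X_{A},X_{\pa(j)})=\var(X_{j}\given X_{\pa(j)})$ a.s. Two applications of the law of total variance (conditioning on $X_{\pa(j)}$ inside the conditioning on $X_{A}$) followed by taking expectations give
\[
\E\var(X_{j}\given X_{A})
=\E\var(X_{j}\given X_{\pa(j)})+\E\var\big(\E[X_{j}\given X_{\pa(j)}]\given X_{A}\big)
=\sigma^{2}+\E\var\big(\E[X_{j}\given X_{\pa(j)}]\given X_{A}\big)
\geq\sigma^{2}.
\]

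Two consequences drive the argument. First, if $\pa(j)\subseteq A$ the error term vanishes and $\E\var(X_{j}\given X_{A})=\sigma^{2}$; taking $A=\emptyset$ this forces $\var(X_{j})=\sigma^{2}$ at every source node, and since every DAG has a source while the display gives $\var(X_{k})\geq\sigma^{2}$ for all $k$, we get $\sigma^{2}=\min_{k}\var(X_{k})$, so $\sigma^{2}$ itself is identified from $\pr(X)$. Second, if $\pa(j)\not\subseteq A$ then $\E\var(X_{j}\given X_{A})>\sigma^{2}$: by the equality case in the display this amounts to requiring that $\E[X_{j}\given X_{\pa(j)}]$ is not a.s.\ $\sigma(X_{A})$-measurable whenever $j$ has a parent outside $A$, i.e.\ that the conditional mean genuinely depends on every parent --- the nonparametric counterpart of $w_{kj}=0\iff k\notin\pa(j)$ in \eqref{eq:eqvar:lin}, which holds automatically for linear SEMs and additive-noise models --- and I would state this non-degeneracy explicitly. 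Granting it, the layers follow by induction on $t$: having recovered $\anc_{t-1}=\layer_{1}\cup\cdots\cup\layer_{t-1}$ (with $\anc_{0}=\emptyset$), the set $\{\,j\notin\anc_{t-1}:\E\var(X_{j}\given X_{\anc_{t-1}})=\sigma^{2}\,\}$ equals, by the two consequences, $\{\,j\notin\anc_{t-1}:\pa(j)\subseteq\anc_{t-1}\,\}$, which is exactly the set of source nodes of $\gr[V-\anc_{t-1}]$, namely $\layer_{t}$. This set is nonempty as long as $\anc_{t-1}\neq V$, so the recursion exhausts $V$ and reconstructs $\layer(\gr)$ using only functionals of $\pr(X)$; identifiability of $\gr$ then follows since $\gr$ is recoverable from $\layer(\gr)$.

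The main obstacle is the strict inequality for non-sources. In a fully model-free BN a parent can influence only the conditional variance of its child, not its conditional mean, in which case $\E\var(X_{j}\given X_{A})=\sigma^{2}$ may hold even with $\pa(j)\not\subseteq A$, and the recursion would wrongly place $j$ in an earlier layer; the resolution is exactly the non-degeneracy condition above, under which the equality case of the law of total variance yields the strict inequality at once. Relatedly, one should read ``$\gr$ is identifiable'' as saying that the residual-variance condition singles out a \emph{unique} causally minimal DAG compatible with $\pr(X)$: since adjoining inactive edges preserves both the Markov property and the residual variances, some form of minimality must be part of (or folded into) the hypotheses.
\medskip
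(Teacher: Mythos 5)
Your proof is correct and follows essentially the same route as the paper: the law of total variance combined with the local Markov property yields $\E\var(X_j\given X_A)=\sigma^2+\E\var\big(\E[X_j\given X_{\pa(j)}]\given X_A\big)$ for any ancestral set $A$ with $j\notin A$, and the layers are then identified recursively as the nodes attaining the minimal residual variance $\sigma^2$. Your explicit flagging of the non-degeneracy condition required for the strict inequality when $\pa(j)\not\subseteq A$ is in fact a point of greater precision than the paper's own argument, which asserts that strict inequality without comment and thereby implicitly assumes that $\E[X_j\given X_{\pa(j)}]$ is not a.s.\ $\sigma(X_A)$-measurable --- the nonparametric analogue of the condition $w_{kj}=0\iff k\notin\pa(j)$ and of causal minimality, exactly as you observe.
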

The proof of Theorem~\ref{thm:gen:ident} can be found in the supplement.
This result makes no structural assumptions on the local conditional probabilities $\pr(X_{j}\given \pa(j))$. To illustrate, we consider some examples below.

\begin{ex}[Causal pairs, \citep{mooij2014}]
Consider a simple model on two variables: $X\to Y$ with $\E\var(Y\given X)=\var(X)$. Then as long as $\E[Y\given X]$ is nonconstant,
Theorem~\ref{thm:gen:ident} implies the causal order is identifiable. No additional assumptions on the noise or functional relationships are necessary.
\end{ex}

\begin{ex}[Binomial models, \citep{park2017}]
Assume $X_{j}\in\{0,1\}$ and $X_{j}=\BernoulliDist(f_{j}(\pa(j)))$ with $f_{j}(\pa(j))\in[0,1]$. Then Theorem~\ref{thm:gen:ident} implies that if $\E f_{j}(\pa(j))(1-f_{j}(\pa(j)))\equiv\sigma^{2}$ does not depend on $j$, then $\gr$ is identifiable.
\end{ex}

\begin{ex}[Generalized linear models]
The previous example can of course be generalized to arbitrary generalized linear models: Assume $\pr[X_{j}\given \pa(j)]\propto \exp(X_{j}\theta_{j}-K(\theta_{j}))$, where $\theta_{j}=f_{j}(\pa(j))$ and $K(\theta_{j})$ is the partition function. Then Theorem~\ref{thm:gen:ident} implies that if $\E[K''(f_{j}(\pa(j)))]\equiv\sigma^{2}$ does not depend on $j$, then $\gr$ is identifiable.
\end{ex}

\begin{ex}[Additive noise models, \citep{peters2014}]
Finally, we observe that Theorem~\ref{thm:gen:ident} generalizes existing results for ANMs:
In an ANM, we have $X_{j}=f_{j}(\pa(j))+\err_{j}$ with $\err_{j}\independent\pa(j)$. If $\var(\err_{j})=\sigma^{2}$, then an argument similar to \eqref{eq:var:decomp:lin} shows that ANMs with equal variances are identifiable. 
Theorem~\ref{thm:gen:ident} applies to more general additive noise models $X_{j}=f_{j}(\pa(j))+g_{j}(\pa(j))^{1/2}\err_{j}$ with heteroskedastic, uncorrelated (i.e. not necessarily independent) noise.
\end{ex}

\paragraph{Unequal variances}
Early work on this problem focused on the case of equal variances \citep{ghoshal2017ident,chen2018causal}, as we have done here. This assumption illustrates the main technical difficulties in proving identifiability, and it is well-known by now that equality of variances is not necessary, and a weaker assumption that allows for heterogeneous residual variances suffices in special cases \citep{ghoshal2017sem,park2020identifiability}. Similarly, the extension of Theorem~\ref{thm:gen:ident} to such heterogeneous models is straightforward, and omitted for brevity; see Appendix~\ref{app:unequal} in the supplement for additional discussion and simulations. In the sequel, we focus on the case of equality for simplicity and ease of interpretation.

\subsection{A polynomial-time algorithm}
\label{sec:ident:alg}

The basic idea behind the top-down algorithm proposed in \citep{chen2018causal} can easily be extended to the setting of Theorem~\ref{thm:gen:ident}, and is outlined in Algorithm~\ref{alg:eqvaranm:pop}. The only modification is to replace the error variances $\var(\err_{j})=\sigma^{2}$ from the linear model \eqref{eq:eqvar:lin} with the corresponding residual variances (i.e. $\E\var(X_{\ell}\given S_{j})$), which are well-defined for any $\pr(X)$ with finite second moments.

A natural idea to translate Algorithm~\ref{alg:eqvaranm:pop} into an empirical algorithm is to replace the residual variances with an estimate based on the data. 
One might then hope to use similar arguments as in the linear setting to establish consistency and bound the sample complexity. Perhaps surprisingly, this does not work unless the topological sort of $\gr$ is unique. When there is more than one topological sort, it becomes necessary to uniformly bound the errors of all possible residual variances---and in the worst case there are exponentially many ($d2^{d-1}$ to be precise) possible residual variances. 
The key issue is that the sets $S_{j}$ in Algorithm~\ref{alg:eqvaranm:pop} are \emph{random} (i.e. data-dependent), and hence unknown in advance.
This highlights a key difference between our algorithm and existing work for linear models such as \citep{ghoshal2017ident,ghoshal2017sem,chen2018causal,park2020condvar}: In our setting, the residual variances cannot be written as simple functions of the covariance matrix $\Sigma:=\E X\!X^{T}$, which simplifies the analysis for linear models considerably. 
Indeed, although the same exponential blowup arises for linear models, in that case consistent estimation of the covariance matrix $\Sigma:=\E X\!X^{T}$ provides \emph{uniform} control over all possible residual variances (e.g., see Lemma~6 in \citep{chen2018causal}). In the nonparametric setting, this reduction no longer applies. 

To get around this technical issue, we modify Algorithm~\ref{alg:eqvaranm:pop} to learn $\gr$ one layer $\layer_{j}$ at a time, as outlined in Algorithm~\ref{alg:eqvaranm:emp} (see Section~\ref{sec:background} for details on $\layer_{j}$). As a result, we need only estimate $\sigma_{\ell j}^{2}:=\E\var(X_{\ell}\given \anc_{j})$, which involves regression problems with at most $|\anc_{j}|$ nodes. 
We use the plug-in estimator \eqref{eq:plugin} for this, although more sophisticated estimators are available \citep{doksum1995nonparametric,robins2008higher}. This also necessitates the use of sample splitting in Step 3(a) of Algorithm~\ref{alg:eqvaranm:emp}, which is necessary for the theoretical arguments but not needed in practice.

The overall computational complexity of Algorithm~\ref{alg:eqvaranm:emp}, which we call \NPVAR{}, is $O(ndrT)$, where $T$ is the complexity of computing each nonparametric regression function $\wh{f}_{\ell j}$. For example, if a kernel smoother is used, $T=O(d^{3})$ and thus the overall complexity is $O(nrd^{4})$. For comparison, an oracle algorithm that knows the true topological order of $\gr$ in advance would still need to compute $d$ regression functions, and hence would have complexity $O(dT)$. Thus, the extra complexity of learning the topological order is only $O(nr)=O(nd)$, which is linear in the dimension and the number of samples. Furthermore, under additional assumptions on the sparsity and/or structure of the DAG, the time complexity can be reduced further, however, our analysis makes no such assumptions.

\begin{algorithm}[t]
\caption{Population algorithm for learning nonparametric DAGs}
\label{alg:eqvaranm:pop}
\begin{enumerate}
\item Set $S_{0}=\emptyset$ and for $j=0,1,2,\ldots$, let
\begin{align*}
k_{j} 
&=\argmin_{\ell\notin S_{j}}\E\var(X_{\ell}\given S_{j}), \qquad
S_{j+1} 
= S_{j} \cup \{k_{j}\}.
\end{align*}
\item Return the DAG $\gr$ that corresponds to the topological sort $(k_{1},\ldots,k_{d})$.
\end{enumerate}
\end{algorithm}

\begin{algorithm}[t]
\caption{\NPVAR{} algorithm}
\label{alg:eqvaranm:emp}
\textbf{Input:} $X^{(1)},\ldots,X^{(n)}$, $\eta>0$.
\begin{enumerate}
\item Set $\wh{\layer}_{0}=\emptyset$, $\wh{\sigma}_{\ell 0}^{2}=\wh{\var}(X_{\ell})$, $k_{0}=\argmin_{\ell}\wh{\sigma}_{\ell 0}^{2}$, $\wh{\sigma}_{0}^{2}=\sigma_{k_{0}0}^{2}$.
\item Set $\wh{\layer}_{1}:=\{\ell : |\wh{\sigma}_{\ell 0}^{2} - \wh{\sigma}_{0}^{2}| < \eta\}$.
\item For $j=2,3,\ldots$: 
\begin{enumerate}
\item Randomly split the $n$ samples in half and let $\wh{\anc}_{j}:=\cup_{m=1}^{j}\wh{\layer}_{m}$.
\item For each $\ell\notin\wh{\anc}_{j}$, use the first half of the sample to estimate $f_{\ell j}(X_{\wh{\anc}_{j}})=\E[X_{\ell}\given \wh{\anc}_{j}]$ via a nonparametric estimator $\wh{f}_{\ell j}$.
\item For each $\ell\notin\wh{\anc}_{j}$, use the second half of the sample to estimate the residual variances via the plug-in estimator
\begin{align}
\label{eq:plugin}
\wh{\sigma}_{\ell j}^{2}
&= \frac1{n/2}\sum_{i=1}^{n/2}(X_{\ell}^{(i)})^{2} - \frac1{n/2}\sum_{i=1}^{n/2}\wh{f}_{\ell j}(X_{\wh{\anc}_{j}}^{(i)})^{2}.
\end{align}
\item Set $k_{j} =\argmin_{\ell\notin\wh{\anc}_{j}}\wh{\sigma}_{\ell j}^{2}$ and $\wh{\layer}_{j+1} = \{\ell : |\wh{\sigma}_{\ell j}^{2} - \wh{\sigma}_{k_{j}j}^{2}| < \eta, \,\ell\notin\wh{\anc}_{j}\}$.
\end{enumerate}
\item Return $\wh{\layer}=(\wh{\layer}_{1},\ldots,\wh{\layer}_{\wh{r}})$.
\end{enumerate}
\end{algorithm}

\subsection{Comparison to existing algorithms}
\label{sec:ident:comparison}

Compared to existing algorithms based on order search and equal variances, \NPVAR{} applies to more general models without parametric assumptions, independent noise, or additivity.
It is also instructive to make comparisons with greedy score-based algorithms such as causal additive models (CAM, \citep{buhlmann2014}) and greedy DAG search (GDS, \citep{peters2013}). We focus here on CAM since it is more recent and applies in nonparametric settings, however, similar claims apply to GDS as well.

CAM is based around greedily minimizing the log-likelihood score for additive models with Gaussian noise. In particular, it is not guaranteed to find a global minimizer, which is as expected since it is based on a nonconvex program. This is despite the global minimizer---if it can be found---having good statistical properties.
The next example shows that, in fact, there are identifiable models for which CAM will find the wrong graph with high probability.

\begin{ex}%
\label{ex:cam:fail}
Consider the following three-node additive noise model with $\err_{j}\sim\normalN(0,1)$:
\begin{align}
\label{eq:cam:fail}
\begin{aligned}
X_{1} 
&= \err_{1}, \\
X_{2}
&= g(X_{1}) + \err_{2}, \\
X_{3}
&= g(X_{1}) + g(X_{2}) + \err_{3}.
\end{aligned}
\end{align}
In the supplement (Appendix~\ref{app:cam}), we show the following: \emph{There exist infinitely many nonlinear functions $g$ for which the CAM algorithm returns an incorrect order under the model \eqref{eq:cam:fail}.}
This is illustrated empirically in Figure~\ref{fig:intro:camfail} for the nonlinearities $g(u)=\text{sgn}(u)|u|^{1.4}$ and $g(u)=\sin u$. In each of these examples, the model satisfies the identifiability conditions for CAM as well as the conditions required in our work.
\end{ex}
We stress that this example does not contradict the statistical results in \citet{buhlmann2014}: It only shows that the \emph{algorithm} may not find a global minimizer and as a result, returns an incorrect variable ordering. Correcting this discrepancy between the algorithmic and statistical results is a key motivation behind our work. In the next section, we show that \NPVAR{} provably learns the true ordering---and hence the true DAG---with high probability.

\section{Sample complexity}
\label{sec:sample}

Our main result analyzes the sample complexity of \NPVAR{} (Algorithm~\ref{alg:eqvaranm:emp}). Recall the layer decomposition $\layer(\gr)$ from Section~\ref{sec:background} and define $d_{j}:=|\anc_{j}|$. Let $f_{\ell j}(X_{\anc_{j}})=\E[X_{\ell}\given \anc_{j}]$. 

\begin{cond}[Regularity]
\label{cond:reg}
For all $j$ and all $\ell\notin \anc_{j}$,
(a)~$X_{j}\in[0,1]$, 
(b)~$f_{\ell j}:[0,1]^{d_{j}}\to[0,1]$, 
(c)~$f_{\ell j}\in L^{\infty}([0,1]^{d_{j}})$, and 
(d)~$\var(X_{\ell}\given \anc_{j}) \le \zeta_{0}<\infty$.
\end{cond}
These are the standard regularity conditions from the literature on nonparametric statistics~\citep{gyorfi2006distribution,tsybakov2009introduction}, and can be weakened (e.g. if the $X_{j}$ and $f_{\ell j}$ are unbounded, see \citep{kohler2009optimal}). We impose these stronger assumptions in order to simplify the statements and focus on technical details pertinent to graphical modeling and structure learning. The next assumption is justified by Theorem~\ref{thm:gen:ident}, and as we have noted, can also be weakened.
\begin{cond}[Identifiability]
\label{cond:ident}
$\E\var(X_{j}\given \pa(j))\equiv\sigma^{2}$ does not depend on $j$.
\end{cond}
Our final condition imposes some basic finiteness and consistency requirements on the chosen nonparametric estimator $\wh{f}$, which we view as a function for estimating $\E[Y\given Z]$ from an arbitrary distribution over the pair $(Y,Z)$.
\begin{cond}[Estimator]
\label{cond:est}
The nonparametric estimator $\wh{f}$ satisfies (a) $\E[Y\given Z]\in L^{\infty}\implies \wh{f}\in L^{\infty}$ and (b) $\E_{\wh{f}}\norm{\wh{f}(Z) - \E[Y\given Z]}_{2}^{2}\to 0$.
\end{cond}
This is a mild condition that is satisfied by most popular estimators including kernel smoothers, nearest neighbours, and splines, and in particular, Condition~\ref{cond:est}(a) is only used to simplify the theorem statement and can easily be relaxed.

\begin{thm}
\label{thm:main:sample}
Assume Conditions~\ref{cond:reg}-\ref{cond:est}.
Let $\Delta_{j}>0$ be such that $\E\var(X_{\ell}\given \anc_{j})>\sigma^{2}+\Delta_{j}$ for all $\ell\notin \anc_{j}$ and define $\Delta:=\inf_{j}\Delta_{j}$. 
Let $\npestbd^{2}:=\sup_{\ell,j}\E_{\wh{f}_{\ell j}}\norm{f_{\ell j}(X_{\anc_{j}})-\wh{f}_{\ell j}(X_{\anc_{j}})}_{2}^{2}$.
Then for any $\npestbd\sqrt{d}<\eta<\Delta/2$,
\begin{align}
\label{eq:thm:main:sample}
\pr(\wh{\layer} = \layer(\gr))
&\ges 1 - \frac{\npestbd^{2}}{\eta^{2}}rd%
\end{align}
\end{thm}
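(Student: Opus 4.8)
The plan is to prove \eqref{eq:thm:main:sample} by induction on the layer index $j$, showing that if the first $j-1$ layers have been recovered exactly, then with high probability the estimated residual variances $\wh{\sigma}_{\ell j}^{2}$ are all within $\eta/2$ (say) of the true residual variances $\sigma_{\ell j}^{2}=\E\var(X_{\ell}\given\anc_{j})$, and that this suffices to recover $\layer_{j}$ exactly given the choice of threshold $\eta$. The two ingredients are: (i) a \emph{deterministic} argument that accurate residual-variance estimates plus the separation condition $\E\var(X_{\ell}\given\anc_{j})>\sigma^{2}+\Delta_{j}$ force the correct layer split; and (ii) a \emph{probabilistic} argument bounding $\pr(|\wh{\sigma}_{\ell j}^{2}-\sigma_{\ell j}^{2}|\ge\eta/2)$ via the plug-in estimator error. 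The crux of (i): conditional on $\wh{\anc}_{j}=\anc_{j}$, every $\ell\in\layer_{j}$ has $\pa(\ell)\subset\anc_{j-1}\subset\anc_{j}$, so by Condition~\ref{cond:ident} and the tower property $\sigma_{\ell j}^{2}=\E\var(X_{\ell}\given\anc_{j})=\E\var(X_{\ell}\given\pa(\ell))=\sigma^{2}$ for those $\ell$, while for $\ell\notin\anc_{j}$ with $\ell\notin\layer_{j}$ we have $\sigma_{\ell j}^{2}>\sigma^{2}+\Delta$; hence the true residual variances are cleanly separated into a cluster at $\sigma^{2}$ and a cluster above $\sigma^{2}+\Delta$, gap at least $\Delta$. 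If all estimates are within $\eta/2<\Delta/4$ of the truth, then $k_{j}$ (the argmin) lies in the $\sigma^{2}$-cluster, $\wh{\sigma}_{k_{j}j}^{2}\in(\sigma^{2}-\eta/2,\sigma^{2}+\eta/2)$, and the test $|\wh{\sigma}_{\ell j}^{2}-\wh{\sigma}_{k_{j}j}^{2}|<\eta$ admits exactly the $\sigma^{2}$-cluster (since those differ from $\wh{\sigma}_{k_{j}j}^{2}$ by $<\eta$) and rejects the rest (those differ by $>\Delta-\eta>\eta$). So $\wh{\layer}_{j}=\layer_{j}$, and hence $\wh{\anc}_{j+1}=\anc_{j+1}$, closing the induction; the base case $j=1$ is identical using the marginal variance estimates.

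For ingredient (ii), I would control the plug-in error. Write $\sigma_{\ell j}^{2}=\E X_{\ell}^{2}-\E f_{\ell j}(X_{\anc_{j}})^{2}$ (using $\var(X_\ell\given\anc_j)=\E[X_\ell^2\given\anc_j]-f_{\ell j}^2$ and the tower property), and compare to \eqref{eq:plugin}. The error splits into (a) the empirical-mean error $\frac1{n/2}\sum(X_\ell^{(i)})^2-\E X_\ell^2$, which is an average of bounded ($[0,1]$ by Condition~\ref{cond:reg}(a)) i.i.d.\ terms, controlled by Hoeffding/Chebyshev; and (b) $\frac1{n/2}\sum\wh f_{\ell j}(X_{\anc_j}^{(i)})^2-\E f_{\ell j}(X_{\anc_j})^2$. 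For (b), the sample-splitting in Step~3(a) is essential: conditional on the first half (hence on $\wh f_{\ell j}$), the second-half points are i.i.d.\ draws independent of $\wh f_{\ell j}$, so $\E[\frac1{n/2}\sum\wh f_{\ell j}(X^{(i)}_{\anc_j})^2\mid\text{first half}]=\E_{X}[\wh f_{\ell j}(X_{\anc_j})^2]$, and we decompose $|\E_X\wh f_{\ell j}^2-\E f_{\ell j}^2|\le\E_X|\wh f_{\ell j}-f_{\ell j}|\cdot|\wh f_{\ell j}+f_{\ell j}|\le 2\,\E_X|\wh f_{\ell j}(X_{\anc_j})-f_{\ell j}(X_{\anc_j})|\le 2\,(\E_X\|\wh f_{\ell j}(X_{\anc_j})-f_{\ell j}(X_{\anc_j})\|_2^2)^{1/2}$ by Cauchy--Schwarz and Conditions~\ref{cond:reg}(b), \ref{cond:est}(a) (which keep everything in $[0,1]$, up to constants). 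Taking expectation over the estimator and the first half, this bias term is $O(\npestbd)$; the fluctuation of (b) around its conditional mean is again an average of bounded terms. Putting the pieces together, $\E|\wh{\sigma}_{\ell j}^{2}-\sigma_{\ell j}^{2}|^2\lesssim\npestbd^2 + 1/n$, and the $1/n$ term is dominated (or one can fold it into $\npestbd^2$ since under Condition~\ref{cond:est}(b) $\npestbd\to0$ and the relevant regime has $\npestbd^2\gg1/n$; alternatively absorb it into constants). Then by Markov's inequality $\pr(|\wh{\sigma}_{\ell j}^{2}-\sigma_{\ell j}^{2}|\ge\eta/2)\lesssim\npestbd^2/\eta^2$.

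Finally I would assemble the union bound. Let $G_j$ be the event that all estimates at stage $j$ (over the $\le d$ indices $\ell\notin\wh\anc_j$) are within $\eta/2$ of the truth. On $\bigcap_{j=1}^{r}G_j$, the deterministic induction above yields $\wh{\layer}=\layer(\gr)$. By the per-index bound and a union over $\le d$ indices per stage and $r$ stages, $\pr((\bigcap_j G_j)^c)\le\sum_{j=1}^r\sum_{\ell}\pr(|\wh\sigma_{\ell j}^2-\sigma_{\ell j}^2|\ge\eta/2)\lesssim(\npestbd^2/\eta^2)\,rd$, giving \eqref{eq:thm:main:sample}. One subtlety to handle carefully in the write-up: because $\wh\anc_j$ is only equal to $\anc_j$ on the good event from earlier stages, the conditioning must be organized so that at stage $j$ we work conditionally on $\{\wh\anc_j=\anc_j\}$ (which is implied by $G_1\cap\cdots\cap G_{j-1}$), and the fresh sample split at stage $j$ gives independence from $\wh f_{\ell j}$; the constraint $\npestbd\sqrt d<\eta$ in the hypothesis guarantees a nonempty range for $\eta$ and (after possibly adjusting constants) that the per-stage error is genuinely below $\eta/2$ with the stated probability.

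\medskip

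\noindent\textbf{Main obstacle.} The delicate point is the bookkeeping of the conditioning and the role of sample splitting: one must ensure that at each stage the nonparametric-estimation error bound $\npestbd$ is applied to $\wh f_{\ell j}$ as a function estimated on an \emph{independent} sample, while simultaneously conditioning on the (high-probability) event that all previous layers were correctly identified so that $\wh\anc_j=\anc_j$ and the regression target $f_{\ell j}(X_{\anc_j})$ is the intended one. Getting the nested conditioning right --- and verifying that conditioning on past good events does not corrupt the i.i.d.\ structure needed for the current stage's concentration --- is the part that requires the most care; everything else is either the clean deterministic clustering argument or routine bounded-difference concentration.
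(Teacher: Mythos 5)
Your proposal is correct and follows essentially the same route as the paper's proof: the same plug-in error decomposition (bias of $\wh f_{\ell j}$ versus $f_{\ell j}$ under the true distribution, plus empirical fluctuation controlled by Chebyshev/Markov), the same deterministic clustering argument separating the $\sigma^{2}$-cluster from nodes with residual variance above $\sigma^{2}+\Delta$ under the constraint $\eta<\Delta/2$, and the same union bound over at most $d$ nodes per layer and $r$ layers conditioned layer-by-layer on past success. The conditioning subtlety you flag as the main obstacle is likewise present (and handled at the same level of detail) in the paper's argument.
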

Once the layer decomposition $\layer(\gr)$ is known, the graph $\gr$ can be learned via standard nonlinear variable selection methods (see Appendix~\ref{app:order} in the supplement).

A feature of this result is that it is agnostic to the choice of estimator $\wh{f}$, as long as it satisfies Condition~\ref{cond:est}. The dependence on $\wh{f}$ is quantified through $\npestbd^{2}$, which depends on the sample size $n$ and represents the rate of convergence of the chosen nonparametric estimator. Instead of choosing a specific estimator, Theorem~\ref{thm:main:sample} is stated so that it can be applied to general estimators. As an example, suppose each $f_{\ell j}$ is Lipschitz continuous and $\wh{f}$ is a standard kernel smoother. Then
\begin{align*}
\E_{\wh{f}_{\ell j}}\norm{f_{\ell j}(X_{\layer_{j}})-\wh{f}_{\ell j}(X_{\layer_{j}})}_{2}^{2}
\le \npestbd^{2}
\les n^{-\tfrac{2}{2+\width}}.
\end{align*}
Thus we have the following special case:
\begin{cor}
\label{cor:main:sample}
Assume each $f_{\ell j}$ is Lipschitz continuous. Then $\wh{\layer}$ can be computed in $O(nd^{5})$ time and $\pr(\wh{\layer} = \layer(\gr))\ge1-\eps$ as long as $n=\Omega((rd/(\eta^{2}\eps))^{1+d/2})$.
\end{cor}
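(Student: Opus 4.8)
\textbf{Proof proposal for Corollary~\ref{cor:main:sample}.}
The plan is to derive the corollary directly from Theorem~\ref{thm:main:sample} by plugging in the kernel-smoother convergence rate and then checking the complexity bookkeeping. First I would record the standard nonparametric rate: under Condition~\ref{cond:reg} and Lipschitz continuity of each $f_{\ell j}:[0,1]^{d_j}\to[0,1]$, a kernel smoother with appropriately tuned bandwidth satisfies $\E_{\wh f_{\ell j}}\norm{f_{\ell j}(X_{\anc_j})-\wh f_{\ell j}(X_{\anc_j})}_2^2 \les n^{-2/(2+d_j)} \le n^{-2/(2+d)}$, where the last inequality uses $d_j\le d$ and monotonicity of $t\mapsto n^{-2/(2+t)}$. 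Taking the supremum over $\ell,j$ gives $\npestbd^2 \les n^{-2/(2+d)}$; I would cite \citep{gyorfi2006distribution,tsybakov2009introduction} for this rate rather than reproving it, and note that Condition~\ref{cond:est} is satisfied by kernel smoothers.

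Next I would verify the hypothesis $\npestbd\sqrt d < \eta < \Delta/2$ of Theorem~\ref{thm:main:sample} holds for $n$ large. Since $\Delta$ and $\eta$ are fixed positive constants (with $\eta<\Delta/2$) and $\npestbd \to 0$ as $n\to\infty$, the condition $\npestbd\sqrt d < \eta$ holds once $n$ exceeds a threshold; I would fold this into the final sample-size requirement. Then from \eqref{eq:thm:main:sample}, $\pr(\wh\layer=\layer(\gr)) \ge 1 - (\npestbd^2/\eta^2)rd \gtrsim 1 - (rd/\eta^2)\, n^{-2/(2+d)}$. To make the right-hand side at least $1-\eps$, it suffices that $(rd/\eta^2)\,n^{-2/(2+d)} \le \eps$, i.e. $n^{2/(2+d)} \ge rd/(\eta^2\eps)$, i.e. $n \ge (rd/(\eta^2\eps))^{(2+d)/2} = (rd/(\eta^2\eps))^{1+d/2}$. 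This is exactly the claimed $n=\Omega((rd/(\eta^2\eps))^{1+d/2})$, and one checks this same threshold also dominates the one needed for $\npestbd\sqrt d<\eta$ (up to constants), so no separate condition is needed.

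For the running time, I would recall from Section~\ref{sec:ident:alg} that \NPVAR{} has complexity $O(ndrT)$ where $T$ is the cost of one nonparametric regression fit; for a kernel smoother $T=O(d^3)$, giving $O(nrd^4)$. Using the crude bound $r\le d$ yields $O(nd^5)$, which is the stated bound. (If one wished to keep the sharper $O(nrd^4)$ one could, but the corollary states $O(nd^5)$.)

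I do not expect a genuine obstacle here: the corollary is essentially an instantiation of Theorem~\ref{thm:main:sample}. The only mild subtlety is making sure the exponent bookkeeping in passing from $n^{-2/(2+d)}\le \eps/(rd/\eta^2\text{-type constant})$ to a clean power of $(rd/(\eta^2\eps))$ is done correctly, and that the dimension $d_j$ appearing in the kernel rate is uniformly replaced by the worst case $d$ — both are routine. A secondary point worth a sentence is that the bandwidth achieving the stated rate depends on $d_j$ (hence on the layer), but since the algorithm fits each regression separately this causes no difficulty.
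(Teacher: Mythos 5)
Your proposal is correct and follows essentially the same route the paper takes: instantiate $\npestbd^{2}\les n^{-2/(2+d)}$ for a kernel smoother under Lipschitz continuity, invert the bound $1-(\npestbd^{2}/\eta^{2})rd\ge 1-\eps$ to get $n\ge(rd/(\eta^{2}\eps))^{1+d/2}$, and combine the $O(ndrT)$ complexity from Section~\ref{sec:ident:alg} with $T=O(d^{3})$ and $r\le d$ to obtain $O(nd^{5})$. Your explicit check that the hypothesis $\npestbd\sqrt{d}<\eta$ is absorbed by the same sample-size threshold is a small point the paper leaves implicit, but nothing in your argument departs from the paper's derivation.
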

This is the best possible rate attainable by any algorithm without imposing stronger regularity conditions (see e.g. \sec5 in \citep{gyorfi2006distribution}). Furthermore, $\delta^{2}$ can be replaced with the error of an arbitrary estimator of the residual variance itself (i.e. something besides the plug-in estimator \eqref{eq:plugin}); see Proposition~\ref{prop:sample:bound} in Appendix~\ref{app:proof:main} for details.

To illustrate these results, consider the problem of finding the direction of a Markov chain $X_{1}\to X_{2}\to\cdots\to X_{d}$ whose transition functions $\E[X_{j}\given X_{j-1}]$ are each Lipschitz continuous. Then $r=d$, so Corollary~\ref{cor:main:sample} implies that $n=\Omega((d^{2}/(\eta\sqrt{\eps}))^{1+d/2})$ samples are sufficient to learn the order---and hence the graph as well as each transition function---with high probability. Since $r=d$ for any Markov chain, this particular example maximizes the dependence on $d$; at the opposite extreme a bipartite graph with $r=2$ would require only $n=\Omega((\sqrt{d}/(\eta\sqrt{\eps}))^{1+d/2})$. In these lower bounds, it is not necessary to know the type of graph (e.g. Markov chain, bipartite) or the depth $r$.

\paragraph{Choice of $\eta$}
The lower bound $\eta>\npestbd\sqrt{d}$ is not strictly necessary, and is only used to simplify the lower bound in \eqref{eq:thm:main:sample}. In general, taking $\eta$ sufficiently small works well in practice. The main tradeoff in choosing $\eta>0$ is computational: A smaller $\eta$ may lead to ``splitting'' one of the layers $\layer_{j}$. In this case, \NPVAR{} still recovers the structure correctly, but the splitting results in redundant estimation steps in Step 3 (i.e. instead of estimating $\layer_{j}$ in one iteration, it takes multiple iterations to estimate correctly). The upper bound, however, is important: If $\eta$ is too large, then we may include spurious nodes in the layer $\layer_{j}$, which would cause problems in subsequent iterations.

\paragraph{Nonparametric rates}
Theorem~\ref{thm:main:sample} and Corollary~\ref{cor:main:sample} make no assumptions on the sparsity of $\gr$ or smoothness of the mean functions $\E[X_{\ell}\given \anc_{j}]$. For this reason, the best possible rate for a na\"ive plug-in estimator of $\E\var(X_{\ell}\given \anc_{j})$ is bounded by the minimax rate for estimating $\E[X_{\ell}\given \anc_{j}]$. For practical reasons, we have chosen to focus on an agnostic analysis that does not rely on any particular estimator. Under additional sparsity and smoothness assumptions, these rates can be improved, which we briefly discuss here. 

For example, by using adaptive estimators such as RODEO \citep{lafferty2008rodeo} or GRID \citep{giordano2020grid}, the sample complexity will depend only on the sparsity of $f_{\ell j}(X_{A_j})$, i.e. $d^{*}=\max_j\max_{\ell\notin A_j}|\{k \in A_j : \partial_k f_{\ell j} \ne 0 \}|$, where $\partial_k$ is the $k$th partial derivative.
Another approach that does not require adaptive estimation is to assume $|L_{j}|\le w$ and define $r^{*} := \sup\{ |i-j| : e=(e_{1},e_{2})\in E, e_{1}\in L_{i}, e_{2}\in L_{j}\}$.
Then $\npestbd^{2}\asymp n^{-2/(2+wr^{*})}$, and the resulting sample complexity depends on $wr^{*}$ instead of $d$. 
For a Markov chain with $w=r^{*}=1$ this leads to a substantial improvement.

Instead of sparsity, we could impose stronger smoothness assumptions: Let $\beta_{*}$ denote the smallest H\"older exponent of any $f_{\ell j}$. Then if $\beta_{*}\ge d/2$, then one can use a one-step correction to the plug-in estimator \eqref{eq:plugin} to obtain a root-$n$ consistent estimator of $\E\var(X_{\ell}\given \anc_{j})$ \citep{robins2008higher,kandasamy2015nonparametric}. Another approach is to use undersmoothing \citep{doksum1995nonparametric}. In this case, the exponential sample complexity improves to polynomial sample complexity. For example, in Corollary~\ref{cor:main:sample}, if we replace Lipschitz with the stronger condition that $\beta_{*}\ge d/2$, then the sample complexity improves to $n=\Omega(rd/(\eta^{2}\eps))$.

\section{Experiments}
\label{sec:exp}

Finally, we perform a simulation study to compare the performance of \NPVAR{} against state-of-the-art methods for learning nonparametric DAGs. The algorithms are: 
\RESIT~\citep{peters2014}, 
\CAM~\citep{buhlmann2014}, 
\EqVar~\citep{chen2018causal},
\NOTEARS~\citep{zheng2019learning},
\GSGES~\citep{huang2018generalized},
\PC~\citep{spirtes1991}, and
\GES~\citep{chickering2003}.
In our implementation of \NPVAR{}, we use generalized additive models (GAMs) for both estimating $\wh{f}_{\ell j}$ and variable selection. 
One notable detail is our implementation of \EqVar{}, which we adapted to the nonlinear setting by using GAMs instead of subset selection for variable selection (the order estimation step remains the same).
Full details of the implementations used as well as additional experiments can be found in the supplement. 
Code implementing the \NPVAR{} algorithm is publicly available at \url{https://github.com/MingGao97/NPVAR}.

\begin{figure}[t]
\centering
\begin{subfigure}[t]{0.9\textwidth}
\includegraphics[width=1.\textwidth]{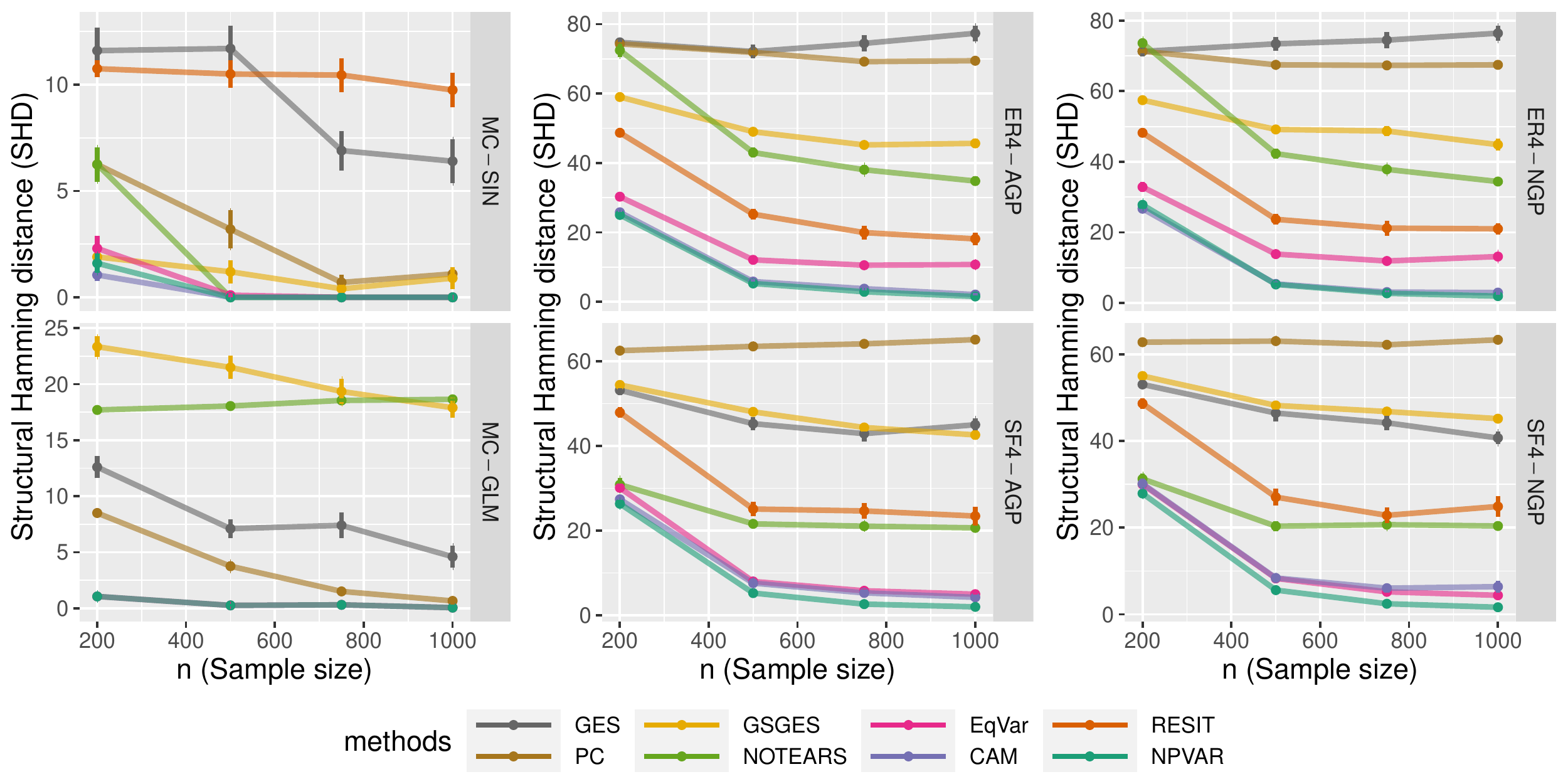}
\caption{SHD vs. $n$ ($d=20$).}
\label{fig:exp:shd_vs_n}
\end{subfigure}
\\
\begin{subfigure}[t]{0.9\textwidth}
\includegraphics[width=1.\textwidth]{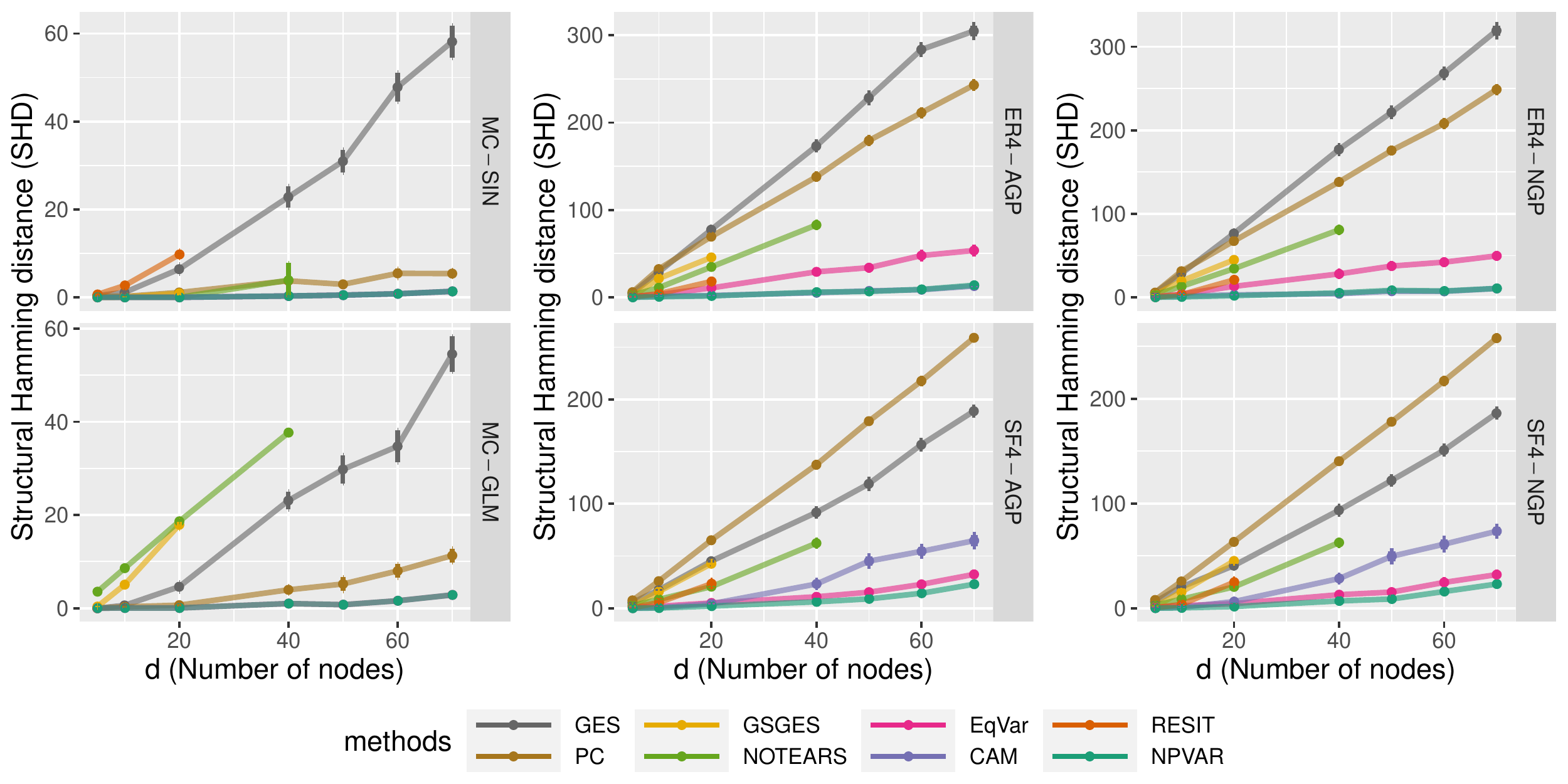}
\caption{SHD vs. $d$ ($n=1000$).}
\label{fig:exp:shd_vs_d}
\end{subfigure}
\caption{Structural Hamming distance (SHD) as a function of sample size ($n$) and number of nodes ($d$). Error bars denote $\pm$1 standard error. Some algorithms were only run for sufficiently small graphs due to high computational cost.}
\label{fig:exp:shd}
\end{figure}

We conducted a series of simulation on different graphs and models, comparing the performance in both order recovery and structure learning. Due to space limitations, only the results for structure learning in the three most difficult settings are highlighted in Figure~\ref{fig:exp:shd}. These experiments correspond to non-sparse graphs with non-additive dependence given by either a Gaussian process (GP) or a generalized linear model (GLM):
\begin{itemize}
\item \emph{Graph types.} We sampled three families of DAGs: Markov chains (MC), Erd\"os-R\'enyi graphs (ER), and scale-free graphs (SF). For MC graphs, there are exactly $d$ edges, whereas for ER and SF graphs, we sample graphs with $kd$ edges on average. This is denoted by ER4/SF4 for $k=4$ in Figure~\ref{fig:exp:shd}. Experiments on sparser DAGs can be found in the supplement.
\item \emph{Probability models.} For the Markov chain models, we used two types of transition functions: An additive sine model with $\pr(X_{j}\given X_{j-1})=\normalN(\sin(X_{j-1}),\sigma^{2})$ and a discrete model (GLM) with $X_{j}\in\{0,1\}$ and $\pr(X_{j}\given X_{j-1})\in\{p,1-p\}$. For the ER and SF graphs, we sampled $\E[X_{j}\given\pa(j)]$ from both additive GPs (AGP) and non-additive GPs (NGP).
\end{itemize}
Full details as well as additional experiments on order recovery, additive models, sparse graphs, and misspecified models can be found in the supplement (Appendix~\ref{app:exp}). 

\paragraph{Structure learning}
To evaluate overall performance, we computed the structural Hamming distance (SHD) between the learned DAG and the true DAG. SHD is a standard metric used for comparison of graphical models.
According to this metric, the clear leaders are \NPVAR{}, \EqVar{}, and \CAM{}. Consistent with existing results, existing methods tend to suffer as the edge density and dimension of the graphs increase, however, \NPVAR{} is more robust in these settings. Surprisingly, the \CAM{} algorithm remains quite competitive for non-additive models, although both \EqVar{} and \NPVAR{} clearly outperform \CAM{}. On the GLM model, which illustrates a non-additive model with non-additive noise, \EqVar{} and \NPVAR{} performed the best, although \PC{} showed good performance with $n=1000$ samples. Both \CAM{} and \RESIT{} terminated with numerical issues on the GLM model.

These experiments serve to corroborate our theoretical results and highlight the effectiveness of the \NPVAR{} algorithm, but of course there are tradeoffs. For example, algorithms such as CAM which exploit sparse and additive structure perform very well in settings where sparsity and additivity can be exploited, and indeed outperform \NPVAR{} in some cases. Hopefully, these experiments can help to shed some light on when various algorithms are more or less effective.

\paragraph{Misspecification and sensitivity analysis}
We also considered two cases of misspecification: In Appendix~\ref{app:unequal}, we consider an example where Condition~\ref{cond:ident} fails, but \NPVAR{} still successfully recovers the true ordering. This experiment corroborates our claims that this condition can be relaxed to handle unequal residual variances. We also evaluated the performance of \NPVAR{} on linear models as in \eqref{eq:eqvar:lin}, and in all cases it was able to recover the correct ordering.

\section{Discussion}
\label{sec:disc}

In this paper, we analyzed the sample complexity of a polynomial-time algorithm for estimating nonparametric causal models represented by a DAG. Notably, our analysis avoids many of the common assumptions made in the literature. Instead, we assume that the residual variances are equal, similar to assuming homoskedastic noise in a standard nonparametric regression model. Our experiments confirm that the algorithm, called \NPVAR{}, is effective at learning identifiable causal models and outperforms many existing methods, including several recent state-of-the-art methods. Nonetheless, existing algorithms such as CAM are quite competitive and apply in settings where NPVAR does not.

We conclude by discussing some limitations and directions for future work.
Although we have relaxed many of the common assumptions made in the literature, these assumptions have been replaced by an assumption on the residual variances that may not hold in practice. An interesting question is whether or not there exist provably polynomial-time algorithms for nonparametric models in under less restrictive assumptions.
Furthermore, although the proposed algorithm is polynomial-time, the worst-case $O(d^{5})$ dependence on the dimension is of course limiting. This can likely be reduced by developing more efficient estimators of the residual variance that do not first estimate the mean function. This idea is common in the statistics literature, however, we are not aware of such estimators specifically for the residual variance (or other nonlinear functionals of $\pr(X)$). 
Furthermore, our general approach can be fruitfully applied to study various parametric models that go beyond linear models, for which both computation and sample efficiency would be expected to improve. These are interesting directions for future work.

\paragraph{Acknowledgements}
We thank the anonymous reviewers for valuable feedback, as well as Y. Samuel Wang and Edward H. Kennedy for helpful discussions. B.A. acknowledges the support of the NSF via IIS-1956330 and the Robert H. Topel Faculty Research Fund. Y.D.'s work has been partially supported by the NSF (CCF-1439156, CCF-1823032, CNS-1764039).

\appendix

\section{Reduction to order search}
\label{app:order}

The fact that DAG learning can be reduced to learning a topological sort is well-known. For example, this fact is the basis of exact algorithms for score-based learning based on dynamic programming \cite{silander2012,ott2004,perrier2008,singh2005} as well as recent algorithms for linear models \citep{chen2018causal,ghoshal2017ident,ghoshal2017sem}. See also \citep{shojaie2010}. This fact has also been exploited in the overdispersion scoring model developed by \citet{park2017} as well as for nonlinear additive models \citep{buhlmann2014}. In fact, more can be said: Any ordering defines a minimal I-map of $\pr(X)$ via a simple iterative algorithm (see \sec3.4.1, Algorithm~3.2 in \citep{koller2009}), and this minimal I-map is unique as long as $\pr(X)$ satisfies the intersection property. This is guaranteed, for example, if $\pr(X)$ has a positive density, but holds under weaker conditions (see \citep{peters2015intersection} for necessary and sufficient conditions assuming $\pr(X)$ has a density and \citep{dawid1980}, Theorem~7.1, for the general case). This same algorithm can then be used to reconstruct the true DAG $\gr$ from the true ordering $\prec$. As noted in Section~\ref{sec:background}, a further reduction can be obtained by considering the layer decomposition $\layer(\gr)$, from which all topological orders $\prec$ of $\gr$ can be deduced.

Once the ordering is known, existing nonlinear variable selection methods \citep{lafferty2008rodeo,rosasco2013nonparametric,giordano2020grid,miller2010local,bertin2008selection,comminges2011tight} suffice to learn the parent sets $\pa(j)$ and hence the graph $\gr$. More specifically, given an order $\prec$, to identify $\pa(j)$, let $f(S_{j}):=\E[X_{j}\given S_{j}]$, where $S_{j}:=\{X_{k}:X_{k}\prec X_{j}\}$. The parent set of $X_{j}$ is given by the active variables in this conditional expectation, i.e. $\pa(j)=\{k:\partial_{k}f \ne 0\}$, where $\partial_{k}$ is the partial derivative of $f$ with respect to the $k$th argument.

In our experiments, we use exactly this procedure to learn $\gr$ from the order $\prec$, based on the data. Specifically, we use generalized additive models, similar to the pruning step in \citep{buhlmann2014}. See Appendix~\ref{app:exp} for more details.

\section{Proof of Theorem~\ref{thm:gen:ident}}
\label{app:proof:ident}

The key lemma is the following, which is easy to prove for additive noise models via \eqref{eq:var:decomp:lin}, and which we show holds more generally in non-additive models:
\begin{lemma}
\label{lem:gen:ident}
Let $A\subset V$ be an ancestral set in $\gr$. If $\E\var(X_{j}\given \pa(j))\equiv\sigma^{2}$ does not depend on $j$, then for any $j\notin A$,
\begin{align*}
\E\var(X_{j}\given X_{A}) &= \sigma^{2} \quad\text{if $\pa(j)\subset A$,}\\
\E\var(X_{j}\given X_{A}) &> \sigma^{2}  \quad\text{otherwise.}
\end{align*}
\end{lemma}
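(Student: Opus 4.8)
The plan is to obtain both statements from the law of total variance, after recording one structural fact about ancestral sets. Since $A$ is ancestral and $j\notin A$, the set $A$ contains no descendant of $j$: if $j\to v_{1}\to\cdots\to v_{m}=k$ with $k\in A$, then $v_{m-1}\in\pa(k)\subset A$, and iterating gives $j=v_{0}\in A$, a contradiction. Hence $B:=A\cup\pa(j)$ contains $\pa(j)$ and consists only of non-descendants of $j$, so the local Markov property yields $\pr(X_{j}\given X_{C})=\pr(X_{j}\given X_{\pa(j)})$ for every $C$ with $\pa(j)\subseteq C\subseteq B$. In particular, as functions of $X_{C}$, both $\var(X_{j}\given X_{C})$ and $\E[X_{j}\given X_{C}]$ collapse to their counterparts conditioned only on $X_{\pa(j)}$.

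If $\pa(j)\subset A$, take $C=A$: then $\var(X_{j}\given X_{A})=\var(X_{j}\given X_{\pa(j)})$ pointwise, and averaging over $X_{A}$ gives $\E\var(X_{j}\given X_{A})=\E\var(X_{j}\given X_{\pa(j)})=\sigma^{2}$. If $\pa(j)\not\subset A$, set $\mu_{j}:=\E[X_{j}\given X_{\pa(j)}]$ and apply the law of total variance with the finer conditioning set $X_{B}\supseteq X_{A}$:
\[
\var(X_{j}\given X_{A})=\E\big[\var(X_{j}\given X_{B})\given X_{A}\big]+\var\big(\E[X_{j}\given X_{B}]\given X_{A}\big).
\]
By the structural fact, $\var(X_{j}\given X_{B})=\var(X_{j}\given X_{\pa(j)})$ and $\E[X_{j}\given X_{B}]=\mu_{j}$, so taking expectations over $X_{A}$ and using the tower property on the first term,
\[
\E\var(X_{j}\given X_{A})=\sigma^{2}+\E\var(\mu_{j}\given X_{A})\ \ge\ \sigma^{2},
\]
which is the nonparametric analogue of the additive-noise identity~\eqref{eq:var:decomp:lin}. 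Equality holds iff $\mu_{j}$ is ($\pr$-almost surely) a function of $X_{A}$ alone.

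The crux, and the main obstacle, is upgrading this to the strict inequality: the decomposition gives ``$\ge\sigma^{2}$'' for free, but strictness requires that $\E[X_{j}\given X_{\pa(j)}]$ not be $X_{A}$-measurable whenever a parent of $j$ is missing from $A$. This does not follow from the Markov condition alone (a parent could influence only the conditional variance of $X_{j}$), so the proof must invoke a non-degeneracy hypothesis: causal minimality, in the form that each conditional mean depends non-trivially on each of its parents --- the same hypothesis flagged by the caveats to the examples following Theorem~\ref{thm:gen:ident} (e.g. ``as long as $\E[Y\given X]$ is nonconstant''). Pinning down the precise form of this assumption and checking that it forbids the collapse is the one place requiring care; everything else is routine bookkeeping.
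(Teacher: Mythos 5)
Your proof takes essentially the same route as the paper's: enlarge the conditioning set to $A\cup\pa(j)$, apply the law of total variance, use the local Markov property to collapse $\var(X_{j}\given X_{A\cup\pa(j)})$ and $\E[X_{j}\given X_{A\cup\pa(j)}]$ to their $\pa(j)$-conditioned versions, and split into the cases $\pa(j)\subset A$ and $\pa(j)\not\subset A$. Your equality case and the identity $\E\var(X_{j}\given X_{A})=\sigma^{2}+\E\var\big(\E[X_{j}\given X_{\pa(j)}]\given X_{A}\big)\ge\sigma^{2}$ match the paper's argument exactly. The one step you decline to complete---strictness---is precisely the step the paper asserts without justification: its displayed chain writes ``$>$'' at the point where the term $\E\var\big(\E[X_{j}\given X_{\pa(j)}]\given X_{A}\big)$ is dropped, which silently assumes this term is positive, i.e.\ that the conditional mean $\E[X_{j}\given X_{\pa(j)}]$ is not almost surely $X_{A}$-measurable whenever a parent of $j$ lies outside $A$. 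Your diagnosis that this requires a minimality-type non-degeneracy hypothesis on the conditional means (consistent with the ``as long as $\E[Y\given X]$ is nonconstant'' caveat in the causal-pairs example, and genuinely violated if a parent influences only the conditional variance) is correct; the omission is a gap in the paper's proof rather than in yours.
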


\begin{proof}[Proof of Lemma~\ref{lem:gen:ident}]
Let $B_{j}=\pa(X_{j})$, $\wb{B_{j}}:=B_{j}-A$, 
and $\wb{\gr}$ be the subgraph of $\gr$ formed by removing the nodes in the ancestral set $A$. Then 
\begin{align*}
\var(X_{j}\given X_{A})
&= \E[\var(X_{j}\given X_{A}, X_{B_{j}})\given X_{A}]
+ \var[\E[X_{j}\given X_{A}, X_{B_{j}}]\given X_{A}] \\
&= \E[\var(X_{j}\given X_{A}, X_{\wb{B_{j}}})\given X_{A}]
+ \var[\E[X_{j}\given X_{A}, X_{\wb{B_{j}}}]\given X_{A}].
\end{align*}
There are two cases: (i) $\wb{B_{j}}=\emptyset$, and (ii) $\wb{B_{j}}\ne\emptyset$. In case (i), it follows that $B_{j}\subset A$ and hence $\pa(j)\subset A$. Since $X_{j}$ is conditionally independent of its nondescendants (e.g. ancestors) given its parents, it follows that $\var(X_{j}\given X_{A})=\var(X_{j}\given X_{B_{j}})$ and hence
\begin{align*}
\E\var(X_{j}\given X_{A})
=\E\var(X_{j}\given X_{B_{j}})
=\sigma^{2}.
\end{align*}

In case (ii), it follows that 
\begin{align*}
\var(X_{j}\given X_{A}) 
&= \E[\var(X_{j}\given X_{A}, X_{\wb{B_{j}}})\given X_{A}]
+ \var[\E[X_{j}\given X_{A}, X_{\wb{B_{j}}}]\given X_{A}] \\
&= \E[\var(X_{j}\given X_{B_{j}})\given X_{A}]
+ \var[\E[X_{j}\given X_{B_{j}}]\given X_{A}],
\end{align*}
where again we used that $X_{j}$ is conditionally independent of its nondescendants (e.g. ancestors) given its parents to replace conditioning on $(X_{A}, X_{\wb{B_{j}}})=X_{A\cup B_{j}}$ with conditioning on $B_{j}$. 

Now suppose $X_{k}$ is in case (i) and $X_{j}$ is in case (ii). We wish to show that $\E\var(X_{j}\given X_{A})  > \E\var(X_{k}\given X_{A}) = \sigma^{2}$. Then
\begin{align*}
\E\var(X_{j}\given X_{A}) 
&= \E\big[\E[\var(X_{j}\given X_{B_{j}})\given X_{A}]\big]
+ \E\var[\E[X_{j}\given X_{B_{j}}]\given X_{A}] \\
&> \E\big[\E[\var(X_{j}\given \pa(j))\given X_{A}]\big] \\
&= \E\var(X_{j}\given \pa(j))\\
&= \E\var(X_{k}\given \pa(k)) \\
&= \sigma^{2},
\end{align*}
where we have invoked the assumption that $\E\var(X_{j}\given \pa(j))$ does not depend on $j$ to conclude $\E[\var(X_{j}\given \pa(j))\given X_{A}] = \E[\var(X_{k}\given \pa(k))\given X_{A}]$. 
This completes the proof.
\end{proof}

Theorem~\ref{thm:gen:ident} is an immediate corollary of Lemma~\ref{lem:gen:ident}. For completeness, we include a proof below.
\begin{proof}[Proof of Theorem~\ref{thm:gen:ident}]
Let $S(\gr)$ denote the set of sources in $\gr$ and note that Lemma~\ref{lem:gen:ident} implies that if $X_{s}\in S(\gr)$, then $\var(X_{s})<\var(X_{j})$ for any $s\ne j$. Thus, $S(\gr)$ is identifiable. Let $\gr_{1}$ denote the subgraph of $\gr$ formed by removing the nodes in $S(\gr)$. Since $S(\gr)=\layer_{1}=\anc_{1}$, $S(\gr)$ is an ancestral set in $\gr$. After conditioning on $\anc_{1}$, we can thus apply Lemma~\ref{lem:gen:ident} once again to identify the sources in $\gr_{1}$, i.e. $S(\gr_{1})=\layer_{2}$. By repeating this procedure, we can recursively identify $\layer_{1},\ldots,\layer_{r}$, and hence any topological sort of $\gr$.
\end{proof}

\subsection{Generalization to unequal variances}
\label{app:unequal}

In this appendix, we illustrate how Theorem~\ref{thm:gen:ident} can be extended to the case where residual variances are different, i.e. $\sigma_{j}^{2}=\E\var(X_{j} | \pa(j))$ is not independent of $j$. Let $\de(i)$ be the descendant of node $i$ and $[a:b]=\{a,a+1,\ldots,b-1,b\}$. Note also that for any nodes $X_u$ and $X_v$ in the same layer $\layer_m$ of the graph, if we interchange the position of $u$ and $v$ in some true ordering $\pi$ consistent with the graph to get $\pi_u$ and $\pi_v$, both $\pi_u$ and $\pi_v$ are correct orderings.

The following result is similar to existing results on unequal variances \citep{ghoshal2017sem,park2020identifiability,park2020condvar}, with the exception that it applies to general DAGs without linearity, additivity, or independent noise.

\begin{thm}
\label{thm:ident:unequal}
Suppose there exists an ordering $\pi$ such that for all $j\in [1:d]$ and $k\in \pi_{[j+1:d]}$, the following conditions holds: 
\begin{enumerate}
\item If $i=\pi_j$ and $k$ are not in the same layer $\layer_m$, then
\begin{align}
\label{eq:thm:ident:unequal}
    \sigma_i^2
    &<\sigma_k^2+\E\var(\E(X_k\given\pa(k))\vert X_{\pi_{[1:j-1]}}).
\end{align}
\item If $i$ and $k$ are in the same layer $\layer_m$, then either $\sigma^2_i= \sigma^2_k$ or \eqref{eq:thm:ident:unequal} holds.
\end{enumerate}
Then the order $\pi$ is identifiable.%
\end{thm}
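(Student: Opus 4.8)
The plan is to show that the top-down population procedure of Algorithm~\ref{alg:eqvaranm:pop}, with ties broken in favour of the index appearing earliest in $\pi$, reconstructs $\pi$ exactly; this gives identifiability of the order (and, as in Theorem~\ref{thm:gen:ident}, of the DAG). The analytic core is the variance identity already contained in the proof of Lemma~\ref{lem:gen:ident}: for \emph{any} ancestral set $A$ and any $\ell\notin A$,
\begin{align*}
\E\var(X_{\ell}\given X_{A}) = \sigma_{\ell}^{2} + \E\var\bigl(\E(X_{\ell}\given\pa(\ell))\given X_{A}\bigr),
\end{align*}
and the second term is nonnegative. This is precisely the computation in the proof of Lemma~\ref{lem:gen:ident}, which nowhere uses that the $\sigma_{j}^{2}$ are equal---equality was invoked there only to turn the identity into a strict inequality. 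Since $\pi$ is a topological sort of $\gr$, each prefix $\pi_{[1:j-1]}$ is an ancestral set (every parent of a node in the prefix precedes it in $\pi$), so the identity applies with $A = \pi_{[1:j-1]}$.

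First I would run the induction: after $j-1$ greedy steps the selected set equals $\pi_{[1:j-1]}$, with base case $\emptyset = \pi_{[1:0]}$. For the inductive step, $\pi$ topological gives $\pa(\pi_{j})\subset\pi_{[1:j-1]}$, so the identity gives $\E\var(X_{\pi_{j}}\given X_{\pi_{[1:j-1]}}) = \sigma_{\pi_{j}}^{2}$, while for each competitor $k\in\pi_{[j+1:d]}$ it gives
\begin{align*}
\E\var(X_{k}\given X_{\pi_{[1:j-1]}}) = \sigma_{k}^{2} + \E\var\bigl(\E(X_{k}\given\pa(k))\given X_{\pi_{[1:j-1]}}\bigr),
\end{align*}
which is exactly the right-hand side of \eqref{eq:thm:ident:unequal}. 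If $k$ and $\pi_{j}$ are in different layers, condition~(1) gives $\E\var(X_{\pi_{j}}\given X_{\pi_{[1:j-1]}}) < \E\var(X_{k}\given X_{\pi_{[1:j-1]}})$; if they are in the same layer, condition~(2) gives either the same strict inequality or $\sigma_{\pi_{j}}^{2} = \sigma_{k}^{2}\le\E\var(X_{k}\given X_{\pi_{[1:j-1]}})$. Hence $\pi_{j}$ attains the minimum residual variance over all remaining nodes, and any other minimizer lies in the same layer as $\pi_{j}$; the tie-breaking rule selects $\pi_{j}$, the selected set becomes $\pi_{[1:j]}$, and the induction closes. The procedure therefore returns $\pi$.

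The delicate point is the tie case $\sigma_{\pi_{j}}^{2}=\sigma_{k}^{2}$ allowed by condition~(2). If a run breaks such a tie by selecting a same-layer sibling $s$ rather than $\pi_{j}$, the output is a within-layer permutation of $\pi$, which by the observation preceding the theorem is again a valid topological sort recovering the same DAG; so ``identifiability of $\pi$'' should be read up to within-layer permutation. To make this rigorous I would, at the first deviation, replace $\pi$ by the within-layer permutation $\pi'$ the run is consistent with, check that $\pi'$ still satisfies conditions~(1)--(2) (the swapped indices share a layer, so the parent sets entering \eqref{eq:thm:ident:unequal} and the layer-membership tests for the relevant indices are unchanged), and iterate, reducing to the tie-free case above. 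I expect this bookkeeping with same-layer ties---not the analytic content, which is the one-line variance identity---to be the only real obstacle; the rest follows the equal-variance argument, now tracking the particular order singled out by the variance gaps.
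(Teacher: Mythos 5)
Your proposal is correct and follows essentially the same route as the paper's proof: induction on the position $j$ in the ordering, the law-of-total-variance identity $\E\var(X_k\given X_{\pi_{[1:j-1]}})=\sigma_k^2+\E\var(\E(X_k\given\pa(k))\given X_{\pi_{[1:j-1]}})$ applied to the ancestral prefix, and the observation that same-layer ties only produce within-layer permutations of $\pi$, which are still valid orderings. Your treatment of the tie case is somewhat more careful than the paper's (which simply notes that swapping same-layer nodes yields a valid ordering), but the substance is identical.
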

In this condition not only do we need to control the descendants of a node, but also the other non-descendants that have not been identified. 

Before proving this result, we illustrate it with an example.
\begin{ex}
\label{ex:misspecification}
Let's consider a very simple case: A Markov chain with three nodes $X_1\rightarrow X_2 \rightarrow X_3$ such that
\begin{align*}
    \begin{split}
    & X_1 = \err_1 \sim N(0,1) \\
    & X_2 = \frac{1}{2}X_1^2 + \err_2,\ \ \ \ \err_2 \sim N(0,\tfrac{2}{3}) \\
    & X_3 = \frac{1}{3}X_2^2 + \err_3,\ \ \ \ \err_3 \sim N(0,\tfrac{1}{2}).
    \end{split}
\end{align*}
Here we have unequal residual variances. We now check that this model satisfies the conditions in Theorem~\ref{thm:ident:unequal}. Let $f(u)=u^{2}$ and note that the true ordering is $X_1\prec X_2 \prec X_3$. Starting from the first source node $X_1$, we have
\begin{align*}
    \begin{split}
        \sigma^2_1 
        &= 1\\
        \sigma_2^2 + \E\var(f(X_{\pi(2)}))
        &=2/3 + \var(X_1^2/2) = 2/3+1/2>1=\sigma^2_1 \\
        \sigma^2_3 + \E\var(f(X_{\pi(3)}))
        &=1/2 + \var(X_2^2/3) \\
        &= 1/2 + \frac{1}{9}\Big(\var(X_1^4)/16+\var(\err_2^2)+\var(X_1^2\err_2)\\
        & + 2\cov(X_1^4/4,X_1^2\err_2)+2\cov(\err_2^2,X_1^2\err_2)\Big)\\
        & = 1/2 + 8/9 + 8/81 > 1=\sigma^2_1 
    \end{split}
\end{align*}
Then for the second source node $X_2$,
\begin{align*}
    \begin{split}
    \sigma^2_2 & = 2/3 \\ 
    \sigma^2_3 + \E\var(f(X_{\pi(3)})\given X_1)&=1/2 + \frac{1}{9}(\var(\err^2_2)+\E\var(X^2_1\err_2\given X_1))\\
    & = 1/2 + 1/3 - 1/81 > 2/3
    \end{split}
\end{align*}
Thus the condition is satisfied. 

If instead we have $\sigma^2_3 = \var(\err_{3}) = 1/3$, the condition would be violated. It is easy to check that nothing changes for $X_1$.
For the second source node $X_2$, things are different:
\begin{align*}
    \sigma^2_3 + \E\var(f(X_{\pi(3)})\given X_1)=1/3+1/3-1/81<2/3
\end{align*}
Thus, the order of $X_2$ and $X_3$ would be flipped for this model. 

This example is easily confirmed in practice. We let $n$ range from 50 to 1000, and check if the estimated order is correct for the two models ($\sigma_3^2=1/2$ and $\sigma_3^2=1/3$). We simulated this 50 times and report the averages in Figure~\ref{fig:misspecification}.

\begin{figure}[t]
\centering
\includegraphics[width=.8\textwidth]{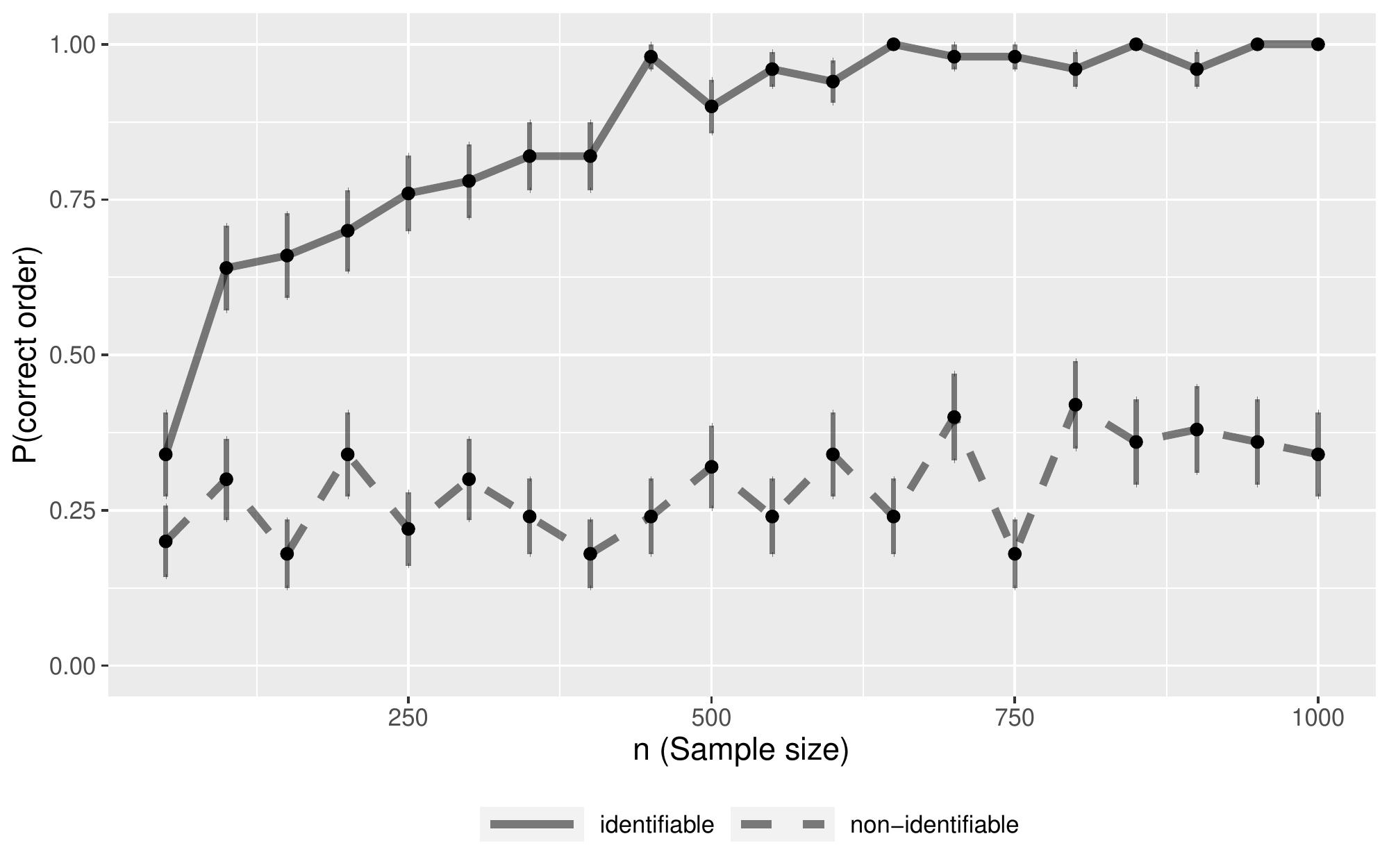}
\caption{Experiments confirming Example~\ref{ex:misspecification}. For the identifiable setting with $\sigma_3^2=1/2$, Algorithm~\ref{alg:eqvaranm:emp} correctly learns the topological ordering. For the non-identifiable setting with $\sigma_3^2=1/3$, Algorithm~\ref{alg:eqvaranm:emp} fails to learn the ordering.}
\label{fig:misspecification}
\end{figure}
\end{ex}

\begin{proof}[Proof of Theorem~\ref{thm:ident:unequal}]
We first consider the case where every node in the same layer has a different residual variance, i.e. $X_u,X_v\in \layer_m\implies\sigma^2_u\ne \sigma^2_v$. We proceed with induction on the element $j$ of the ordering $\pi$. Let $i=\pi_j$,

When $j=1$ and $i=\pi_{1}$,
$X_{i}$ must be a source node and we have for all $k\in \pi_{[2:d]}$, 
\begin{align*}
    \var(X_i)=\sigma_i^2< & \sigma_k^2+\E\var(\E(X_k\given \pa(k))) \\
    =& \E\var(X_k\given \pa(k)) + \var(\E(X_k\given \pa(k)))\\
    =&\var(X_k).
\end{align*}
Thus the first node to be identified must be $i=\pi_1$, as desired.

Now suppose the the first $j-1$ nodes in the ordering $\pi$ are correctly identified. 
The parent of node $i=\pi_j$ must have been identified in $\pi_{[j-1]}$ or it is a source node. Then we have for all $k\in \{\pi_{j+1},\ldots,\pi_d\}$, 
\begin{align*}
    \E\var(X_i\given X_{\pi_{[1:j-1]}})=\sigma_i^2<&\sigma_k^2+\E\var(\E(X_k\given \pa(k))\given X_{\pi_{[1:j-1]}}) \\
    =& \E\var(X_k\given \pa(k)) + \E\var(\E(X_k\given \pa(k))\given X_{\pi_{[1:j-1]}}) \\
    =& \E[\E(\var(X_k\given \pa(k))\given X_{\pi_{[1:j-1]}})] + \E[\var(\E(X_k\given \pa(k))\given X_{\pi_{[1:j-1]}})] \\
    =&\E\var(X_k\given X_{\pi_{[1:j-1]}})
\end{align*}
Then the $j^{\text{th}}$ node to be identified must be $i=\pi_j$. The induction is completed.

Finally, if $X_u,X_v\in \layer_{m}$ are in the same layer and $\sigma_u^2 = \sigma_v^2$, then this procedure may choose either $X_u$ or $X_v$ first. For example, if $X_u$ is chosen first, then $X_v$ will be incorporated into the same layer as $X_u$. Since both these nodes are in the same layer, swapping $X_{u}$ and $X_{v}$ in any ordering still produces a valid ordering of the DAG. The proof is complete.
\end{proof}

\section{Proof of Theorem~\ref{thm:main:sample}}
\label{app:proof:main}

The proof of Theorem~\ref{thm:main:sample} will be broken down into several steps. First, we derive an upper bound on the error of the plug-in estimator used in Algorithm~\ref{alg:eqvaranm:emp} (Appendix~\ref{app:proof:main:plugin}), and then we derive a uniform upper bound (Appendix~\ref{app:proof:main:uniform}). Based on this upper bound, we prove Theorem~\ref{thm:main:sample} via Proposition~\ref{prop:sample:bound} (Appendix~\ref{app:proof:main:thm}). Appendix~\ref{app:proof:main:technical} collects various technical lemmas that are used throughout.

\subsection{A plug-in estimate}
\label{app:proof:main:plugin}

Let $(X,Y)\in\R^{m}\times\R$ be a pair of random variables and $\wh{f}$ be a data-dependent estimator of the conditional expectation $\E[Y\given X]:=f(X)$. Assume we have split the sample into two groups, which we denote by $(U^{(1)},V^{(1)}),\ldots,(U^{(n_{1})},V^{(n_{1})})\sim \pr(X,Y)$ and $(X^{(1)},Y^{(1)}),\ldots,(X^{(n_{2})},Y^{(n_{2})})\sim \pr(X,Y)$ for clarity. Given these samples, define an estimator of $\sigma^{2}_{\resvar}:=\E\var(Y\given X)$ by
\begin{align}
\label{eq:def:plugin}
\wh{\sigma}_{\resvar}^{2}
:= \frac1{n_{2}}\sum_{i=1}^{n_{2}}(Y^{(i)})^{2} - \frac1{n_{2}}\sum_{i=1}^{n_{2}}\wh{f}(X^{(i)})^{2}.
\end{align}
Note here that $\wh{f}$ depends on $(U^{(i)},V^{(i)})$, and is independent of the second sample $(X^{(i)},Y^{(i)})$. 
We wish to bound the deviation $\pr(|\wh{\sigma}^{2}_{\resvar}-\sigma^{2}_{\resvar}| \ge t)$. 

Define the target $\theta^{*}=\E f^{2}(X)$ and its plug-in estimator
\begin{align*}
\theta(g; q)
= \int g(x)^{2}\dif q(x).
\end{align*}
Letting $\pr_{X}$ denote the true marginal distribution with respect to $x$, we have $\theta^{*}=\theta(f;\pr_{X})$ and $\wh{\theta}:=\theta(\wh{f};\wh{\pr})$, where $\wh{\pr}=n_{2}^{-1}\sum_{i}\delta_{X^{(i)}}$ is the empirical distribution.
We will also make use of more general targets $\theta(g;\pr_{X})=\E g^{2}(X)$ for general functions $g$.

Finally, as a matter of notation, we adopt the following convention: For a random variable $Z$, $\norm{Z}_{p}:=(\E_{Z} |Z|^{p})^{1/p}$ is the usual $L^{p}$-norm of $Z$ as a random variable, and for a (nonrandom) function $f$, $\norm{f}_{p}:=(\int|f|^{p}\,\dif\zeta)^{1/p}$, where $\zeta$ is a fixed base measure such as Lebesgue measure. In particular, $\norm{f(X)}_{p}\ne\norm{f}_{p}$. The difference of course lies in which measure integrals are taken with respect to. Moreover, we shall always explicitly specify with respect to which variables probabilities and expectations are taken, e.g. to disambiguate $\E_{\wh{f},X}=\E_{\wh{f}}\E_{X}$, $\E_{\wh{f}}$, and $\E_{X}$.

We first prove the following result:
\begin{prop}
\label{prop:plugin:upper:L2}
Assume $\norm{f(X)}_{\infty},\norm{\wh{f}(X)}_{\infty}\le B_{\infty}$.
Then 
\begin{align}
\label{eq:prop:plugin:upper:bound:L2}
\pr(|\wh{\theta}-\theta^{*}| \ge 2t) 
\les \frac{\E_{\wh{f}}\norm{f(X)-\wh{f}(X)}_{2}^{2}}{t^{2}} + \frac{\norm{f(X)}_{4}^{4} + \E_{\wh{f}} \norm{\wh{f}(X)-f(X)}_{4}}{n_{2}t^{2}}.
\end{align}
\end{prop}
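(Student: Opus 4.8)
The plan is to decompose the error $\wh\theta - \theta^* = \theta(\wh f;\wh\pr) - \theta(f;\pr_X)$ into a ``statistical'' part and an ``estimation'' part, and bound each separately by Chebyshev-type arguments, conditioning on the first half of the sample (which determines $\wh f$). Concretely, I would write
\begin{align*}
\wh\theta - \theta^*
&= \underbrace{\big(\theta(\wh f;\wh\pr) - \theta(\wh f;\pr_X)\big)}_{=: E_1}
+ \underbrace{\big(\theta(\wh f;\pr_X) - \theta(f;\pr_X)\big)}_{=: E_2},
\end{align*}
so that $\pr(|\wh\theta-\theta^*|\ge 2t) \le \pr(|E_1|\ge t) + \pr(|E_2|\ge t)$. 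The first term $E_1 = \tfrac1{n_2}\sum_i \wh f(X^{(i)})^2 - \E_X[\wh f(X)^2]$ is, \emph{conditionally on $\wh f$}, an average of i.i.d. bounded random variables, so $\var(E_1 \mid \wh f) \le \tfrac1{n_2}\E_X[\wh f(X)^4] \le \tfrac1{n_2}(\norm{f(X)}_4 + \norm{\wh f(X) - f(X)}_4)^4 / \ldots$ — more carefully, I would use $\wh f(X)^4 \le 8\big(f(X)^4 + (\wh f(X)-f(X))^4\big)$, take $\E_X$, then $\E_{\wh f}$, and apply conditional Chebyshev followed by the tower property to get $\pr(|E_1|\ge t) \lesssim \big(\norm{f(X)}_4^4 + \E_{\wh f}\norm{\wh f(X)-f(X)}_4^4\big)/(n_2 t^2)$. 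Here I should be careful about whether the bound in the proposition has $\norm{\wh f(X) - f(X)}_4$ or its fourth power; the stated form suggests I will bound $\E_X(\wh f(X)-f(X))^4 \le 2B_\infty^3 \cdot \E_X|\wh f(X)-f(X)| \le C\,\norm{\wh f(X)-f(X)}_4$ (using boundedness to trade a fourth power for a first power, up to constants absorbed into $\lesssim$), which is where that asymmetric-looking term comes from.

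For the second term $E_2 = \E_X[\wh f(X)^2 - f(X)^2] = \E_X\big[(\wh f(X)-f(X))(\wh f(X)+f(X))\big]$, I would use $|\wh f(X)+f(X)|\le 2B_\infty$ pointwise and Cauchy–Schwarz (or just $|\E_X[\cdot]| \le \E_X|\cdot|$) to get $|E_2| \le 2B_\infty\,\E_X|\wh f(X)-f(X)| \le 2B_\infty\,\norm{\wh f(X)-f(X)}_2$. This is a random quantity (depending on $\wh f$), so I apply Markov's inequality to its square: $\pr(|E_2|\ge t) \le \pr\big(\norm{\wh f(X)-f(X)}_2^2 \ge t^2/(4B_\infty^2)\big) \le 4B_\infty^2\,\E_{\wh f}\norm{\wh f(X)-f(X)}_2^2 / t^2$, which gives the first term on the right side of \eqref{eq:prop:plugin:upper:bound:L2} with the $B_\infty$ absorbed into the implicit constant.

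The main obstacle — or rather the main place to be careful — is the bookkeeping around the two independent samples and the order of expectations: $\wh f$ is a function of the first half, so all the concentration for $E_1$ must be done \emph{conditionally} on $\wh f$ with respect to the fresh sample $(X^{(i)},Y^{(i)})$, and only then integrated over the randomness of $\wh f$; conflating $\E_{\wh f}$ and $\E_X$ or forgetting the conditioning would invalidate the i.i.d. structure that makes the variance bound work. A secondary subtlety is getting the powers of $\norm{\wh f(X)-f(X)}$ to match the claimed bound exactly, which relies on using the uniform bound $B_\infty$ to convert fourth (or second) moments of the error into first moments at the cost of constants; since the conclusion is stated with $\lesssim$, these constants (powers of $B_\infty$, the factor $8$ from $(a+b)^4 \le 8(a^4+b^4)$, etc.) can all be hidden. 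Note also that $Y$ itself never really enters the bound on $\wh\theta - \theta^*$ — the $(Y^{(i)})^2$ terms cancel exactly between $\wh\sigma^2_{\resvar}$ and $\sigma^2_{\resvar} = \E_X[Y^2] - \theta^*$ once we later combine this proposition with a concentration bound for $\tfrac1{n_2}\sum (Y^{(i)})^2$ — so this proposition correctly isolates just the $\theta$-estimation error.
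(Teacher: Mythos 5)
Your proposal is correct and follows essentially the same route as the paper: the identical decomposition into $\theta(\wh f;\wh\pr)-\theta(\wh f;\pr_X)$ and $\theta(\wh f;\pr_X)-\theta(f;\pr_X)$, conditional Chebyshev on the first piece with a fourth-moment bound on $\wh f$ reduced via $B_\infty$ to $\norm{f(X)}_4^4+\norm{\wh f(X)-f(X)}_4$, and a boundedness-plus-Markov argument on the bias piece yielding the $\E_{\wh f}\norm{f(X)-\wh f(X)}_2^2/t^2$ term. The only differences are cosmetic (the paper routes the fourth-moment bound through a telescoping moment-difference lemma and the bias bound through a separate $L^4$ inequality, whereas you use $(a+b)^4\le 8(a^4+b^4)$ and a direct pointwise bound), and your remark that the $(Y^{(i)})^2$ terms are handled separately is exactly how the paper proceeds in the subsequent corollary.
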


\begin{proof}

We have
\begin{align*}
\pr_{\wh{f},X}(|\wh{\theta}-\theta^{*}| \ge 2t)
\le \pr_{\wh{f},X}(|\theta^{*}-\theta(\wh{f};\pr_{X})| \ge t) + \pr_{\wh{f},X}(|\wh{\theta}-\theta(\wh{f};\pr_{X})| \ge t).
\end{align*}
The second term is easy to dispense with since
\begin{align}
\label{eq:prop:plugin:upper:2}
\pr_{\wh{f},X}(|\wh{\theta}-\theta(\wh{f};\pr_{X})| \ge t)
&\le \frac{\E_{\wh{f}}\E_{X}(\wh{\theta}-\theta(\wh{f};\pr_{X}))^{2}}{t^{2}}
\le \frac{\E_{\wh{f}}\var_{X}(\wh{f}^{2}(X))}{n_{2}t^{2}}.
\end{align}
It follows from Lemma~\ref{lem:lp:mom:gen} with $p=4$ that
\begin{align*}
|\E_{X} \wh{f}(X)^{4}-\E_{X} f(X)^{4}| 
&\le \norm{\wh{f}(X)-f(X)}_{4}\sum_{k=0}^{3}\norm{\wh{f}(X)}_{4}^{k}\norm{f(X)}_{4}^{3-k}\\
&\le 4(2B_{\infty})^{3}\norm{\wh{f}(X)-f(X)}_{4}
\end{align*}
and hence
\begin{align}
\label{eq:prop:plugin:upper:5}
\var_{X}(\wh{f}^{2}(X))
\le \E_{X}\wh{f}^{4}(X)
\les \norm{f(X)}_{4}^{4} + \norm{\wh{f}(X)-f(X)}_{4}.
\end{align}
Combined with \eqref{eq:prop:plugin:upper:2}, we finally have 
\begin{align}
\label{eq:prop:plugin:upper:6}
\pr(|\wh{\theta}-\theta(\wh{f};\pr_{X})| \ge t)
\les \frac{\norm{f(X)}_{4}^{4} + \E_{\wh{f}}\norm{\wh{f}(X)-f(X)}_{4}}{n_{2}t^{2}}.
\end{align}

For the first term, since $f(X),\wh{f}(X)\in L^{\infty}$, Lemma~\ref{lem:l4:ineq} implies
\begin{align*}
\E_{\wh{f}}(\theta^{*}-\theta(\wh{f};\pr_{X}))^{2}
&= \E_{\wh{f}}\E_{X}(f^{2}(X)-\wh{f}^{2}(X))^{2} \\
&\les \E_{\wh{f}}\norm{f(X)-\wh{f}(X)}_{2}^{2},
\end{align*}
and thus
\begin{align*}
\pr_{\wh{f}}(|\theta^{*}-\theta(\wh{f};\pr_{X})| \ge t) 
&\le \frac{\E_{\wh{f}}(\theta^{*}-\theta(\wh{f};\pr_{X}))^{2}}{t^{2}}
\les \frac{\E_{\wh{f}}\norm{f(X)-\wh{f}(X)}_{2}^{2}}{t^{2}}.
\end{align*}
Therefore
\begin{align*}
\pr_{\wh{f},X}(|\wh{\theta}-\theta^{*}| \ge 2t)
&\le \pr_{\wh{f},X}(|\theta^{*}-\theta(\wh{f};\pr_{X})| \ge t) + \pr_{\wh{f},X}(|\wh{\theta}-\theta(\wh{f};\pr_{X})| \ge t) \\
&\les \frac{\E_{\wh{f}}\norm{f(X)-\wh{f}(X)}_{2}^{2}}{t^{2}} + \frac{\norm{f(X)}_{4}^{4} + \E_{\wh{f}} \norm{\wh{f}(X)-f(X)}_{4}}{n_{2}t^{2}}.
\qedhere
\end{align*}
\end{proof}

Finally, we conclude the following:
\begin{cor}
\label{prop:ecv:estprob}
If $\norm{f(X)}_{\infty},\norm{\wh{f}(X)}_{\infty}\le B_{\infty}$, then
\begin{align*}
\pr(|\wh{\sigma}_{\resvar}^{2} - \sigma_{\resvar}^{2}| \ge t)
&\les \frac{4}{t^{2}}\Big(\E_{\wh{f}}\norm{f(X)-\wh{f}(X)}_{2}^{2} + \frac{\var(Y) + \norm{f(X)}_{4}^{4} + \E_{\wh{f}} \norm{\wh{f}(X)-f(X)}_{4}}{n_{2}}\Big).
\end{align*}
\end{cor}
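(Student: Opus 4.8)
The plan is to reduce Corollary~\ref{prop:ecv:estprob} to Proposition~\ref{prop:plugin:upper:L2} via an exact decomposition of the plug-in error into a ``variance-of-$Y$'' piece and a ``second-moment-of-$f$'' piece. The first step is to record the identity $\sigma_{\resvar}^{2} = \E\var(Y\given X) = \E[Y^{2}] - \E f^{2}(X) = \E[Y^{2}] - \theta^{*}$, which follows from $\var(Y\given X) = \E[Y^{2}\given X] - f(X)^{2}$ together with the tower property. Since by definition $\wh{\sigma}_{\resvar}^{2} = \tfrac1{n_{2}}\sum_{i}(Y^{(i)})^{2} - \wh{\theta}$ with $\wh{\theta} = \theta(\wh f;\wh\pr) = \tfrac1{n_{2}}\sum_i \wh f(X^{(i)})^2$, subtracting yields the exact decomposition
\[
\wh{\sigma}_{\resvar}^{2} - \sigma_{\resvar}^{2} = \Big(\tfrac1{n_{2}}\sum_{i}(Y^{(i)})^{2} - \E[Y^{2}]\Big) - \big(\wh{\theta} - \theta^{*}\big).
\]
A union bound then reduces the task to controlling each term at level $t/2$, using $|A-B|\ge t \implies |A|\ge t/2 \text{ or } |B|\ge t/2$.

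For the first term, the summands $(Y^{(i)})^{2}$ are i.i.d.\ with mean $\E[Y^{2}]$, so Chebyshev's inequality gives a bound of order $\var(Y^{2})/(n_{2}t^{2})$. I would then bound $\var(Y^{2})\les\var(Y)$; this is where boundedness enters, writing $\var(Y^{2}) = \tfrac12\E[(Y-\tilde Y)^{2}(Y+\tilde Y)^{2}]$ for an independent copy $\tilde Y$ and using $|Y|\le 1$ (Condition~\ref{cond:reg}(a), which holds in all our applications since $Y = X_{\ell}$) to bound $(Y+\tilde Y)^{2}\le 4$, so $\var(Y^{2})\le 4\var(Y)$. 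This produces the $\var(Y)/n_{2}$ contribution in the statement.

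For the second term, the crucial point is that $\wh f$ is computed from the first half of the sample and is therefore independent of the second-half pairs $(X^{(i)},Y^{(i)})$ that enter $\wh\theta$; this is exactly the setting of Proposition~\ref{prop:plugin:upper:L2}. Applying that proposition with threshold $t/2$ (i.e. setting its ``$2t$'' to $t/2$) gives
\[
\pr\big(|\wh\theta - \theta^{*}| \ge t/2\big) \les \frac{\E_{\wh f}\norm{f(X)-\wh f(X)}_{2}^{2}}{t^{2}} + \frac{\norm{f(X)}_{4}^{4} + \E_{\wh f}\norm{\wh f(X)-f(X)}_{4}}{n_{2}t^{2}},
\]
where the numerical factor incurred by rescaling $t$ is absorbed into $\les$ (and can be tracked explicitly to recover the stated $4/t^{2}$). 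Adding the two contributions yields the claimed bound.

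I do not expect a genuine obstacle here: the statement is essentially a bookkeeping corollary of Proposition~\ref{prop:plugin:upper:L2}. The only points requiring care are (i) identifying $\sigma_{\resvar}^{2} = \E[Y^{2}] - \theta^{*}$ so the decomposition is exact, (ii) invoking the independence of $\wh f$ from the evaluation sample so that Proposition~\ref{prop:plugin:upper:L2} applies verbatim rather than with an extra conditioning argument, and (iii) the passage from $\var(Y^{2})$ to $\var(Y)$, which relies on the boundedness already in force. If one wants to avoid appealing to $|Y|\le 1$ at this stage, the same argument goes through with $\var(Y^{2})$ (or $\E[Y^{4}]$) in place of $\var(Y)$.
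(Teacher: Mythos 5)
Your proposal is correct and follows exactly the route the paper intends: the corollary is an immediate consequence of the decomposition $\wh{\sigma}_{\resvar}^{2}-\sigma_{\resvar}^{2}=\bigl(\tfrac1{n_{2}}\sum_{i}(Y^{(i)})^{2}-\E Y^{2}\bigr)-(\wh{\theta}-\theta^{*})$, Chebyshev on the first term, and Proposition~\ref{prop:plugin:upper:L2} on the second. Your observation that Chebyshev naturally produces $\var(Y^{2})$ rather than $\var(Y)$, and that passing between the two uses the boundedness of $Y$ already imposed in Condition~\ref{cond:reg}(a), is a correct and worthwhile point of care.
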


\subsection{A uniform bound}
\label{app:proof:main:uniform}

For any $j=1,\ldots,r$ and $\ell\notin \anc_{j}$, define $\sigma_{\ell j}^{2}:=\E\var(X_{\ell}\given \anc_{j})$ and $\wh{\sigma}_{\ell j}^{2}$ the corresponding plug-in estimator from \eqref{eq:def:plugin}.
By Proposition~\ref{prop:ecv:estprob}, we have for $f_{\ell j}(X_{\anc_{j}}):=\E[X_{\ell}\given X_{\anc_{j}}]$,
\begin{align}
\label{eq:ecv:estprob}
\begin{aligned}
\pr(|\wh{\sigma}_{\ell j}^{2} - \sigma_{\ell j}^{2}| \ge t)
\les \frac{4}{t^{2}}\Big( 
\E_{\wh{f}}\norm{f_{\ell j}(X_{\anc_{j}}) &- \wh{f}_{\ell j}(X_{\anc_{j}})}_{2}^{2}  \\
&+ \frac{\var(X_{\ell}) + \norm{f_{\ell j}(X_{\anc_{j}})}_{4}^{4} + \E_{\wh{f}} \norm{\wh{f}_{\ell j}(X_{\anc_{j}})-f_{\ell j}(X_{\anc_{j}})}_{4}}{n_{2}}
\Big)
\end{aligned}
\end{align}
Thus we have the following result: 
\begin{prop}
\label{prop:ecv:estprob:uniform}
Assume for all $j$ and $\ell\notin \anc_{j}$:
\begin{enumerate}
\item $\norm{f_{\ell j}(X_{\anc_{j}})}_{\infty},\norm{\wh{f}_{\ell j}(X_{\anc_{j}})}_{\infty}\le B_{\infty}$;
\item $\norm{f_{\ell j}(X_{\anc_{j}})}_{4}^{4}+\var(X_{\ell})\le B_{\infty}$;
\item $\E_{\wh{f}}\norm{f_{\ell j}(X_{\anc_{j}})-\wh{f}_{\ell j}(X_{\anc_{j}})}_{2}^{2}\to 0$.
\end{enumerate}
Then %
\begin{align}
\label{eq:prop:ecv:estprob:uniform}
\sup_{\ell, j}\pr(|\wh{\sigma}_{\ell j}^{2} - \sigma_{\ell j}^{2}| > t)
\les \frac{4}{t^{2}}\Big(\E_{\wh{f}}\norm{f_{\ell j}(X_{\anc_{j}})-\wh{f}_{\ell j}(X_{\anc_{j}})}_{2}^{2} + \frac{1}{n_{2}}\Big).
\end{align}
\end{prop}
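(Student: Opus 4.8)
The plan is to start from the pointwise deviation bound \eqref{eq:ecv:estprob}, which is nothing but Corollary~\ref{prop:ecv:estprob} applied with $Y = X_{\ell}$, $X = X_{\anc_{j}}$, and $f = f_{\ell j}$. Granting that, all that remains is to show that the three ``nuisance'' quantities appearing in the $O(1/n_{2})$ remainder of \eqref{eq:ecv:estprob}---namely $\var(X_{\ell})$, $\norm{f_{\ell j}(X_{\anc_{j}})}_{4}^{4}$, and $\E_{\wh{f}}\norm{\wh{f}_{\ell j}(X_{\anc_{j}}) - f_{\ell j}(X_{\anc_{j}})}_{4}$---are each bounded by an absolute constant, uniformly over $\ell$ and $j$; the resulting constant can then be absorbed into the implied constant of $\les$.

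For the first two, Assumption~(2) gives $\var(X_{\ell}) + \norm{f_{\ell j}(X_{\anc_{j}})}_{4}^{4} \le B_{\infty}$ directly. For the third, Assumption~(1) implies that for every realization of the random function $\wh{f}_{\ell j}$ one has $|\wh{f}_{\ell j}(x) - f_{\ell j}(x)| \le 2B_{\infty}$ for $\pr_{X_{\anc_{j}}}$-almost every $x$; hence $\norm{\wh{f}_{\ell j}(X_{\anc_{j}}) - f_{\ell j}(X_{\anc_{j}})}_{4} \le 2B_{\infty}$ deterministically, and taking $\E_{\wh{f}}$ preserves this bound. (One could alternatively use the interpolation inequality $\norm{g}_{4}^{4}\le\norm{g}_{\infty}^{2}\norm{g}_{2}^{2}$ to fold this term into the $\norm{\cdot}_{2}^{2}$ term, but the crude bound above suffices.) Substituting these into \eqref{eq:ecv:estprob} yields, for each fixed $\ell$ and $j$,
\[
\pr(|\wh{\sigma}_{\ell j}^{2} - \sigma_{\ell j}^{2}| > t) \les \frac{4}{t^{2}}\Big(\E_{\wh{f}}\norm{f_{\ell j}(X_{\anc_{j}}) - \wh{f}_{\ell j}(X_{\anc_{j}})}_{2}^{2} + \frac{1}{n_{2}}\Big),
\]
with an implied constant independent of $\ell$ and $j$.

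Finally, take the supremum over $\ell$ and $j$ on both sides. Since the constant in the remainder is uniform in $(\ell,j)$, bounding $\E_{\wh{f}}\norm{f_{\ell j}(X_{\anc_{j}}) - \wh{f}_{\ell j}(X_{\anc_{j}})}_{2}^{2}$ by $\npestbd^{2} := \sup_{\ell,j}\E_{\wh{f}_{\ell j}}\norm{f_{\ell j}(X_{\anc_{j}}) - \wh{f}_{\ell j}(X_{\anc_{j}})}_{2}^{2}$ (the quantity already appearing on the right-hand side of the claim, up to notation) gives exactly \eqref{eq:prop:ecv:estprob:uniform}. Assumption~(3) is not used for the inequality itself; it is recorded because it is what makes the bound informative, guaranteeing $\npestbd^{2}\to 0$ and hence that the right-hand side vanishes as $n\to\infty$.

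I do not anticipate a genuine obstacle here: the proposition is essentially bookkeeping that repackages the sharp but unwieldy Corollary~\ref{prop:ecv:estprob} into a form convenient for the subsequent union bound. The only points requiring care are (i) that every intermediate estimate be uniform over the data-dependent layer index $j$ and over $\ell\notin\anc_{j}$---which is precisely why Assumptions~(1)--(2) are phrased with the quantifier ``for all $j$ and $\ell\notin\anc_{j}$''---and (ii) that the expectation $\E_{\wh{f}}$ over the random regression estimate be controlled via the almost-sure pointwise bound of Assumption~(1) rather than anything more delicate. Note also that this is a \emph{per-pair} deviation bound, not a bound on $\pr(\sup_{\ell,j}|\wh{\sigma}_{\ell j}^{2}-\sigma_{\ell j}^{2}| > t)$; the union bound over the $O(rd)$ relevant pairs is deferred to Proposition~\ref{prop:sample:bound}.
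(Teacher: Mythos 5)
Your proposal is correct and follows essentially the same route as the paper, which derives the proposition directly from the pointwise bound \eqref{eq:ecv:estprob} (i.e.\ Corollary~\ref{prop:ecv:estprob}) by using assumptions (1)--(2) to absorb the $\var(X_{\ell})$, $\norm{f_{\ell j}(X_{\anc_{j}})}_{4}^{4}$, and $\E_{\wh{f}}\norm{\wh{f}_{\ell j}(X_{\anc_{j}})-f_{\ell j}(X_{\anc_{j}})}_{4}$ terms into a uniform constant hidden in $\les$. Your observations that assumption (3) plays no role in the inequality itself and that this is a per-pair (not uniform-in-probability) bound are both accurate.
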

For example, under Conditions~\ref{cond:reg}-\ref{cond:est}, we have $B_{\infty}:=\sup_{\ell,j}2\norm{f_{\ell j}(X_{\anc_{j}})}_{\infty}^{4} + \zeta_{0}$.

\subsection{Proof of Theorem~\ref{thm:main:sample}}
\label{app:proof:main:thm}

Recall $\sigma_{\ell j}^{2}=\E\var(X_{\ell}\given \anc_{j})$ and $\wh{\sigma}_{\ell j}^{2}$ is the plug-in estimator defined by \eqref{eq:def:plugin}.
Let $\resvarbd>0$ be such that
\begin{align}
\label{eq:ecv:delta:prob}
\sup_{\ell,j}\pr(|\wh{\sigma}_{\ell j}^{2} - \sigma_{\ell j}^{2}| > t)
\le \frac{\resvarbd^{2}}{t^{2}}.
\end{align}
For example, Proposition~\ref{prop:ecv:estprob:uniform} implies $\resvarbd^{2}\asymp\delta^{2}+n_{2}^{-1}$. Recall also $\Delta:=\inf_{j}\Delta_{j}$, where $\Delta_{j}>0$ is the smallest number such that $\E\var(X_{\ell}\given \anc_{j})>\sigma^{2}+\Delta_{j}$ for all $\ell\notin \anc_{j}$.

Theorem~\ref{thm:main:sample} follows immediately from Proposition~\ref{prop:sample:bound} below, combined with Proposition~\ref{prop:ecv:estprob:uniform} to bound $\resvarbd$ by $\npestbd$.

\begin{prop}
\label{prop:sample:bound}
Define $\resvarbd>0$ as in \eqref{eq:ecv:delta:prob}. Then for any threshold $\resvarbd\sqrt{d}<\eta<\Delta/2$, we have
\begin{align*}
\pr(\wh{\layer} = \layer(\gr))
&\ge 1 - \frac{\resvarbd^{2}}{\eta^{2}}rd.
\end{align*}
\end{prop}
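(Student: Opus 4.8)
The plan is to isolate a single high-probability ``good event'' on which a short induction forces \NPVAR{} to recover every layer exactly, and then to control the complement of this event by a union bound over the (at most $rd$) residual-variance estimates the algorithm ever forms.

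First recall the population picture. Fix $j$ and suppose the true ancestral set $\anc_j = \layer_1\cup\cdots\cup\layer_j$ has been identified. By Lemma~\ref{lem:gen:ident}, every $\ell\in\layer_{j+1}$ has $\pa(\ell)\subset\anc_j$, so $\sigma_{\ell j}^{2} := \E\var(X_\ell\given\anc_j) = \sigma^{2}$, while every $\ell$ in a strictly later layer has $\pa(\ell)\not\subset\anc_j$, so $\sigma_{\ell j}^{2} \ge \sigma^{2}+\Delta$ (this is exactly the quantity $\Delta$ in the hypothesis). Thus at level $j$ the true residual variances equal $\sigma^{2}$ on $\layer_{j+1}$ and exceed $\sigma^{2}+\Delta$ off $\anc_{j+1}$, a population gap that is $>2\eta$ because $\eta<\Delta/2$.

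Define the good event $\mathcal{G}$: the oracle plug-in estimators --- those of \eqref{eq:def:plugin} formed with the \emph{true} conditioning set $\anc_j$ --- satisfy $|\wh{\sigma}_{\ell j}^{2} - \sigma_{\ell j}^{2}| < \eta/2$ for every $j \in \{0,1,\dots,r-1\}$ and every $\ell\notin\anc_j$. On $\mathcal{G}$ I would prove by induction on $j$ that $\wh{\anc}_j = \anc_j$ and $\wh{\layer}_j = \layer_j$. The base case is $\wh{\anc}_0 = \emptyset$. For the step, if $\wh{\anc}_j = \anc_j$ then the residual-variance estimates Algorithm~\ref{alg:eqvaranm:emp} forms in Step~3 coincide with the oracle ones (same conditioning set, same samples); hence on $\mathcal{G}$ every $\ell\in\layer_{j+1}$ has $\wh{\sigma}_{\ell j}^{2} < \sigma^{2}+\eta/2$ and every later $\ell$ has $\wh{\sigma}_{\ell j}^{2} > \sigma^{2}+\Delta-\eta/2 > \sigma^{2}+\eta/2$. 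So $k_j = \argmin_{\ell\notin\anc_j}\wh{\sigma}_{\ell j}^{2} \in \layer_{j+1}$ with $|\wh{\sigma}_{k_j j}^{2} - \sigma^{2}| < \eta/2$; consequently every $\ell\in\layer_{j+1}$ satisfies $|\wh{\sigma}_{\ell j}^{2} - \wh{\sigma}_{k_j j}^{2}| < \eta$ and is placed in $\wh{\layer}_{j+1}$, whereas every later $\ell$ satisfies $\wh{\sigma}_{\ell j}^{2} - \wh{\sigma}_{k_j j}^{2} > \Delta - \eta > \eta$ and is excluded, using once more $\eta<\Delta/2$. Therefore $\wh{\layer}_{j+1} = \layer_{j+1}$ and $\wh{\anc}_{j+1} = \anc_{j+1}$, completing the induction; in particular $\wh{r} = r$ and $\wh{\layer} = \layer(\gr)$ on $\mathcal{G}$.

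It remains to bound $\pr(\mathcal{G}^{c})$. The algorithm runs at most $r$ iterations and in iteration $j$ forms at most $|V\setminus\anc_j| \le d$ oracle estimates, so $\mathcal{G}$ is an intersection of at most $rd$ events; a union bound with the per-pair deviation bound \eqref{eq:ecv:delta:prob} (applied at $t=\eta/2$) yields $\pr(\mathcal{G}^{c}) \les rd\,\resvarbd^{2}/\eta^{2}$, which is the claim (carrying the absolute constant incurred by using $\eta/2$ in place of $\eta$ recovers the stated form, and the lower bound $\eta>\resvarbd\sqrt{d}$ only serves to keep the estimate non-vacuous). The one genuinely delicate point is the first sentence of the induction step: the estimates the algorithm actually uses are built on the \emph{random} set $\wh{\anc}_j$, so the deviation bound \eqref{eq:ecv:delta:prob}, stated for fixed conditioning sets, cannot be invoked directly. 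The resolution is exactly the coupling with the oracle estimators, which is legitimate only because the layer-by-layer recursion keeps $\wh{\anc}_j$ pinned to the deterministic set $\anc_j$ throughout $\mathcal{G}$ --- this is precisely why Algorithm~\ref{alg:eqvaranm:emp} is organized around layers rather than a single running ordering, as discussed in Section~\ref{sec:ident:alg}.
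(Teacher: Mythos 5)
Your proof is correct and follows essentially the same route as the paper's: a union bound over the at most $rd$ residual-variance estimates, combined with the gap $\Delta$ versus the threshold $\eta<\Delta/2$ and a layer-by-layer induction that keeps $\wh{\anc}_{j}$ pinned to the deterministic set $\anc_{j}$ (so that the fixed-conditioning-set deviation bound \eqref{eq:ecv:delta:prob} applies); the paper merely packages this as a telescoping product of conditional probabilities $\prod_{j}\pr(\wh{\layer}_{j}=\layer_{j}\given\mathcal{E}_{j-1})$ rather than a single global good event. The one blemish --- the absolute constant lost by invoking the deviation bound at $t=\eta/2$, which strictly yields $4rd\resvarbd^{2}/\eta^{2}$ rather than $rd\resvarbd^{2}/\eta^{2}$ --- is present in the paper's own proof as well (it sets $\eta=2t$ and then writes $d\resvarbd^{2}/\eta^{2}$ in place of $d\resvarbd^{2}/t^{2}$), so it is not a defect of your argument relative to the paper's.
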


\begin{proof}
Define $\mathcal{E}_{j-1}:=\{\wh{\layer}_{1}=\layer_{1},\ldots,\wh{\layer}_{j-1}=\layer_{j-1}\}$. It follows that
\begin{align*}
\pr(\wh{\layer} = \layer(\gr))
&= \pr(\wh{\layer}_{1}=\layer_{1},\ldots,\wh{\layer}_{r}=\layer_{r}) 
= \prod_{j=1}^{r}\pr(\wh{\layer}_{j}=\layer_{j}\given \mathcal{E}_{j-1}).
\end{align*}
Clearly, if $\wh{\layer}_{1}=\layer_{1},\ldots,\wh{\layer}_{r}=\layer_{r}$ then $\wh{r}=r$.
By definition, we have $\sigma_{\ell j}^{2}>\sigma^{2}+\Delta$, i.e. $\Delta$ is the smallest ``gap'' between any source in a subgraph $\gr[V-\anc_{j-1}]$ and the rest of the nodes.

Now let $\wh{\sigma}^{2}:=\min_{\ell}\wh{\sigma}_{\ell 0}^{2}$ and consider $\layer_{1}$: 
\begin{align*}
\pr(\wh{\layer}_{1}=\layer_{1})
= \pr(|\wh{\sigma}_{\ell 0}^{2} - \wh{\sigma}^{2}|\le\eta\,\forall\ell\in \anc_{1},\,|\wh{\sigma}_{\ell 0}^{2} - \wh{\sigma}^{2}|>\eta\,\forall\ell\notin \anc_{1})
\end{align*}
Now for any $k,\ell\in \layer_{j}$,
\begin{align*}
|\wh{\sigma}_{\ell j}^{2} - \wh{\sigma}_{k j}^{2}|
&\le |\wh{\sigma}_{\ell j}^{2} - \sigma^{2}| + |\wh{\sigma}_{k j}^{2} - \sigma^{2}|,
\end{align*}
and for any $\ell\notin \layer_{j}$ and $k\in \layer_{j}$,
\begin{align*}
|\wh{\sigma}_{\ell j}^{2} - \wh{\sigma}_{k j}^{2}|
&> \Delta_{j} - |\sigma_{\ell j}^{2} - \wh{\sigma}_{\ell j}^{2}| - |\wh{\sigma}_{k j}^{2} - \sigma_{k j}^{2}|.
\end{align*}
Thus, with probability $1-d\resvarbd^{2}/t^{2}$, we have 
\begin{align*}
|\wh{\sigma}_{\ell j}^{2} - \wh{\sigma}_{k j}^{2}|
&\le 2t
\quad\text{if $k,\ell\in \layer_{j}$, and} \\
|\wh{\sigma}_{\ell j}^{2} - \wh{\sigma}_{k j}^{2}|
&> \Delta_{j}-2t
\quad\text{if $\ell\notin \layer_{j}$ and $k\in \layer_{j}$}.
\end{align*}
Now, as long as $t<\Delta/4$, we have $\Delta_{j}-2t > 2t$, which implies that $\eta:=2t<\Delta/2$. 

Finally, we have
\begin{align*}
\pr(\wh{\layer}_{1}\ne \layer_{1})
&\le \frac{\resvarbd^{2}}{\eta^{2}}d
\implies 
\pr(\wh{\layer}_{1}=\layer_{1})
\ge 1 - \frac{\resvarbd^{2}}{\eta^{2}}d.
\end{align*}
Recall that $d_{j}:=|\layer_{j}|$. Then by a similar argument
\begin{align*}
\pr(\wh{\layer}_{2} = \layer_{2}\given \wh{\layer}_{1}= \layer_{1})
&\ge 1 - \frac{\resvarbd^{2}}{\eta^{2}}(d-d_{1}).
\end{align*}
Recalling $\mathcal{E}_{j-1}:=\{\wh{\layer}_{1}=\layer_{1},\ldots,\wh{\layer}_{j-1}=\layer_{j-1}\}$, we have just proved that $\pr(\wh{\layer}_{2} = \layer_{2}\given \mathcal{E}_{1})\ge 1-(d-d_{1})(\resvarbd^{2}/\eta^{2})$. 
A similar argument proves that $\pr(\wh{\layer}_{j} = \layer_{j}\given \mathcal{E}_{j-1})\ge 1-(\resvarbd^{2}/\eta^{2})(d-d_{j-1})$. 
Since $\eta>\resvarbd\sqrt{d}$,
the inequality $\prod_{j}(1-x_{j})\ge 1-\sum_{j}x_{j}$ implies
\begin{align*}
\pr(\wh{\layer} = \layer(\gr))
&= \prod_{j=1}^{r}\pr(\wh{\layer}_{j}=\layer_{j}\given\mathcal{E}_{j-1}) \\
&= \prod_{j=1}^{r}\Big(1 - \frac{\resvarbd^{2}}{\eta^{2}}(d-d_{j-1})\Big) \\
&\ge 1- \sum_{j=1}^{r}\frac{\resvarbd^{2}}{\eta^{2}}(d-d_{j-1}) \\
&\ge 1- \frac{\resvarbd^{2}}{\eta^{2}}rd
\end{align*}
as desired.
\end{proof}

\subsection{Technical lemmas}
\label{app:proof:main:technical}

\begin{lemma}
\label{lem:lp:mom:gen}
\begin{align*}
|\E X^{p}-\E Y^{p}| 
\le \norm{X-Y}_{p}\sum_{k=0}^{p-1}\norm{X}_{p}^{k}\norm{Y}_{p}^{p-1-k}
\end{align*}
\end{lemma}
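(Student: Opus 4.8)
The plan is to combine an elementary algebraic factorization with the generalized (three-factor) H\"older inequality. Throughout we may assume $\norm{X}_{p},\norm{Y}_{p}<\infty$, since otherwise the right-hand side equals $+\infty$ and there is nothing to prove; this assumption also guarantees, again via H\"older, that every expectation appearing below is finite.

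\textbf{Step 1 (factor $X^{p}-Y^{p}$).} I would start from the identity $a^{p}-b^{p}=(a-b)\sum_{k=0}^{p-1}a^{k}b^{p-1-k}$, valid for all real $a,b$ and every positive integer $p$. Applying it pointwise with $a=X$, $b=Y$ and taking expectations gives $\E X^{p}-\E Y^{p}=\E\bigl[(X-Y)\sum_{k=0}^{p-1}X^{k}Y^{p-1-k}\bigr]$. Moving the absolute value inside the expectation and then inside the finite sum yields
\[
|\E X^{p}-\E Y^{p}|\ \le\ \sum_{k=0}^{p-1}\E\bigl[\,|X-Y|\,|X|^{k}\,|Y|^{p-1-k}\,\bigr].
\]

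\textbf{Step 2 (H\"older on each summand).} The crucial observation is that for each $k\in\{0,\ldots,p-1\}$ the exponents $p$, $p/k$, and $p/(p-1-k)$ are conjugate, since $\tfrac1p+\tfrac kp+\tfrac{p-1-k}p=1$. Hence the three-factor H\"older inequality gives
\[
\E\bigl[\,|X-Y|\,|X|^{k}\,|Y|^{p-1-k}\,\bigr]\ \le\ \norm{X-Y}_{p}\;\norm{\,|X|^{k}\,}_{p/k}\;\norm{\,|Y|^{p-1-k}\,}_{p/(p-1-k)}\ =\ \norm{X-Y}_{p}\,\norm{X}_{p}^{k}\,\norm{Y}_{p}^{p-1-k},
\]
where the last step simply rewrites $\norm{\,|X|^{k}\,}_{p/k}=(\E|X|^{p})^{k/p}=\norm{X}_{p}^{k}$, and similarly for $Y$. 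Summing over $k=0,\ldots,p-1$ reproduces exactly the claimed bound.

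I do not anticipate any genuine obstacle: the argument is essentially a one-line factorization followed by a single application of H\"older. The only points requiring a word of care are (i) the finiteness caveat noted at the outset, and (ii) interpreting the degenerate exponent $p/0$ occurring at $k=0$ (resp.\ $k=p-1$) as $+\infty$, so that the corresponding factor is the constant function $1$ with $L^{\infty}$-norm $1$ and H\"older degenerates harmlessly to its two-factor form. If one prefers to avoid invoking the three-factor H\"older statement directly, an equivalent route is to apply the two-factor H\"older inequality twice in succession --- first splitting off $|X-Y|$ with exponents $p$ and $p/(p-1)$, then splitting $|X|^{k}|Y|^{p-1-k}$ inside $L^{p/(p-1)}$ --- which produces the same factors.
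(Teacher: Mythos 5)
Your proof is correct, but it takes a genuinely different route from the paper's. You factor $X^{p}-Y^{p}$ \emph{pointwise} as a random variable, take expectations, and control each term $\E\bigl[|X-Y|\,|X|^{k}|Y|^{p-1-k}\bigr]$ with the three-factor H\"older inequality for the conjugate exponents $p$, $p/k$, $p/(p-1-k)$. The paper instead works entirely at the level of scalars: it writes $|\E X^{p}-\E Y^{p}|=|\norm{X}_{p}^{p}-\norm{Y}_{p}^{p}|$, applies the same algebraic factorization to the real numbers $\norm{X}_{p}$ and $\norm{Y}_{p}$, and then invokes the reverse triangle inequality $|\norm{X}_{p}-\norm{Y}_{p}|\le\norm{X-Y}_{p}$. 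The paper's argument is shorter, but the identification $\E X^{p}=\norm{X}_{p}^{p}$ requires $X\ge 0$ almost surely or $p$ even (which holds in the only place the lemma is used, namely $p=4$); your pointwise argument avoids this restriction and proves the stated inequality for arbitrary real-valued $X,Y$ and any positive integer $p$, at the modest cost of invoking generalized H\"older and handling the degenerate exponents at $k=0$ and $k=p-1$, which you do correctly.
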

\begin{proof}
Write $\E X^{p} - \E Y^{p}$ as a telescoping sum:
\begin{align*}
|\E X^{p} - \E Y^{p}|
= |\norm{X}_{p}^{p} - \norm{Y}_{p}^{p}|
&= |\norm{X}_{p}-\norm{Y}_{p}|\cdot\sum_{k=0}^{p-1}\norm{X}_{p}^{k}\norm{Y}_{p}^{p-k-1} \\
&\le \norm{X-Y}_{p}\cdot\sum_{k=0}^{p-1}\norm{X}_{p}^{k}\norm{Y}_{p}^{p-k-1}.
\qedhere
\end{align*}
\end{proof}

\begin{lemma}
\label{lem:lp:interp}
Fix $p>2$ and $\delta > 0$ and suppose $\norm{f}_{p+\delta},\norm{g}_{p+\delta}\le B_{p+\delta}$. Then
\begin{align*}
\norm{f-g}_{p}
\le C_{p,\delta}\cdot\norm{f-g}_{2}^{\gamma_{p,\delta}},
\quad 
C_{p,\delta} 
= (2B_{p+\delta})^{\tfrac{(p-2)(p+\delta)}{p(p+\delta-2)}},
\quad
\gamma_{p,\delta}
= \frac{2\delta}{p(p+\delta-2)}.
\end{align*}
The exponent $\gamma_{p,\delta}$ satisfies $\gamma_{p,\delta}\le 2/p<1$ and $\gamma_{p,\delta}\to2/p$ as $\delta\to\infty$, and the constant $C_{p,\delta}\to 1-\tfrac{2}{p}$ as $\delta\to\infty$. Thus, if $f-g\in L^{\infty}$, then
\begin{align*}
\norm{f-g}_{p}
\les \norm{f-g}_{2}^{2/p}.
\end{align*}
\end{lemma}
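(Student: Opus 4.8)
The plan is to obtain the main inequality from the standard $L^{q}$-interpolation (log-convexity) bound and then read off the stated asymptotics and the $L^{\infty}$ corollary by elementary exponent arithmetic. Write $h := f-g$. Since $2 < p < p+\delta$, there is a unique $\gamma\in(0,1)$ solving $\tfrac1p = \tfrac{\gamma}{2} + \tfrac{1-\gamma}{p+\delta}$, and solving this linear equation gives $\gamma = \tfrac{2\delta}{p(p+\delta-2)} = \gamma_{p,\delta}$. Applying H\"older's inequality to $\int |h|^{p}\,\dif\zeta = \int |h|^{p\gamma}|h|^{p(1-\gamma)}\,\dif\zeta$ with conjugate exponents $\tfrac{2}{p\gamma}$ and $\tfrac{p+\delta}{p(1-\gamma)}$ (which are conjugate precisely because of the defining relation for $\gamma$) yields $\norm{h}_{p}\le\norm{h}_{2}^{\gamma_{p,\delta}}\norm{h}_{p+\delta}^{1-\gamma_{p,\delta}}$. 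The triangle inequality then bounds $\norm{h}_{p+\delta}\le\norm{f}_{p+\delta}+\norm{g}_{p+\delta}\le 2B_{p+\delta}$.

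It remains to match the exponent $1-\gamma_{p,\delta}$ with the one appearing in $C_{p,\delta}$, which is the only real computation:
\[
1-\gamma_{p,\delta} = \frac{p(p+\delta-2)-2\delta}{p(p+\delta-2)} = \frac{(p-2)(p+\delta)}{p(p+\delta-2)},
\]
using the factorization $p^{2}+p\delta-2p-2\delta = (p-2)(p+\delta)$. Substituting gives $\norm{h}_{p}\le(2B_{p+\delta})^{(p-2)(p+\delta)/(p(p+\delta-2))}\norm{h}_{2}^{\gamma_{p,\delta}} = C_{p,\delta}\norm{h}_{2}^{\gamma_{p,\delta}}$, as claimed.

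For the asymptotics, write $\gamma_{p,\delta} = \tfrac{2}{p}\cdot\tfrac{\delta}{p+\delta-2}$. Since $p>2$ we have $p+\delta-2>\delta$, so $\gamma_{p,\delta}<2/p<1$, and $\tfrac{\delta}{p+\delta-2}\to1$ as $\delta\to\infty$, whence $\gamma_{p,\delta}\to2/p$; likewise the exponent $\tfrac{(p-2)(p+\delta)}{p(p+\delta-2)}$ defining $C_{p,\delta}$ tends to $\tfrac{p-2}{p}=1-\tfrac2p$. For the final $L^{\infty}$ statement it is cleanest to argue directly rather than to pass to a limit: if $h\in L^{\infty}$ then $\norm{h}_{p}^{p} = \int|h|^{2}|h|^{p-2}\,\dif\zeta\le\norm{h}_{\infty}^{p-2}\norm{h}_{2}^{2}$, so $\norm{h}_{p}\le\norm{h}_{\infty}^{1-2/p}\norm{h}_{2}^{2/p}\les\norm{h}_{2}^{2/p}$.

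I do not anticipate any genuine obstacle: the argument is entirely elementary, and the only points requiring care are the exponent bookkeeping above (solving for $\gamma_{p,\delta}$ and simplifying $1-\gamma_{p,\delta}$) and checking that the interpolation inequality applies, i.e.\ that $2\le p\le p+\delta$ and that the two H\"older exponents are conjugate --- both immediate from $p>2$ and $\delta>0$.
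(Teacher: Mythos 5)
Your proof is correct and is exactly the argument the paper intends: its proof is the one-line "use log-convexity of $L^{p}$-norms with $2=q<p<r=p+\delta$," which you have simply spelled out, including the exponent arithmetic $\gamma_{p,\delta}=\tfrac{2\delta}{p(p+\delta-2)}$ and $1-\gamma_{p,\delta}=\tfrac{(p-2)(p+\delta)}{p(p+\delta-2)}$. Your direct H\"older argument for the $L^{\infty}$ case and your reading of the limit claim as applying to the \emph{exponent} of $C_{p,\delta}$ (the paper's literal statement that the constant tends to $1-\tfrac2p$ is a slip) are both the right calls.
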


\begin{proof}
Use log-convexity of $L^{p}$-norms with $2=q<p<r=p+\delta$.
\end{proof}

\begin{lemma}
\label{lem:l4:ineq}
Assume $\norm{f}_{\infty}\le B_{\infty}<\infty$ and $g\in L^4$. Then
\begin{align}
\begin{aligned}
\E_{X}(f(X)^{2}-g(X)^{2})^{2} 
\le \norm{f(X)&-g(X))}_{4}^{4} +\\
&4B_{\infty}\norm{f(X)-g(X)}_{3}^{3} + 4B_{\infty}^{2}\norm{f(X)-g(X)}_{2}^{2}.
\end{aligned}
\end{align}
If additionally $g(X)\in L^{\infty}$, then 
\begin{align}
\E_{X}(f(X)^{2}-g(X)^{2})^{2} 
&\les \norm{f(X)-g(X)}_{2}^{2}.
\end{align}
\end{lemma}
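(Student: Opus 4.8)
The plan is to exploit the factorization $f^{2}-g^{2}=(f-g)(f+g)$ and to control the factor $f+g$ using only the $L^{\infty}$ bound on $f$ together with the deviation $f-g$; crucially we must \emph{not} assume a uniform bound on $g$ for the first inequality, since there we are only given $g\in L^{4}$. The key algebraic move is to write $f+g=2f-(f-g)$, so that, since $\norm{f}_{\infty}\le B_{\infty}$,
\[
(f+g)^{2}=\bigl(2f-(f-g)\bigr)^{2}=4f^{2}-4f(f-g)+(f-g)^{2}\le 4B_{\infty}^{2}+4B_{\infty}\,|f-g|+(f-g)^{2}.
\]
Multiplying through by $(f-g)^{2}$ gives the pointwise bound
\[
(f^{2}-g^{2})^{2}=(f-g)^{2}(f+g)^{2}\le 4B_{\infty}^{2}(f-g)^{2}+4B_{\infty}\,|f-g|^{3}+(f-g)^{4},
\]
and integrating against $\pr_{X}$ and rewriting the moments as $\norm{f(X)-g(X)}_{2}^{2}$, $\norm{f(X)-g(X)}_{3}^{3}$, $\norm{f(X)-g(X)}_{4}^{4}$ yields exactly the first displayed inequality.

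For the second, simplified bound I would use the additional hypothesis $g(X)\in L^{\infty}$: writing $\norm{g(X)}_{\infty}\le B$ we get $|f(X)-g(X)|\le B_{\infty}+B=:M$ almost surely, so the higher moments collapse onto the second moment, $\E_{X}|f-g|^{3}\le M\,\E_{X}(f-g)^{2}$ and $\E_{X}(f-g)^{4}\le M^{2}\,\E_{X}(f-g)^{2}$. Substituting into the first bound gives $\E_{X}(f(X)^{2}-g(X)^{2})^{2}\le(2B_{\infty}+M)^{2}\,\norm{f(X)-g(X)}_{2}^{2}$, i.e. $\les\norm{f(X)-g(X)}_{2}^{2}$ with a constant depending only on $B_{\infty}$ and $B$. (One could instead invoke the interpolation bound of Lemma~\ref{lem:lp:interp} to pass from the $L^{3}$ and $L^{4}$ moments back to the $L^{2}$ moment, but the crude $L^{\infty}$ estimate is cleaner and suffices here.)

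There is essentially no serious obstacle in this argument: the only point requiring care is organizing the expansion of $(f+g)^{2}$ around $f$ (whose $L^{\infty}$ bound is available) rather than symmetrically, so that the first inequality goes through under the weaker hypothesis $g\in L^{4}$. Everything else is routine H\"older/Jensen-type manipulation of moments and bookkeeping of constants.
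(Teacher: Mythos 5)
Your proof is correct and follows essentially the same route as the paper: the pointwise bound $(f^{2}-g^{2})^{2}\le (f-g)^{4}+4B_{\infty}|f-g|^{3}+4B_{\infty}^{2}(f-g)^{2}$ is exactly the expansion the paper uses (obtained there by writing $g=f+(g-f)$ rather than by expanding $(f+g)^{2}$ around $2f$, but it is the same identity). The only divergence is in the second inequality, where the paper reduces the $L^{3}$ and $L^{4}$ moments to the $L^{2}$ moment via its interpolation lemma (Lemma~\ref{lem:lp:interp} with $\delta\to\infty$) while you use the cruder pointwise bound $|f-g|\le M$ available once $g\in L^{\infty}$; both are valid and you even note the interpolation alternative yourself.
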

\begin{proof}
Note that
\begin{align*}
(f(X)^{2}-g(X)^{2})^{2} 
&= (g(X)-f(X))^{4} + 4f(X)(g(X)-f(X))^{3} + 4f(X)^{2}(g(X)-f(X))^{2} \\
&\le (g(X)-f(X))^{4} + 4|f(X)||g(X)-f(X)|^{3} + 4f(X)^{2}(g(X)-f(X))^{2}.
\end{align*}
Thus
\begin{align*}
\E(f(X)^{2}-g(X)^{2})^{2} 
&\le \E(g(X)-f(X))^{4} +\\
&\qquad\qquad 4\E\big[|f(X)||g(X)-f(X)|^{3}\big] + 4\E\big[f(X)^{2}(g(X)-f(X))^{2}\big] \\
&\le \E(g(X)-f(X))^{4} + \\
&\qquad\qquad 4B_{\infty}\E|g(X)-f(X)|^{3} + 4B_{\infty}^{2}\E(g(X)-f(X))^{2}.
\end{align*}
This proves the first inequality. The second follows from taking $\delta\to\infty$ in Lemma~\ref{lem:lp:interp} and using $\norm{f-g}_{p}^{p}\les \norm{f-g}_{2}^{2}$ for $p=3,4$.
\end{proof}

\section{Comparison to CAM algorithm}
\label{app:cam}

In this section, we justify the claim in Example~\ref{ex:cam:fail} that \emph{there exist infinitely many nonlinear functions $g$ for which the CAM algorithm returns an incorrect graph under the model \eqref{eq:cam:fail}.} To show this, we first construct a \emph{linear}  model on which CAM returns an incorrect ordering. Since CAM focuses on nonlinear models, we then show that this extends to any sufficiently small nonlinear perturbation of this model.

The linear model is 
\begin{align}
\label{eq:lin:cam:fail}
\left\{
\begin{aligned}
    &X_1\sim \normalN(0,1) \\
    &X_2 = X_1 + \err_{2} \ \ \ \ \err_{2}\sim \normalN(0,1)\\
    &X_3 = X_1 + X_2 + \err_{3} \ \ \ \ \err_{3}\sim \normalN(0,1).
\end{aligned}
\right.
\end{align}
The graph is 
\begin{align*}
    \begin{split}
    X_1\rightarrow & X_2\\
    \searrow&\downarrow\\
    & X_3
    \end{split}
\end{align*}
which corresponds to the adjacency matrix
\begin{align*}
    \begin{pmatrix}
    0 & 1 & 1 \\
    0 & 0 & 1 \\
    0 & 0 & 0  \\
    \end{pmatrix}.
\end{align*}
The \emph{IncEdge} step of the CAM algorithm (\sec5.2, \citep{buhlmann2014}) is based on the following score function:
\begin{align*}
\sum_{j=1}^{d}\log\Big(
    \E\var(X_{j}\given \pa(j))
\Big).
\end{align*}
The algorithm starts with the empty DAG (i.e. $\pa(j)=\emptyset$ for all $j$) and proceeds by greedily adding edges that decrease this score the most in each step. For example, in the first step, CAM searches for the pair $(X_{i},X_{j})$ that maximizes $\log\var(X_{j}) - \log\E\var(X_{j}\given X_{i})$, and adds the edge $X_{i}\to X_{j}$ to the estimated DAG. The second proceeds similarly until the estimated order is determined. Thus, it suffices to study the log-differences $\diff(j,i,S):=\log\var(X_{j}\given X_{S}) - \log\E\var(X_{j}\given X_{S\cup i})$.

The following are straightforward to compute for the model \eqref{eq:lin:cam:fail}:
\begin{align*}
    \begin{split}
    & \var(X_1) = 1 \ \ \ \ \E\var(X_2|X_1)=1           \ \ \ \ \E\var(X_1|X_2)=\frac{1}{2}\\
    & \var(X_2) = 2 \ \ \ \ \E\var(X_3|X_1)=2           \ \ \ \ \E\var(X_1|X_3)=\frac{3}{2}\\
    & \var(X_3) = 6 \ \ \ \ \E\var(X_3|X_2)=\frac{1}{3} \ \ \ \ \!\E\var(X_2|X_3)=\frac{1}{2}\\
    \end{split}
\end{align*}
Then
\begin{align*}
    \begin{split}
    \log\Big(\frac{\var(X_2)}{\E\var(X_2|X_1)}\Big)
    =\log\Big(\frac{2}{1}\Big)
    =\log2 \ \ \  \ 
    &\log\Big(\frac{\var(X_1)}{\E\var(X_1|X_2)}\Big)
    =\log\Big(\frac{1}{1/2}\Big)
    =\log2\\
    \log\Big(\frac{\var(X_3)}{\E\var(X_3|X_1)}\Big)
    =\log\Big(\frac{6}{2}\Big)
    =\log3 \ \ \  \ 
    &\log\Big(\frac{\var(X_1)}{\E\var(X_1|X_3)}\Big)
    =\log\Big(\frac{1}{1/3}\Big)
    =\log3\\
    \log\Big(\frac{\var(X_3)}{\E\var(X_3|X_2)}\Big)
    =\log\Big(\frac{6}{3/2}\Big)
    =\log4\ \ \  \ 
    &\log\Big(\frac{\var(X_2)}{\E\var(X_2|X_3)}\Big)
    =\log\Big(\frac{2}{1/2}\Big)
    =\log4\\
    \end{split} 
\end{align*}
Now, if $X_{3}\to X_{2}$ is chosen first, then the order is incorrect and we are done. Thus suppose CAM instead chooses $X_{2}\to X_{3}$, then in the next step it would update the score for $X_{1}\to X_{3}$ to be 
\begin{align*}
    \log\Big(\frac{\E\var(X_3|X_2)}{\E\var(X_3|X_1,X_2)}\Big)
    =\log\Big(\frac{3/2}{1}\Big)
    =\log\frac{3}{2}
    <\log\Big(\frac{\E\var(X_1|X_2)}{\E\var(X_1|X_3,X_2)}\Big)
    =\log3
\end{align*}
Therefore, for the next edge, CAM would choose $X_{3}\to X_{1}$, which also leads to the wrong order. Thus regardless of which edge is selected first, CAM will return the wrong order.

Thus, when CAM is applied to data generated from \eqref{eq:lin:cam:fail}, it is guaranteed to return an incorrect ordering. Although the model \eqref{eq:lin:cam:fail} is identifiable, it does not satisfy the identifiability condition for CAM (Lemma~1, \citep{buhlmann2014}), namely that the structural equation model is a nonlinear additive model. Thus, we need to extend this example to an identifiable, nonlinear additive model.

Since this depends only on the scores $\diff(j,i,S)$, it suffices to construct a \emph{nonlinear} model with similar scores. For this, we consider a simple nonlinear extension of \eqref{eq:lin:cam:fail}: Let $g$ be an arbitrary bounded, nonlinear function, and define $g_{\delta}(u):=u+\delta g(u)$. The nonlinear model is given by
\begin{align}
\label{eq:nlin:cam:fail}
\left\{
\begin{aligned}
    &X_1\sim \normalN(0,1) \\
    &X_2 = g_{\delta}(X_1) + \err_{2} \ \ \ \ \err_{2}\sim \normalN(0,1)\\
    &X_3 = g_{\delta}(X_1) + g_{\delta}(X_2) + \err_{3} \ \ \ \ \err_{3}\sim \normalN(0,1).
\end{aligned}
\right.
\end{align}
This model satisfies both our identifiability condition (Condition~\ref{cond:ident}) and the identifiability condition for CAM (Lemma~1, \citep{buhlmann2014}).

We claim that for sufficiently small $\delta$, the CAM algorithm will return the wrong ordering (see Proposition~\ref{prop:cam:nonlinear} below for a formal statement). It follows that the scores $\diff(j,i,S;\delta)$ corresponding to the model \eqref{eq:nlin:cam:fail} can be made arbitrarily close to $\diff(j,i,S)=\diff(j,i,S;0)$, which implies that CAM will return the wrong ordering for sufficiently small $\delta>0$.

\begin{figure}
\centering
\includegraphics[width=\textwidth]{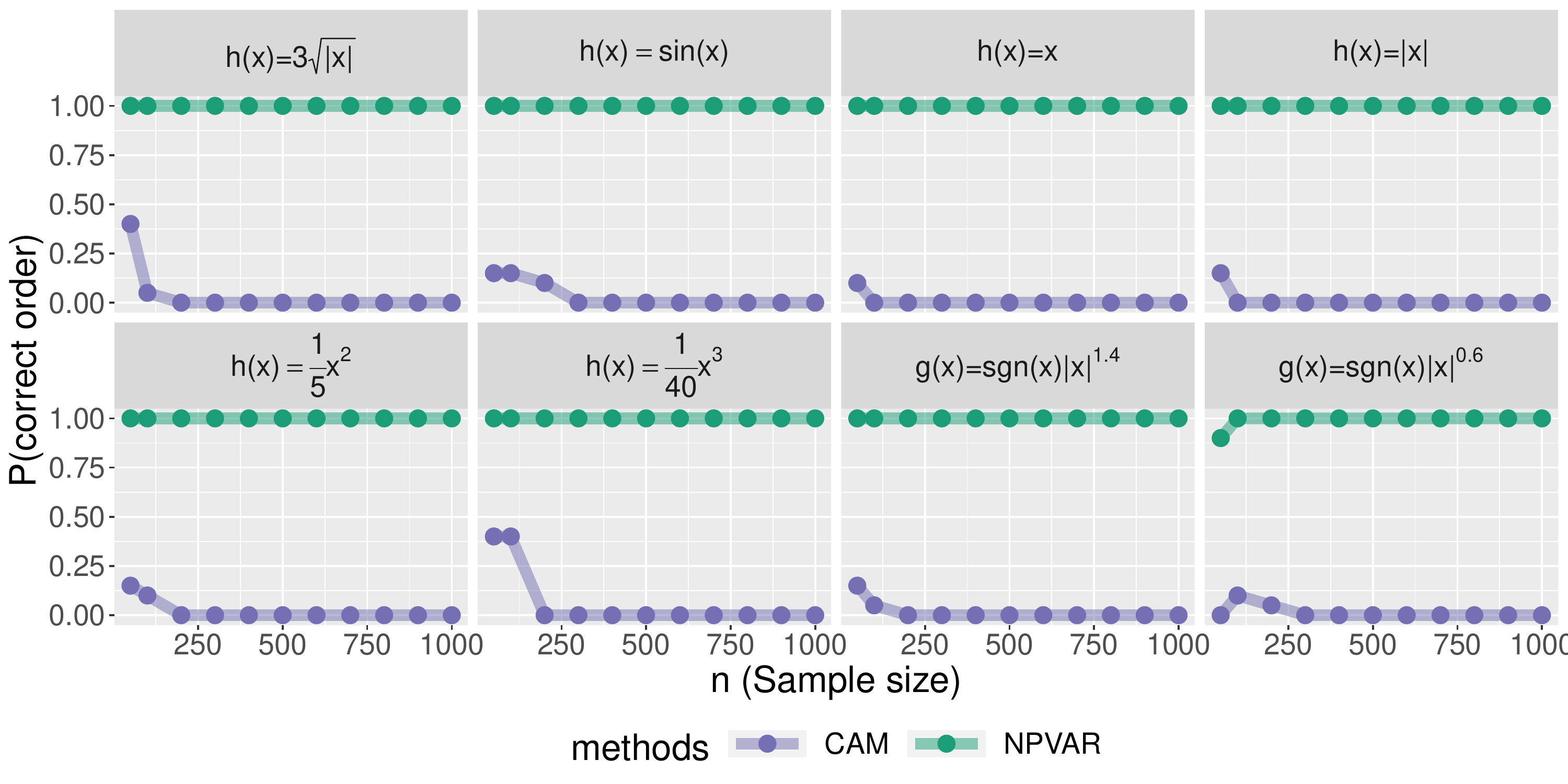}\\
\caption{The CAM algorithm does not recover the correct ordering under different nonlinear functions and models. $h(x)$ refers to model (\ref{eq:nlin2:cam:fail}), $g(x)$ refers to model (\ref{eq:cam:fail}) respectively.}
\label{fig:camfail:full}
\end{figure}

In Figure~\ref{fig:camfail:full}, we illustrate this empirically. In addition to the model \eqref{eq:nlin:cam:fail}, we also simulated from the following model, which shows that this phenomenon is not peculiar to the construction above:
\begin{align}
\label{eq:nlin2:cam:fail}
\left\{
\begin{aligned}
    &X_1\sim \normalN(0,1) \\
    &X_2 = X_1^{2} + \err_{2} \ \ \ \ \err_{2}\sim \normalN(0,1)\\
    &X_3 = 4X_1^2 + h(X_2) + \err_{3} \ \ \ \ \err_{3}\sim \normalN(0,1).
\end{aligned}
\right.
\end{align}
In all eight examples, NPVAR perfectly recovers the ordering while CAM is guaranteed to return an inconsistent order for sufficiently large $n$ (i.e. once the scores are consistently estimated).

\begin{prop}
\label{prop:cam:nonlinear}
Let $\E_{\delta}$ and $\var_{\delta}$ be taken with respect to model \eqref{eq:nlin:cam:fail}.
Then for all $i,j\in\{1,2,3\}$, as $\delta\to0$,
\begin{align*}
\begin{split}
    |\var_{\delta}(X_i)-\var_{0}(X_i)| &= o(1), \\
    |\E_{\delta}\var_{\delta}(X_i|X_j)-\E_{0}\var_{0}(X_i|X_j)| &= o(1).
\end{split}
\end{align*} 
and $|\E_{\delta}\var_{\delta}(X_3|X_1,X_2)-\E_{0}\var_{0}(X_3|X_1,X_2)| = o(1)$.
\end{prop}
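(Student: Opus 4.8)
The plan is to couple the two models by feeding the \emph{same} exogenous variables $X_{1},\err_{2},\err_{3}$ into \eqref{eq:lin:cam:fail} and \eqref{eq:nlin:cam:fail} on a common probability space, and to exploit that $g$ is bounded, say $\norm{g}_{\infty}\le M<\infty$, so that \eqref{eq:nlin:cam:fail} is a uniformly small perturbation of \eqref{eq:lin:cam:fail}. Write $X_{i}^{\delta}$ and $X_{i}^{0}$ for the variables generated by the nonlinear and linear models respectively. A one-line computation with $g_{\delta}(u)=u+\delta g(u)$ gives $X_{2}^{\delta}=X_{2}^{0}+\delta g(X_{1})$ and $X_{3}^{\delta}=X_{3}^{0}+2\delta g(X_{1})+\delta g(X_{2}^{\delta})$, whence $\norm{X_{2}^{\delta}-X_{2}^{0}}_{\infty}\le\delta M$ and $\norm{X_{3}^{\delta}-X_{3}^{0}}_{\infty}\le 3\delta M$; in particular each $X_{i}^{\delta}$ has moments of all orders, bounded uniformly for $\delta\le 1$. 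For the unconditional variances I would simply expand $\E[(X_{i}^{\delta})^{2}]=\E[(X_{i}^{0})^{2}]+2\E[X_{i}^{0}(X_{i}^{\delta}-X_{i}^{0})]+\E[(X_{i}^{\delta}-X_{i}^{0})^{2}]$, bound the last two terms by $O(\delta)$ via the uniform bound and Cauchy--Schwarz, and treat $\E X_{i}^{\delta}-\E X_{i}^{0}$ the same way, giving $\var_{\delta}(X_{i})\to\var_{0}(X_{i})$.

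For the conditional expected variances I would write $\E\var(X_{i}\given X_{S})=\E[X_{i}^{2}]-\E[(\E[X_{i}\given X_{S}])^{2}]$; the first term is handled above, so in each of the finitely many cases it remains to show $\E[(\E[X_{i}^{\delta}\given X_{S}^{\delta}])^{2}]\to\E[(\E[X_{i}^{0}\given X_{S}^{0}])^{2}]$. When the conditional mean is an explicit functional of the conditioning variables, built from $g_{\delta}$ and integrations against independent Gaussian noise --- this covers $\E\var(X_{2}\given X_{1})$, $\E\var(X_{3}\given X_{1})$ and $\E\var(X_{3}\given X_{1},X_{2})$, e.g.\ $\E[X_{3}^{\delta}\given X_{1},X_{2}^{\delta}]=g_{\delta}(X_{1})+g_{\delta}(X_{2}^{\delta})$ --- one uses $g_{\delta}\to\id$ pointwise together with $|g_{\delta}(u)|\le|u|+M$ and dominated convergence to get $L^{2}$-convergence of the conditional mean, then squares and takes expectations.

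The remaining cases ($\E\var(X_{1}\given X_{2})$, $\E\var(X_{1}\given X_{3})$, $\E\var(X_{2}\given X_{3})$, and the genuinely posterior part of $\E\var(X_{3}\given X_{2})$, whose other pieces reduce to coupled second moments by the tower property) are honest posterior expectations in which $\delta g$ enters the \emph{likelihood}. For these I would pass to explicit joint densities: $(X_{1},X_{2}^{\delta})$ has density $p_{\delta}(x,y)=\phi(x)\phi(y-x-\delta g(x))$, which converges pointwise to $\phi(x)\phi(y-x)$, and the elementary bound $(a-b)^{2}\ge\tfrac12 a^{2}-b^{2}$ supplies the $\delta$-free integrable majorant $p_{\delta}(x,y)\le c\,\phi(x)e^{-(y-x)^{2}/4}$; the densities of $(X_{1},X_{3}^{\delta})$ and $(X_{2}^{\delta},X_{3}^{\delta})$ follow by one further Gaussian integration and admit the same type of Gaussian majorant. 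Then
\[
\E\big[(\E[X_{i}^{\delta}\given X_{S}^{\delta}])^{2}\big]=\int\frac{\big(\int x_{i}\,p_{\delta}(x_{i},x_{S})\,\dif x_{i}\big)^{2}}{\int p_{\delta}(x_{i},x_{S})\,\dif x_{i}}\,\dif x_{S},
\]
whose integrand converges pointwise in $x_{S}$ (the denominator tends to the strictly positive $\delta=0$ marginal) and is dominated, uniformly in $\delta$, via Jensen's inequality $(\E[X_{i}\given X_{S}])^{2}\le\E[X_{i}^{2}\given X_{S}]$ by $\int x_{i}^{2}\,q(x_{i},x_{S})\,\dif x_{i}$ for the $\delta$-free Gaussian majorant $q$ above --- a fixed integrable function of $x_{S}$. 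Dominated convergence then closes the case; the claim for $\E\var(X_{3}\given X_{1},X_{2})$ is already covered in the ``downstream'' paragraph.

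The main obstacle is precisely this last family of ``upstream'' terms: one cannot argue by coupling alone, because conditioning on $X_{S}^{\delta}$ moves the $\sigma$-field with $\delta$, and the crude bound on the ratio defining the posterior mean degenerates wherever the marginal density of the conditioning variable is small. The device that makes it work is to use the conditional Jensen bound to eliminate that denominator and thereby obtain a single $\delta$-uniform integrable dominating function; pointwise convergence of the joint densities then suffices, and everything else is routine bookkeeping with Gaussian integrals.
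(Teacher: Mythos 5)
Your proposal is correct, and it is substantially more complete than the argument the paper actually gives. The paper's proof is only a sketch: it treats the single easiest quantity, $\var_{\delta}(X_{2})$, by the same uniform-perturbation bound you use (exploiting $|g_{\delta}(u)-u|\le\delta\norm{g}_{\infty}$), and then declares ``the remaining cases are similar.'' For the unconditional variances and for the ``downstream'' conditional variances ($\E\var(X_{2}\given X_{1})$, $\E\var(X_{3}\given X_{1})$, $\E\var(X_{3}\given X_{1},X_{2})$), your coupling on a common probability space plus the identity $\E\var(X_{i}\given X_{S})=\E[X_{i}^{2}]-\E[(\E[X_{i}\given X_{S}])^{2}]$ is indeed essentially the paper's intended route, made explicit. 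Where you genuinely go beyond the paper is in recognizing that the ``upstream'' posterior cases ($\E\var(X_{1}\given X_{2})$, $\E\var(X_{1}\given X_{3})$, $\E\var(X_{2}\given X_{3})$, and part of $\E\var(X_{3}\given X_{2})$) are \emph{not} handled by coupling alone, because the conditioning $\sigma$-field moves with $\delta$; the paper is silent on this. Your device --- pointwise convergence of the explicit joint densities $p_{\delta}$, the $\delta$-free Gaussian majorant obtained from $(a-b)^{2}\ge\tfrac12 a^{2}-b^{2}$, and the conditional Jensen/Cauchy--Schwarz bound $\bigl(\int x\,p_{\delta}\,\dif x\bigr)^{2}/\int p_{\delta}\,\dif x\le\int x^{2}p_{\delta}\,\dif x\le\int x^{2}q\,\dif x$ to kill the denominator and produce a single integrable dominating function of $x_{S}$ --- is sound (the limiting marginal is strictly positive, so the pointwise limit of the ratio is legitimate), and it is precisely the ingredient needed to make the paper's ``continuity in $\delta$'' claim rigorous for those terms. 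The only thing worth stating explicitly in a write-up is the standing hypothesis $\norm{g}_{\infty}\le M$, which the paper assumes when it posits a bounded nonlinearity and which your majorants rely on.
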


\begin{proof}[Proof sketch of Proposition~\ref{prop:cam:nonlinear}]
The proof is consequence of the fact that the differences are continuous functions of $\delta$. We sketch the proof for $i=2$; the remaining cases are similar.

We have
\[
\begin{split}
    & \var_{\delta}(X_2)=\var_{\delta}(g(X_1))+\var_{\delta}(\epsilon) \\
    & \var_{0}(X_2)=\var_{0}(X_1)+\var_{0}(\epsilon) \\
\end{split}
\]
Let $\phi(t)$ be the standard normal density. We only need to analyze and compare $\var_{0}(X_1) - \var_{\delta}(g(X_1))=\var_{0}(X_1) - \var_{0}(g(X_1))$ in two parts:
\[
\begin{split}
    & \int (X_1^2-g(X_1)^2) \phi(X_1)dX_1 \\
    & (\int X_1 \phi(X_1)dX_1)^2- (\int g(X_1) \phi(X_1)dX_1)^2.
\end{split}
\]
Since $|X_1-g(X_1)|\le \delta$, 
\[
\begin{split}
    & |\int (X_1^2-g(X_1)^2) \phi(X_1)dX_1| \le  \delta\int|X_1+g(X_1)|\phi(X_1)dX_1\le \delta\int2|X_1|\phi(X_1)dX_1+\delta^2 = C\delta +\delta^2\\
    &|\int g(X_1) \phi(X_1)dX_1| = |\int( X_1-g(X_1)) \phi(X_1)dX_1| \le \delta \int\phi(X_1)dX_1= \delta\\
    & |\int (X_1+g(X_1)\phi(X_1))dX_1|=|\mathbb{E}g(X_1)|\le \delta.
\end{split}
\]     
Thus 
\[
    |(\int X_1 \phi(X_1)dX_1)^2- (\int g(X_1) \phi(X_1)dX_1)^2|\le \delta^2,
\]     
so that
\[
    |\var_{\delta}(X_2)-\var_{0}(X_2)|=o(1)
\]       
as claimed.
\end{proof}

\section{Experiment details}
\label{app:exp}

In this appendix we outline the details of our experiments, as well as additional simulations.

\subsection{Experiment settings}
\label{app:exp:settings}

For graphs, we used
\begin{itemize}
\item \textit{Markov chain (MC)}. Graph where there is one edge from $X_{i-1}$ to $X_i$ for all nodes $i=2,\ldots,d$.
\item \textit{Erd\H{o}s R\'{e}nyi (ER)}. Random graphs whose edges are added independently with specified expected number of edges. 
\item \textit{Scale-free networks (SF)}. Networks simulated according to the Barabasi-Albert model. 
\end{itemize}

For models, we refer to the nonlinear functions in SEM. We specify the nonlinear functions in $X_j=f_j(X_{\text{pa(j)}})+\err_j$ for all $j=1,2,\ldots,d$, where $\err_j\iid\normalN(0,\sigma^{2})$ with variance $\sigma^2\in\{0.2,0.5,0.8\}$
\begin{itemize}
\item \textit{Additive sine model (SIN)}: $f_j(X_{\text{pa}(j)})=\sum_{k\in \text{pa}(j)}f_{jk}(X_k)$ where $f_{jk}(X_k)=\sin(X_k)$.
\item \textit{Additive Gaussian process (AGP)}: $f_j(X_{\text{pa}(j)})=\sum_{k\in \text{pa}(j)}f_{jk}(X_k)$ where $f_{jk}$ is a draw from Gaussian process with RBF kernel with length-scale one. 
\item \textit{Non-additive Gaussian process (NGP)}: $f_j$ is a draw from Gaussian process with RBF kernel with length-scale one. 
\item \textit{Generalized Linear Model (GLM)}: This is a special case with non-additive model and non-additive noise. We specify the model $\mathbb{P}(X_j=1)=f_j(X_{\text{pa}(j)})$ by a parameter $p\in \{0.1,0.3\}$. Given $p$, if $j$ is a source node, $\mathbb{P}(X_j=1)=p$. For the Markov chain model we define:
\begin{align*}
f_j(X_{\text{pa}(j)})=\left\{\begin{aligned}
p& \ \ \ \ & X_k=1\\
1-p& \ \ \ \ & X_k=0\\
\end{aligned}\right. 
\end{align*} or vice versa.
\end{itemize}

We generated graphs from each of the above models with $\{d,4d\}$ edges each. These are denoted by the shorthand XX-YYY-$k$, where XX denotes the graph type, YYY denotes the model, and $k$ indicates the graphs have $kd$ edges on average.
For example, ER-SIN-1 indicates an ER graph with $d$ (expected) edges under the additive sine model. SF-NGP-4 indicates an SF graph with $4d$ (expected) edges under a non-additive Gaussian process. 
Note that there is no difference between additive or non-additive GP for Markov chains, so the results for MC-NGP-$k$ are omitted from the Figures in Appendix~\ref{app:exp:expts}.

Based on these models, we generated random datasets with $n$ samples. For each simulation run, we generated $n\in\{100,200,500,750,1000\}$ samples for graphs with $d \in \{5,10,20,40,50,60,70\}$ nodes. 

\subsection{Implementation and baselines}
\label{app:exp:baselines}

Code implementing \NPVAR{} can be found at \url{https://github.com/MingGao97/NPVAR}.
We implemented \NPVAR{} (Algorithm~\ref{alg:eqvaranm:emp}) with generalized additive models (GAMs) as the nonparametric estimator for $\wh{f}_{\ell j}$. We used the \texttt{gam} function in the R package \texttt{mgcv} with P-splines \texttt{bs='ps'} and the default smoothing parameter \texttt{sp=0.6}. In all our implementations, including our own for Algorithm~\ref{alg:eqvaranm:emp}, we used default parameters in order to avoid skewing the results in favour of any particular algorithm as a result of hyperparameter tuning.

We compared our method with following approaches as baselines:
\begin{itemize}
\item Regression with subsequent independence test (RESIT) identifies and disregards a sink node at each step via independence testing \citep{peters2014}. The implementation is available at \url{http://people.tuebingen.mpg.de/jpeters/onlineCodeANM.zip}. Uses HSIC test for independence testing with \texttt{alpha=0.05} and \texttt{gam} for nonparametric regression. 
\item Causal additive models (CAM) estimates the topological order by greedy search over edges after a preliminary neighborhood selection \citep{buhlmann2014}. The implementation is available at \url{https://cran.r-project.org/src/contrib/Archive/CAM/}. By default, \CAM{} applies an extra pre-processing step called preliminary neighborhood search (PNS). Uses \texttt{gam} to compute the scores and for pruning, and \texttt{mboost} for preliminary neighborhood search. The R implementation of CAM does not use the default parameters for \texttt{gam} or \texttt{mboost}, and instead optimizes these parameters at runtime.
\item NOTEARS uses an algebraic characterization of DAGs for score-based structure learning  of nonparametric models via partial derivatives \citep{zheng2018notears,zheng2019learning}. The implementation is available at \url{https://github.com/xunzheng/notears}. We used neural networks for the nonlinearities with a single hidden layer of 10 neurons. The training parameters are \texttt{lambda1=0.01}, \texttt{lambda2=0.01} and the threshold for adjacency matrix is \texttt{w\_threshold=0.3}.
\item Greedy equivalence search with generalized scores (GSGES) uses generalized scores for greedy search without assuming model class \citep{huang2018generalized}. The implementation is available at \url{https://github.com/Biwei-Huang/Generalized-Score-Functions-for-Causal-Discovery/}. We used cross-validation parameters \texttt{parameters.kfold = 10} and \texttt{parameters.lambda = 0.01}. 
\item Equal variance (EqVar) algorithm identifies source node by minimizing conditional variance in linear SEM \citep{chen2018causal}. The implementation is available at \url{https://github.com/WY-Chen/EqVarDAG}. The original EqVar algorithm estimates the error variances in a linear SEM via the covariance matrix $\Sigma=\E XX^{T}$, and then uses linear regression (e.g. best subset selection) to learn the structure of the DAG. We adapted this algorithm to the nonlinear setting in the obivous way by using GAMs (instead of subset selection) for variable selection. The use of the covariance matrix to estimate the order remains the same.
\item PC algorithm and greedy equivalence search (GES) are standard baselines for structure learning \citep{spirtes1991,chickering2003}. The implementation is available from R package \texttt{pcalg}. For PC algorithm, use correlation matrix as sufficient statistic. Independence test is implemented by \texttt{gaussCItest} with significance level \texttt{alpha=0.01}. For GES, set the score to be \texttt{GaussL0penObsScore} with \texttt{lambda} as $\log n/2$, which corresponds to the BIC score.
\end{itemize}

The experiments were conducted on an Intel E5-2680v4 2.4GHz CPU with 64 GB memory.

\subsection{Metrics}
\label{app:exp:metrics}

We evaluated the performance of each algorithm with the following two metrics:
\begin{itemize}
\item $\mathbb{P}(\text{correct order})$: The percentage of runs in which the algorithm gives a correct topological ordering, over $N$ runs. This metric is only sensible for algorithms that first estimate an ordering or return an adjacency matrix which does not contain undirected edges, including \RESIT, \CAM, \EqVar{} and \NOTEARS.
\item Structural Hamming distance (SHD): A standard benchmark in the structure learning literature that counts the total number of edge additions, deletions, and reversals needed to convert the estimated graph into the true graph.
\end{itemize}
Since there may be multiple topological orderings of a DAG, in our evaluations of order recovery, we check whether or not the order returned is any of the possible valid orderings. For \PC{}, \GES{}, and \GSGES{}, they all return a CPDAG that may contain undirected edges, in which case we evaluate them favourably by assuming correct orientation for undirected edges whenever possible. Since \CAM{}, \RESIT{}, \EqVar{} and \NPVAR{} each first estimate a topological ordering then estimate a DAG. To estimate a DAG from an ordering, we apply the same pruning step to each algorithm for a fair comparison, which is adapted from \citep{buhlmann2014}. Specifically, given an estimated ordering, we run a \texttt{gam} regression for each node on its ancestors, then determine the parents of the node by the $p$-values with significance level \texttt{cutoff=0.001} for estimating the DAG.

\subsection{Timing}
\label{app:exp:timing}

For completeness, runtime comparisons are reported in Tables~\ref{tab:timing:d20} and~\ref{tab:timing:d40}. Algorithms based on linear models such as \EqVar{}, \PC{}, and \GES{} are by far the fastest. These algorithms are also the most highly optimized. The slowest algorithms are \GSGES{} and \RESIT{}. Timing comparisons against \CAM{} are are difficult to interpret since by default, \CAM{} first performs preliminary neighbourhood search, which can easily be applied to any of the other algorithms tested. The dramatic difference with and without this pre-processing step by comparing Tables~\ref{tab:timing:d20} and~\ref{tab:timing:d40}: For $d=40$, with this extra step \CAM{} takes just over 90 seconds, whereas without it, on the same data, it takes over 8.5 hours. For comparison, \NPVAR{} takes around two minutes (i.e. without pre-processing or neighbourhood search).

\begin{table}[t]
\begin{center}
\begin{tabular}{llll}
\toprule
Algorithm & $d$ & $n$ & Runtime (s) \\
\midrule
\EqVar{} & 20 & 1000 & $0.0017 \pm 0.0003$ \\
\PC{} & 20 & 1000 & $0.056 \pm 0.016$ \\
\GES{} & 20 & 1000 & $0.060 \pm 0.034$ \\
\NPVAR{} & 20 & 1000 & $10.76 \pm 0.23$ \\
\NOTEARS{} & 20 & 1000 & $31.46 \pm 8.79$ \\
\CAM{} (w/ PNS) & 20 & 1000 & $40.56 \pm 1.29 $ \\
\CAM{} (w/o PNS) & 20 & 1000 & $559.01 \pm 9.49$ \\
\RESIT{} & 20 & 1000 & $652.15 \pm 7.26$ \\
\GSGES{} & 20 & 1000 & $3216.00 \pm 95.0$ \\
\bottomrule\\
\end{tabular}
\end{center}
\caption{Runtime comparisons for $d=20$. Timing for \CAM{} is presented with and without preliminary neighbourhood selection (PNS), which is a pre-processing step that can be applied to any algorithm. In our experiments, only \CAM{} used PNS.}
\label{tab:timing:d20}
\end{table}

\begin{table}[t]
\begin{center}
\begin{tabular}{llll}
\toprule
Algorithm & $d$ & $n$ & Runtime (s) \\
\midrule
\EqVar{} & 40 & 1000 & $0.0043 \pm 0.0003$ \\
\GES{} & 40 & 1000 & $0.12 \pm 0.0052$ \\
\PC{} & 40 & 1000 & $0.019 \pm 0.030$ \\
\NOTEARS{} & 40 & 1000 & $76.05 \pm 19.16$ \\
\CAM{} (w/ PNS) & 40 & 1000 & $95.59 \pm 6.33$ \\
\NPVAR{} & 40 & 1000 & $118.33 \pm 2.25$ \\
\CAM{} (w/o PNS) & 40 & 1000 & $31644.56 \pm 1329.31$ \\
\bottomrule\\
\end{tabular}
\end{center}
\caption{Runtime comparisons for $d=40$. Timing for \CAM{} is presented with and without preliminary neighbourhood selection (PNS), which is a pre-processing step that can be applied to any algorithm. In our experiments, only \CAM{} used PNS.}
\label{tab:timing:d40}
\end{table}

\subsection{Additional experiments}
\label{app:exp:expts}

Here we collect the results of our additional experiments. Since the settings MC-AGP-$k$ and MC-NGP-$k$ are equivalent (i.e. since there is only parent for each node), the plots for MC-NGP-$k$ are omitted. Some algorithms might be skipped due to high computational cost or numerical issue.
\begin{itemize}
\item Figure~\ref{fig:shd_vs_n_d5}: SHD vs. $n$ with $d=5$ fixed, across all graphs and models tested. 
\item Figure~\ref{fig:shd_vs_n_d10}: SHD vs. $n$ with $d=10$ fixed, across all graphs and models tested. 
\item Figure~\ref{fig:shd_vs_n_d20}: SHD vs. $n$ with $d=20$ fixed, across all graphs and models tested. 
\item Figure~\ref{fig:shd_vs_n_d40}: SHD vs. $n$ with $d=40$ fixed, across all graphs and models tested with \GSGES{} and \RESIT{} skipped (due to high computational cost). 
\item Figure~\ref{fig:shd_vs_n_d50}: SHD vs. $n$ with $d=50$ fixed, across all graphs and models tested with \GSGES{}, \RESIT{} and \NOTEARS{} skipped (due to high computational cost).
\item Figure~\ref{fig:shd_vs_n_d60}: SHD vs. $n$ with $d=60$ fixed, across all graphs and models tested with \GSGES{}, \RESIT{} and \NOTEARS{} skipped (due to high computational cost).
\item Figure~\ref{fig:shd_vs_n_d70}: SHD vs. $n$ with $d=70$ fixed, across all graphs and models tested with \GSGES{}, \RESIT{} and \NOTEARS{} skipped (due to high computational cost).
\item Figure~\ref{fig:glm}: SHD vs. $n$ with $d$ ranging from $5$ to $70$ on GLM with \CAM{} and \RESIT{} skipped (due to numerical issues).
\item Figure~\ref{fig:shd_vs_d_n500}: SHD vs. $d$ with $n=500$ fixed, across all graphs and models tested.
\item Figure~\ref{fig:shd_vs_d_n1000}: SHD vs. $d$ with $n=1000$ fixed, across all graphs and models tested.
\item Figure~\ref{fig:order_vs_n_d5}: Ordering recovery vs. $n$ with $d=5$ fixed, across all graphs and models tested. 
\item Figure~\ref{fig:order_vs_n_d10}: Ordering recovery vs. $n$ with $d=10$ fixed, across all graphs and models tested. 
\item Figure~\ref{fig:order_vs_n_d20}: Ordering recovery vs. $n$ with $d=20$ fixed, across all graphs and models tested.
\item Figure~\ref{fig:order_vs_n_d40}: Ordering recovery vs. $n$ with $d=40$ fixed, across all graphs and models tested with \RESIT{} skipped (due to high computational cost).
\item Figure~\ref{fig:order_vs_n_d50}: Ordering recovery vs. $n$ with $d=50$ fixed, across all graphs and models tested with \RESIT{} and \NOTEARS{} skipped (due to high computational cost).
\item Figure~\ref{fig:order_vs_n_d60}: Ordering recovery vs. $n$ with $d=60$ fixed, across all graphs and models tested with \RESIT{} and \NOTEARS{} skipped (due to high computational cost).
\item Figure~\ref{fig:order_vs_n_d70}: Ordering recovery vs. $n$ with $d=70$ fixed, across all graphs and models tested with \RESIT{} and \NOTEARS{} skipped (due to high computational cost).
\end{itemize}

\begin{figure}[h]
\centering
\includegraphics[width=\textwidth]{./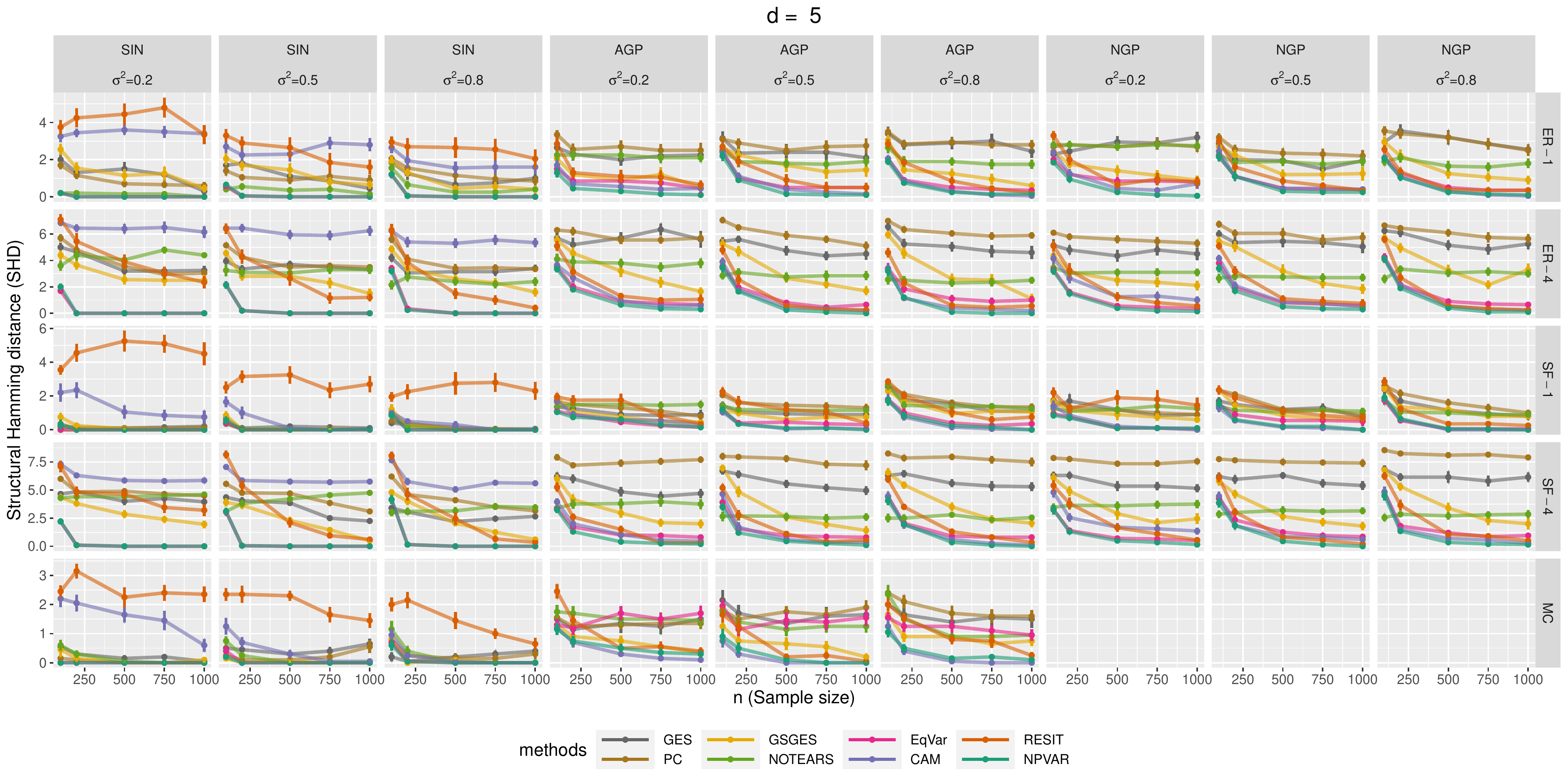}\\
\caption{SHD vs $n$ for fixed $d=5$.}
\label{fig:shd_vs_n_d5}
\end{figure}

\begin{figure}[h]
\centering
\includegraphics[width=\textwidth]{./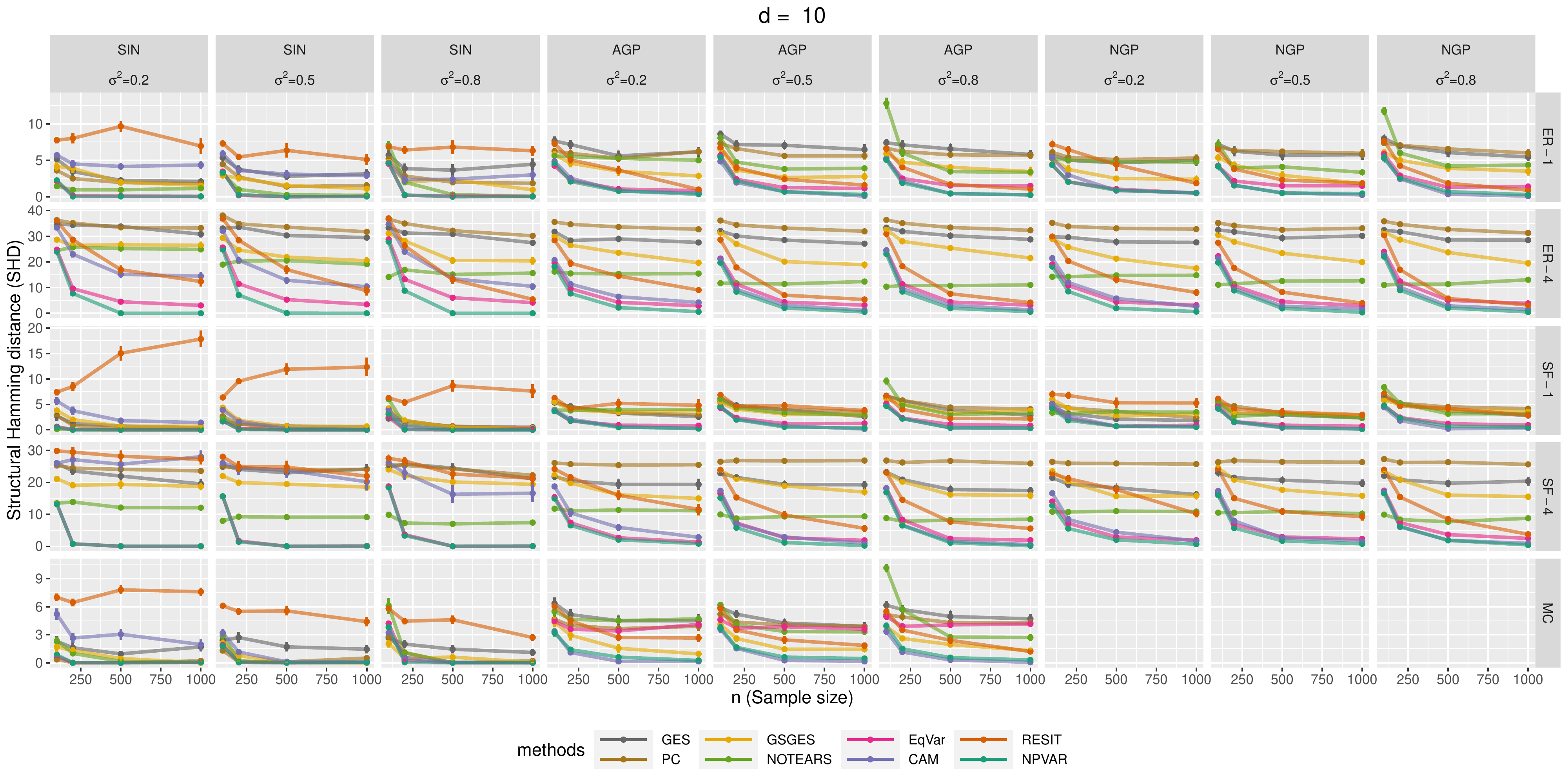}\\
\caption{SHD vs $n$ for fixed $d=10$.}
\label{fig:shd_vs_n_d10}
\end{figure}

\begin{figure}[h]
\centering
\includegraphics[width=\textwidth]{./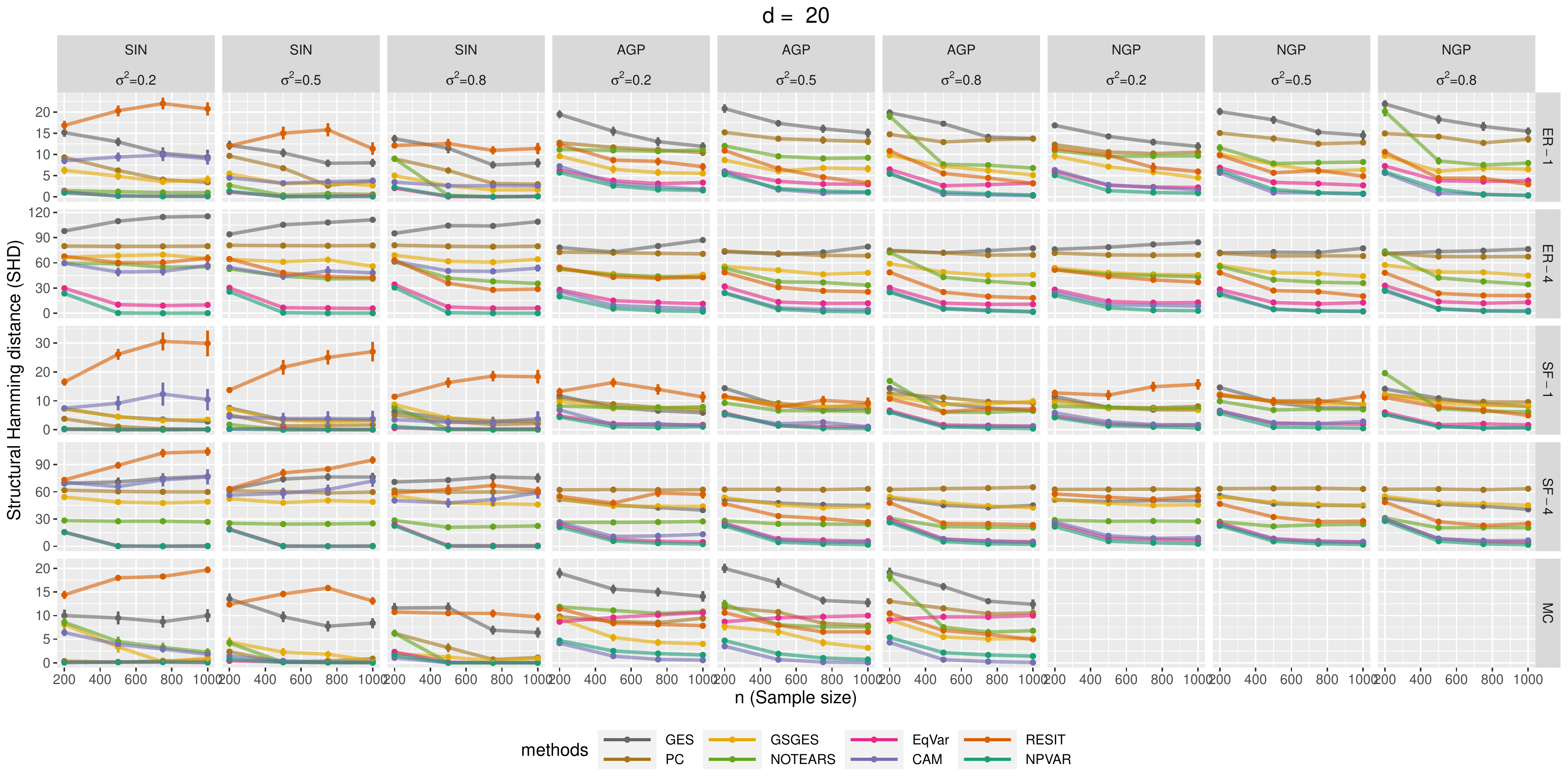}\\
\caption{SHD vs $n$ for fixed $d=20$.}
\label{fig:shd_vs_n_d20}
\end{figure}

\begin{figure}[h]
\centering
\includegraphics[width=\textwidth]{./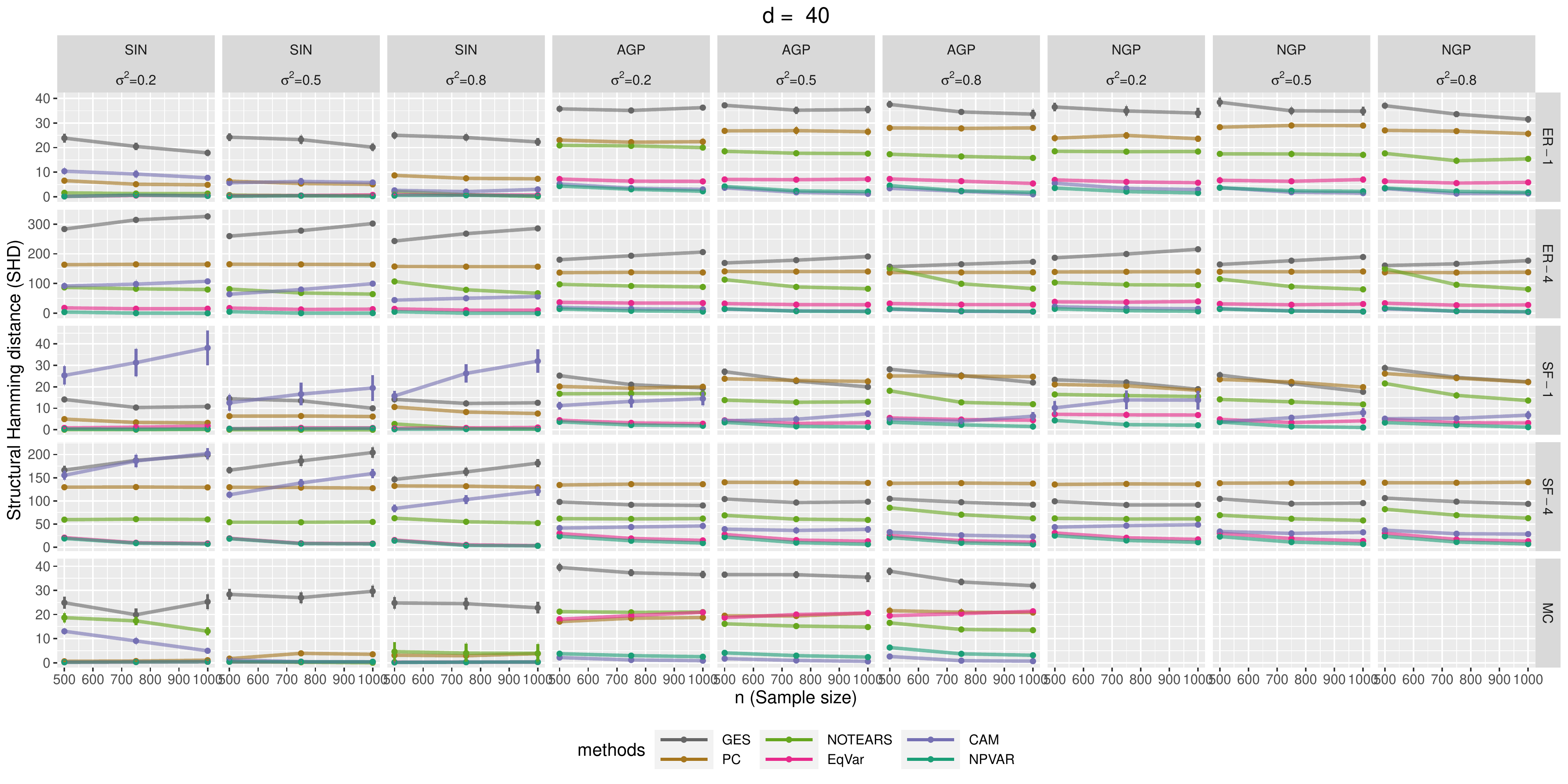}\\
\caption{SHD vs $n$ for fixed $d=40$.}
\label{fig:shd_vs_n_d40}
\end{figure}

\begin{figure}[h]
\centering
\includegraphics[width=\textwidth]{./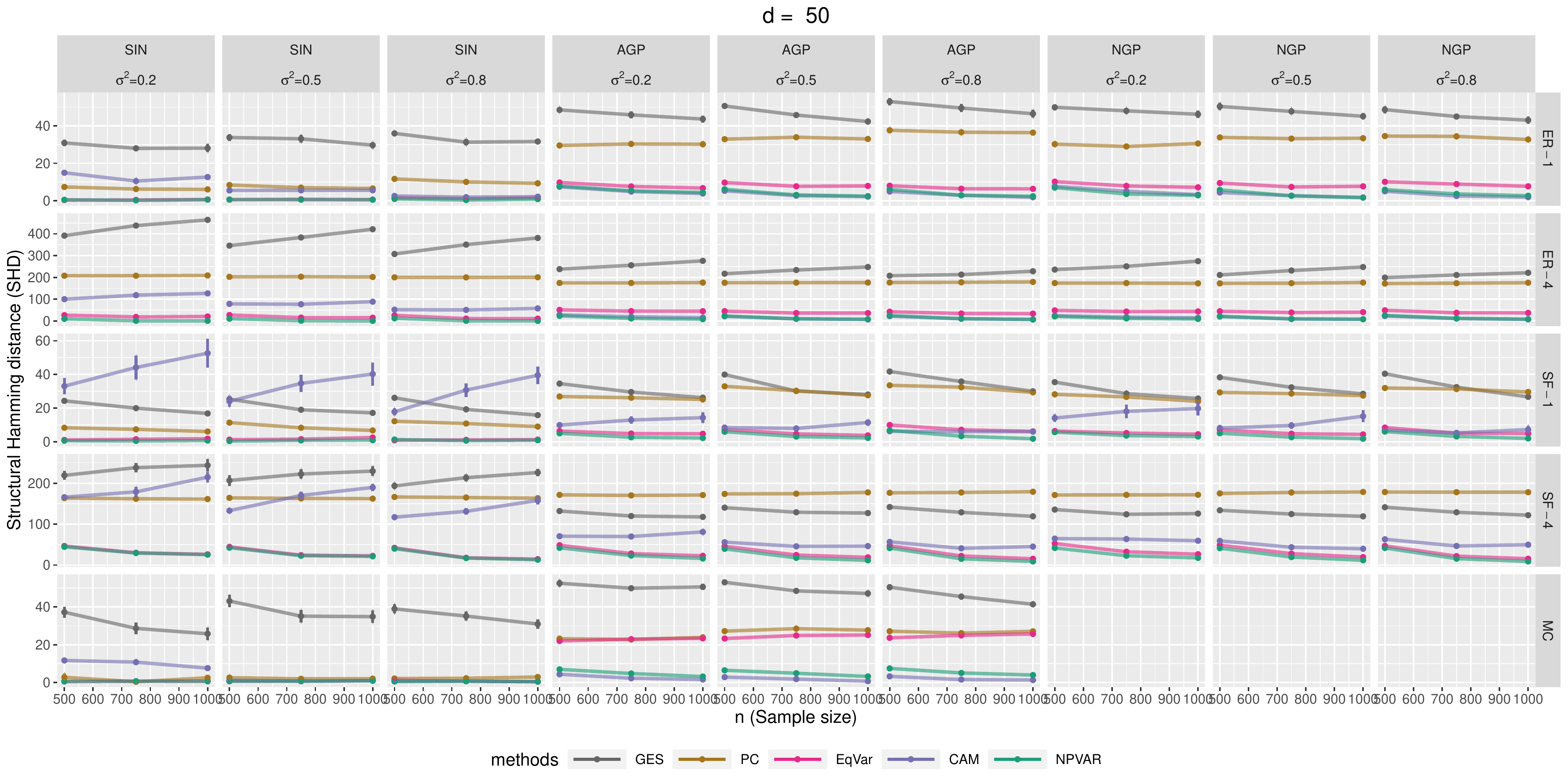}\\
\caption{SHD vs $n$ for fixed $d=50$.}
\label{fig:shd_vs_n_d50}
\end{figure}

\begin{figure}[h]
\centering
\includegraphics[width=\textwidth]{./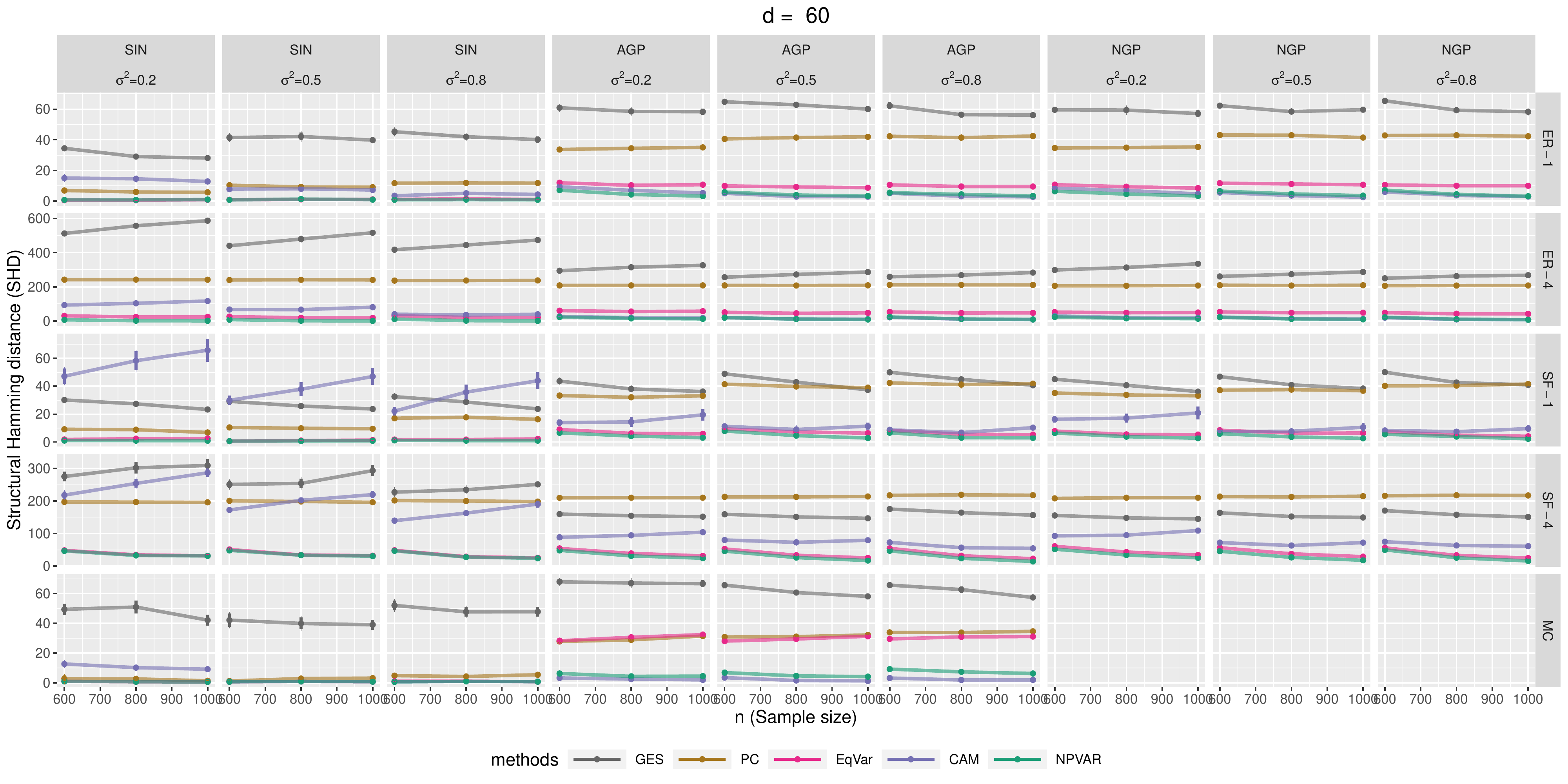}\\
\caption{SHD vs $n$ for fixed $d=60$.}
\label{fig:shd_vs_n_d60}
\end{figure}

\begin{figure}[h]
\centering
\includegraphics[width=\textwidth]{./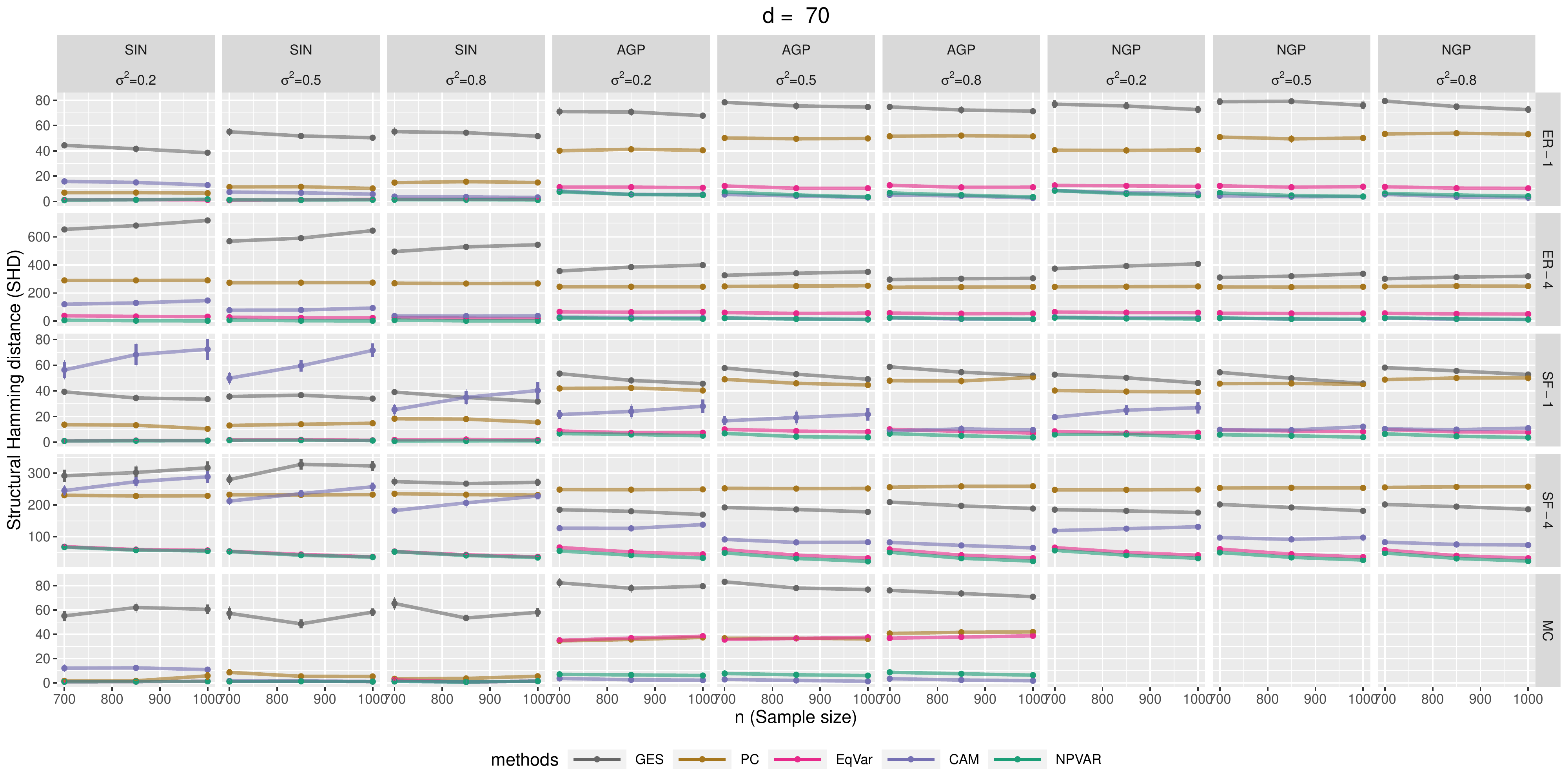}\\
\caption{SHD vs $n$ for fixed $d=70$.}
\label{fig:shd_vs_n_d70}
\end{figure}

\begin{figure}[h]
\centering
\includegraphics[width=\textwidth]{./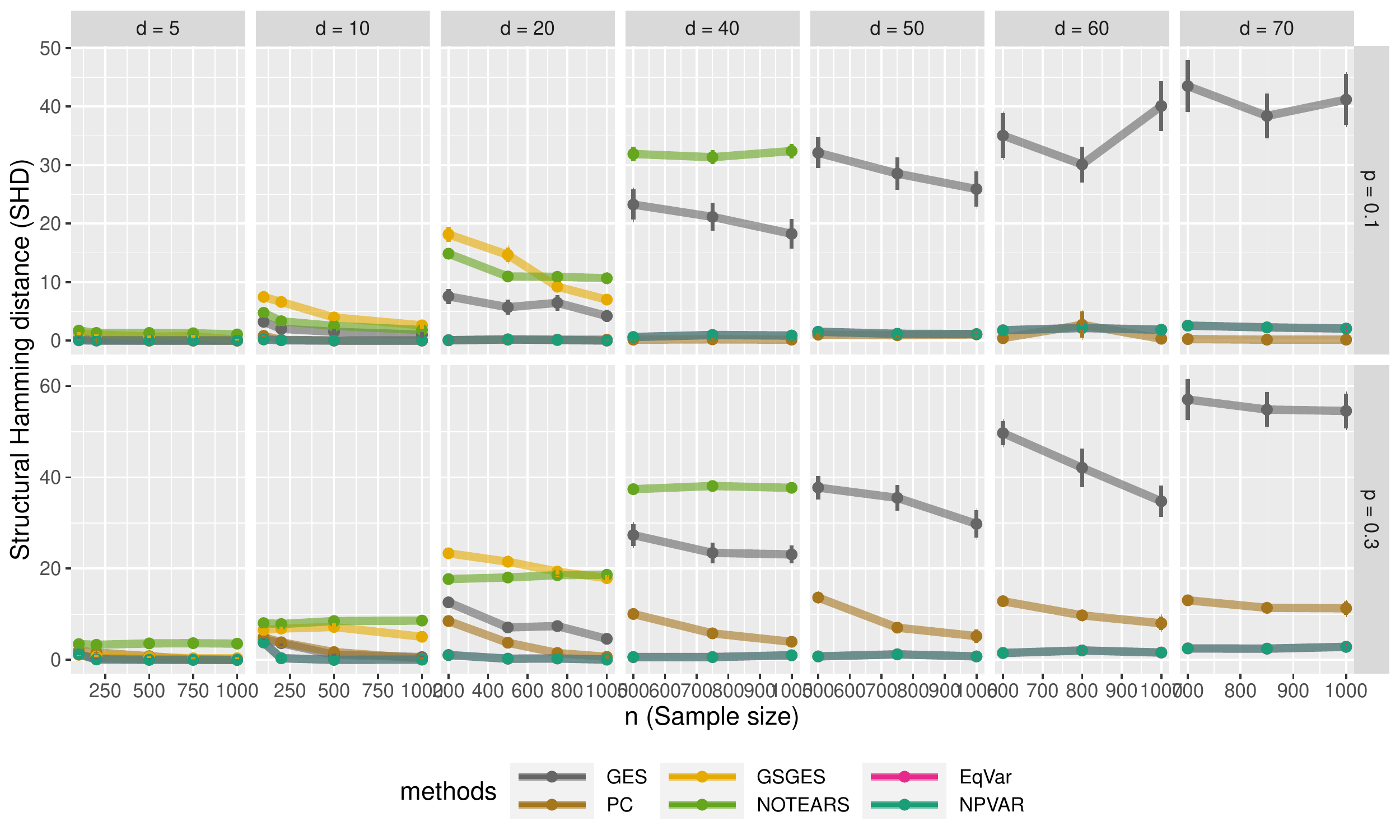}\\
\caption{SHD vs $n$ for different $d$ ranging from 5 to 70 on GLM}
\label{fig:glm}
\end{figure}

\begin{figure}[h]
\centering
\includegraphics[width=\textwidth]{./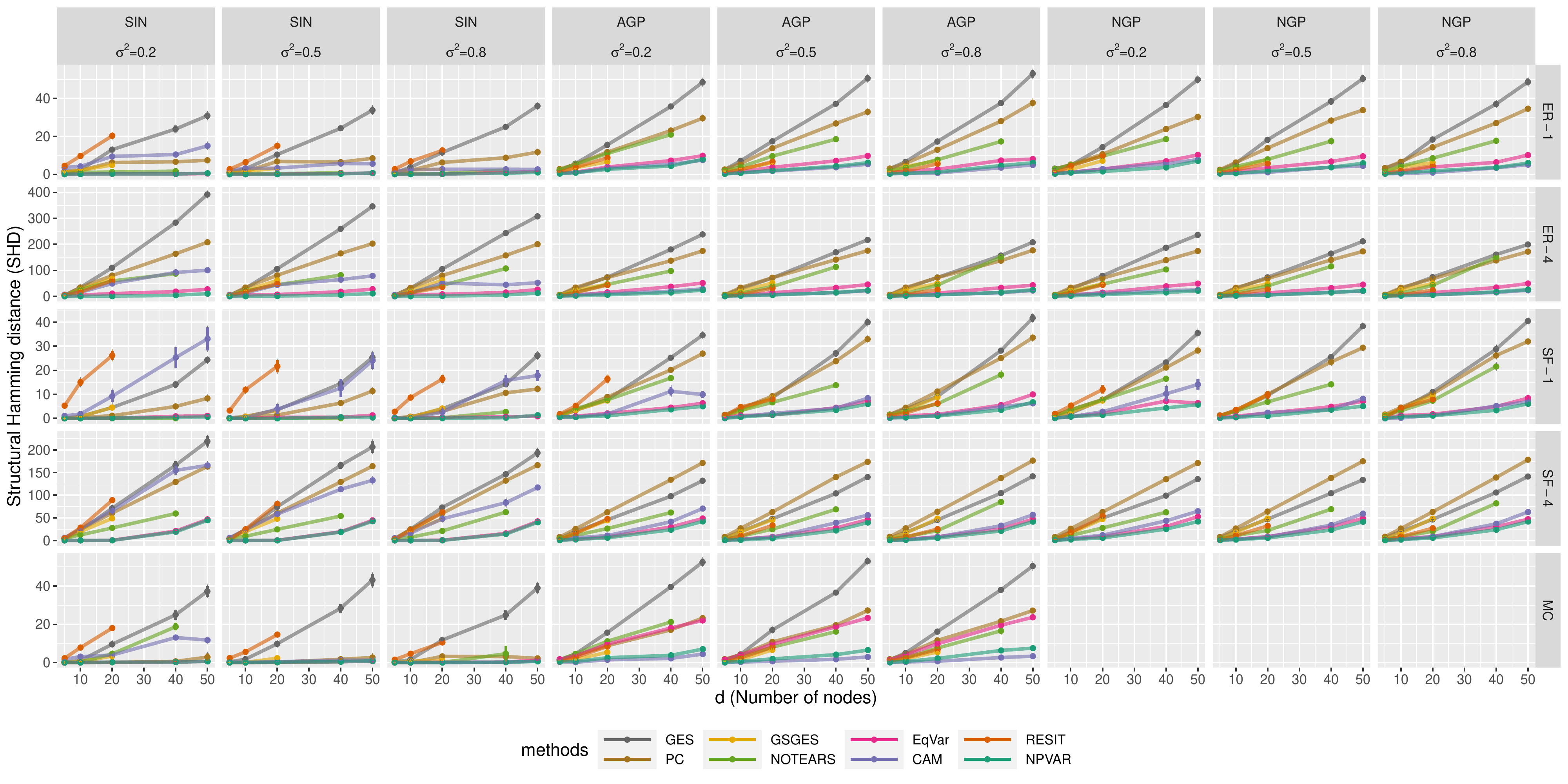}\\
\caption{SHD vs $d$ for fixed $n=500$.}
\label{fig:shd_vs_d_n500}
\end{figure}

\begin{figure}[h]
\centering
\includegraphics[width=\textwidth]{./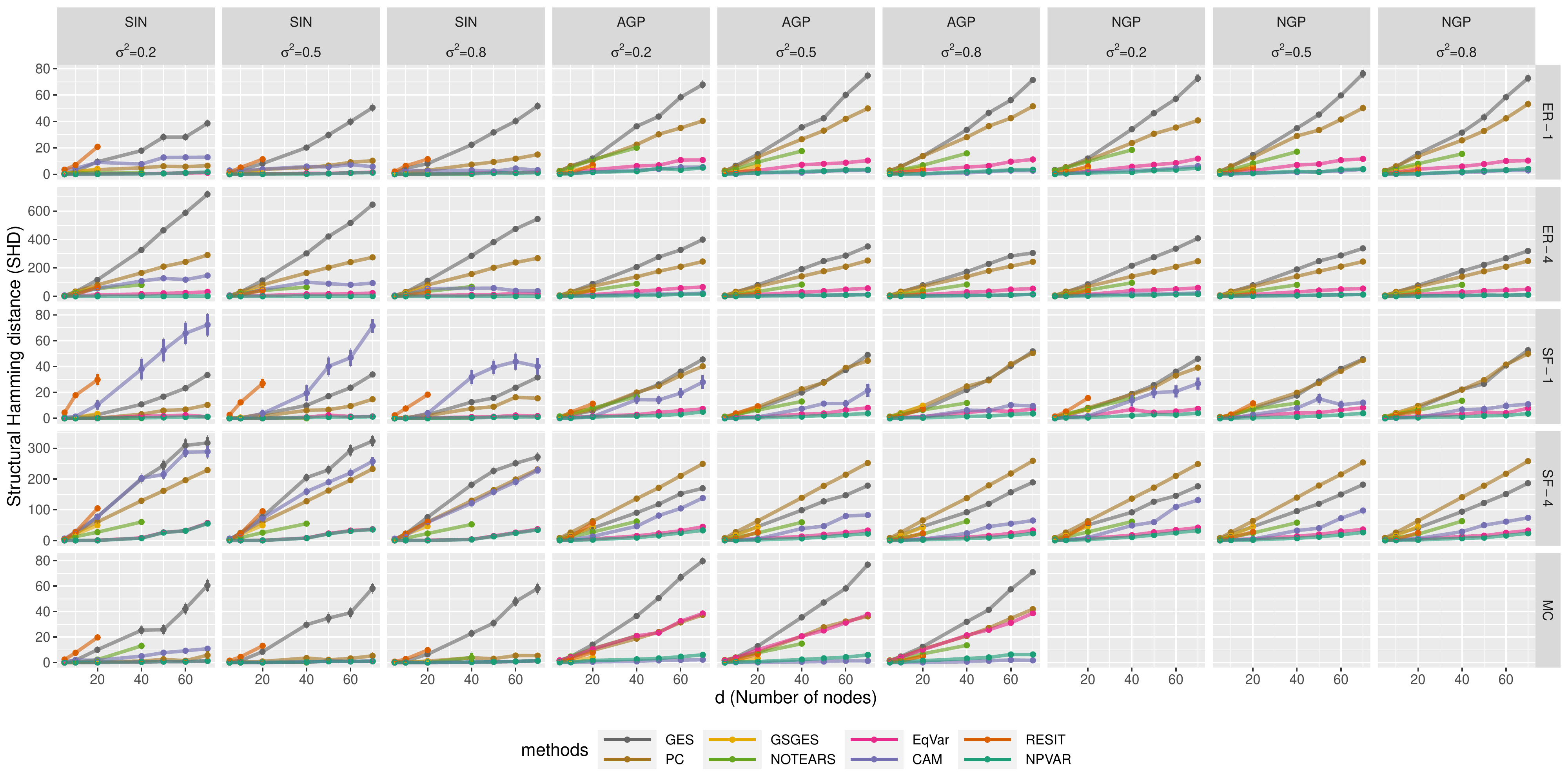}\\
\caption{SHD vs $d$ for fixed $n=1000$.}
\label{fig:shd_vs_d_n1000}
\end{figure}

\begin{figure}[h]
\centering
\includegraphics[width=\textwidth]{./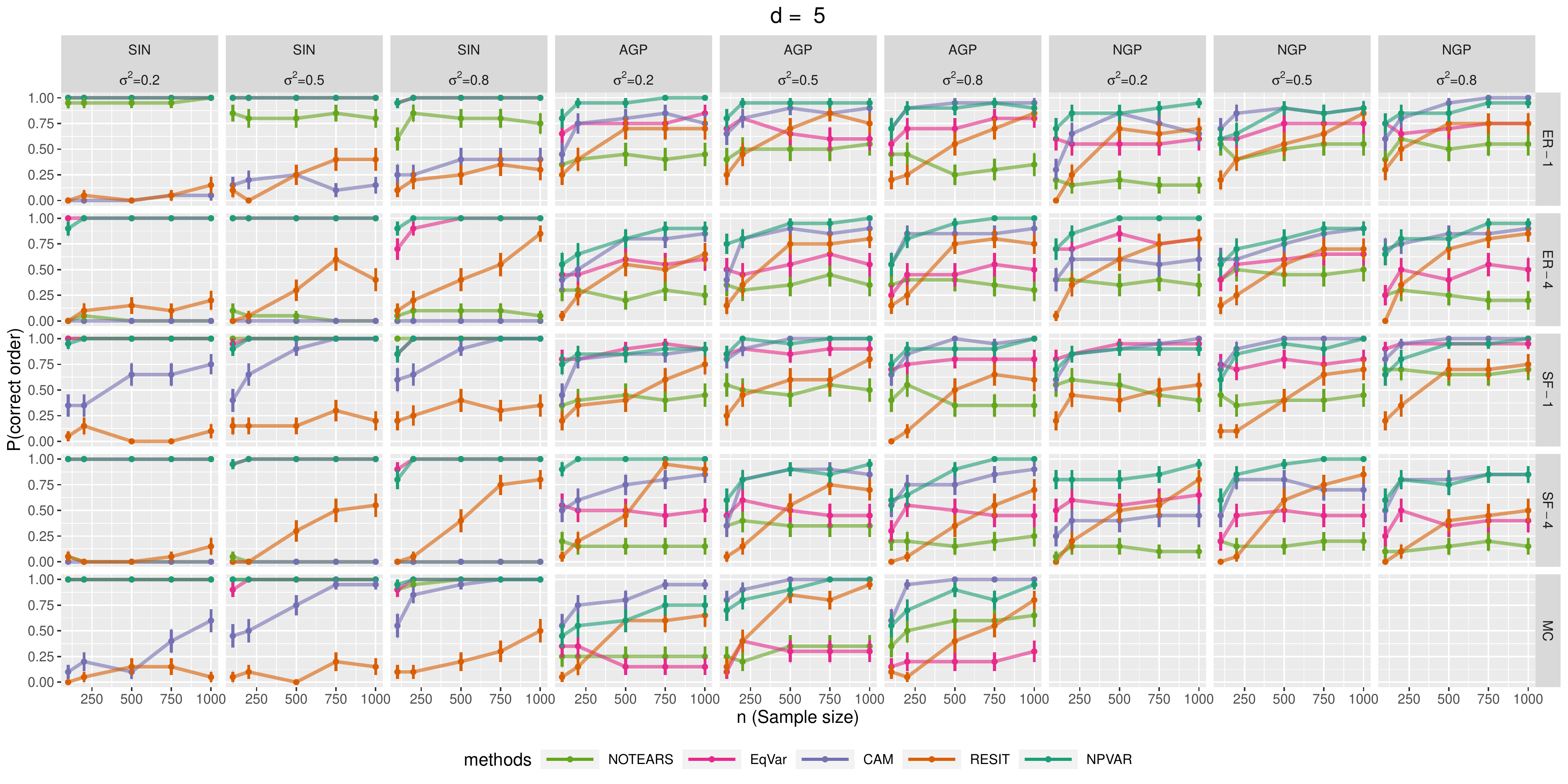}\\
\caption{Ordering recovery vs $n$ for fixed $d=5$.}
\label{fig:order_vs_n_d5}
\end{figure}

\begin{figure}[h]
\centering
\includegraphics[width=\textwidth]{./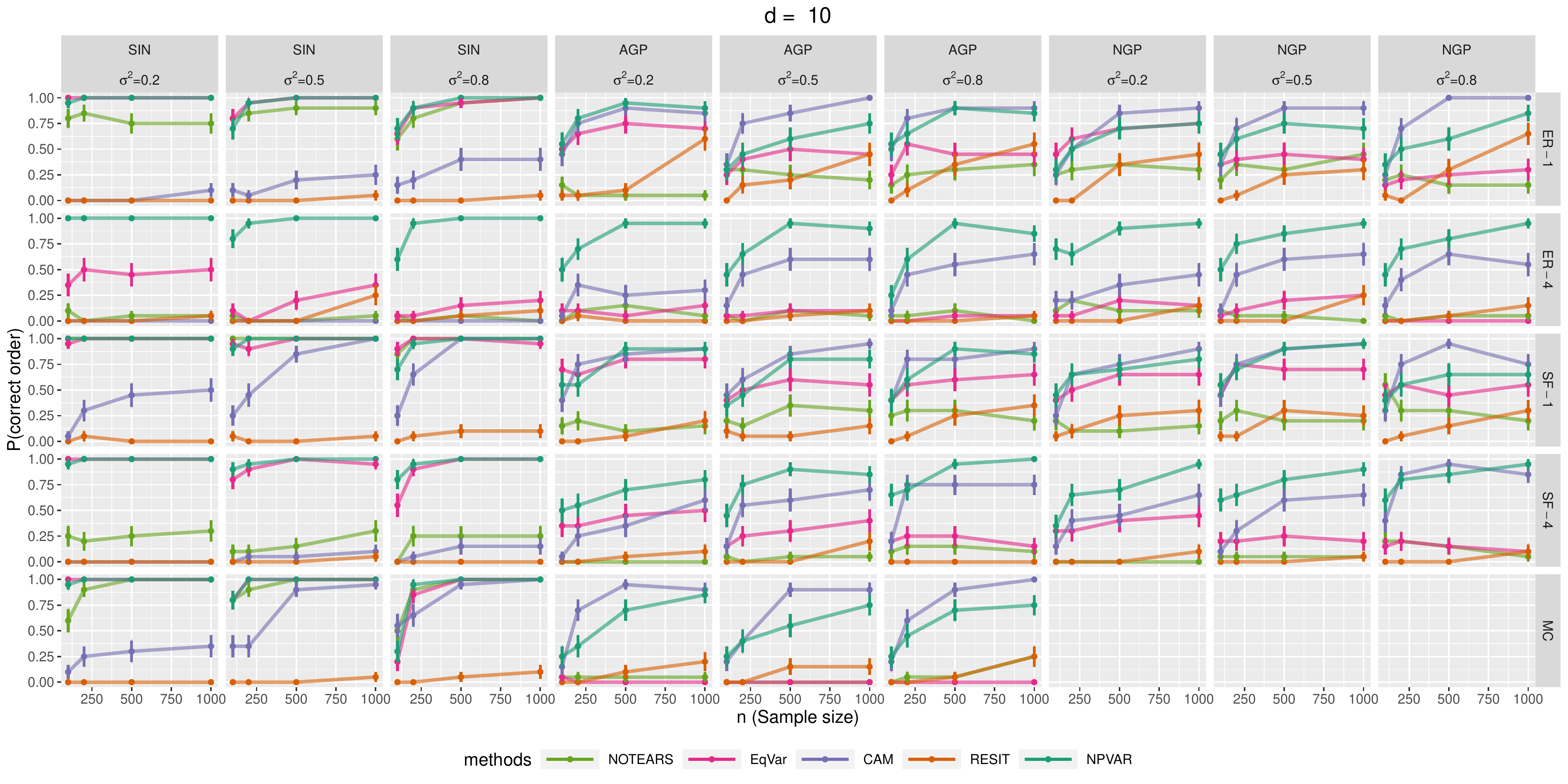}\\
\caption{Ordering recovery vs $n$ for fixed $d=10$.}
\label{fig:order_vs_n_d10}
\end{figure}

\begin{figure}[h]
\centering
\includegraphics[width=\textwidth]{./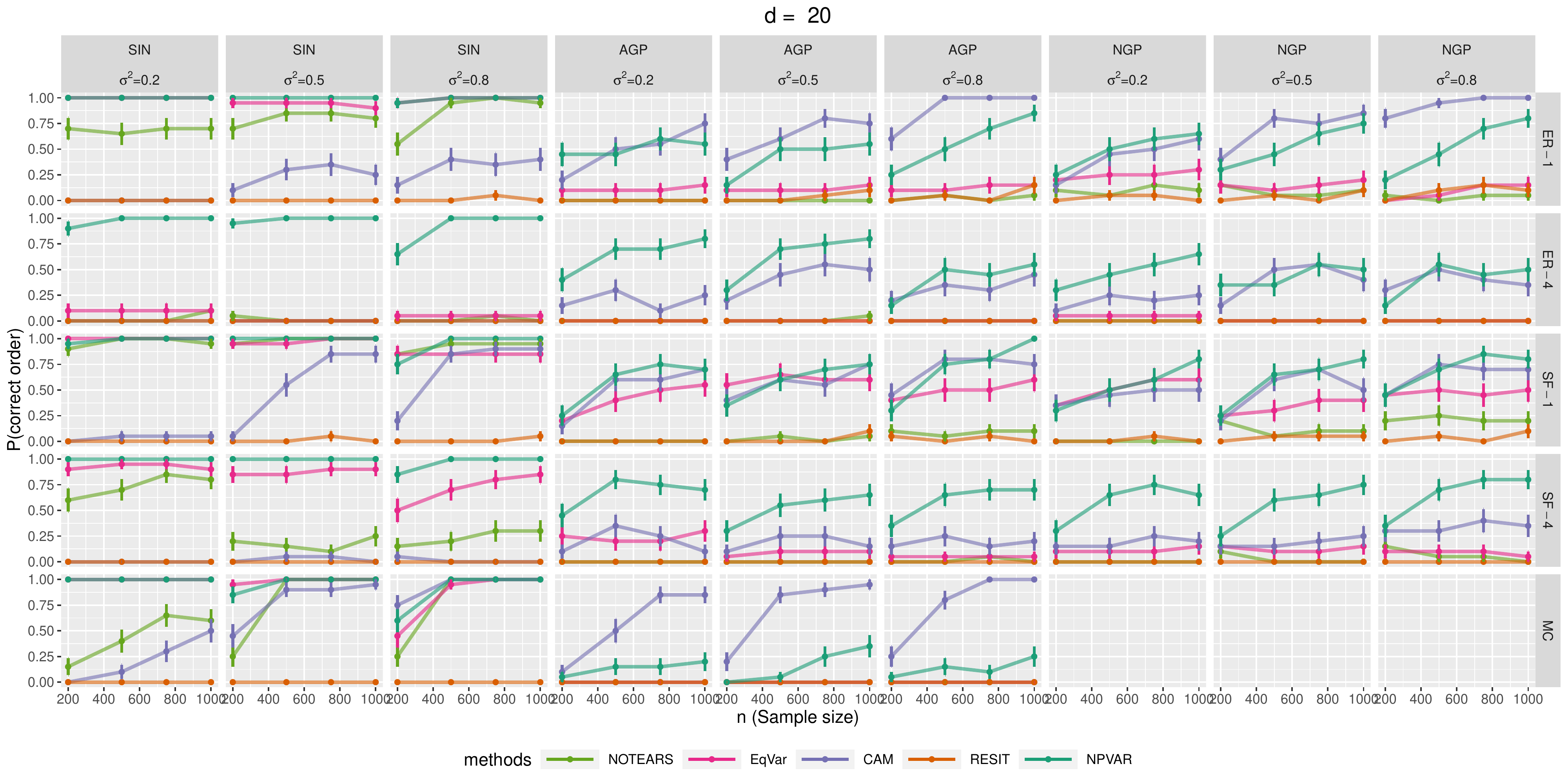}\\
\caption{Ordering recovery vs $n$ for fixed $d=20$.}
\label{fig:order_vs_n_d20}
\end{figure}

\begin{figure}[h]
\centering
\includegraphics[width=\textwidth]{./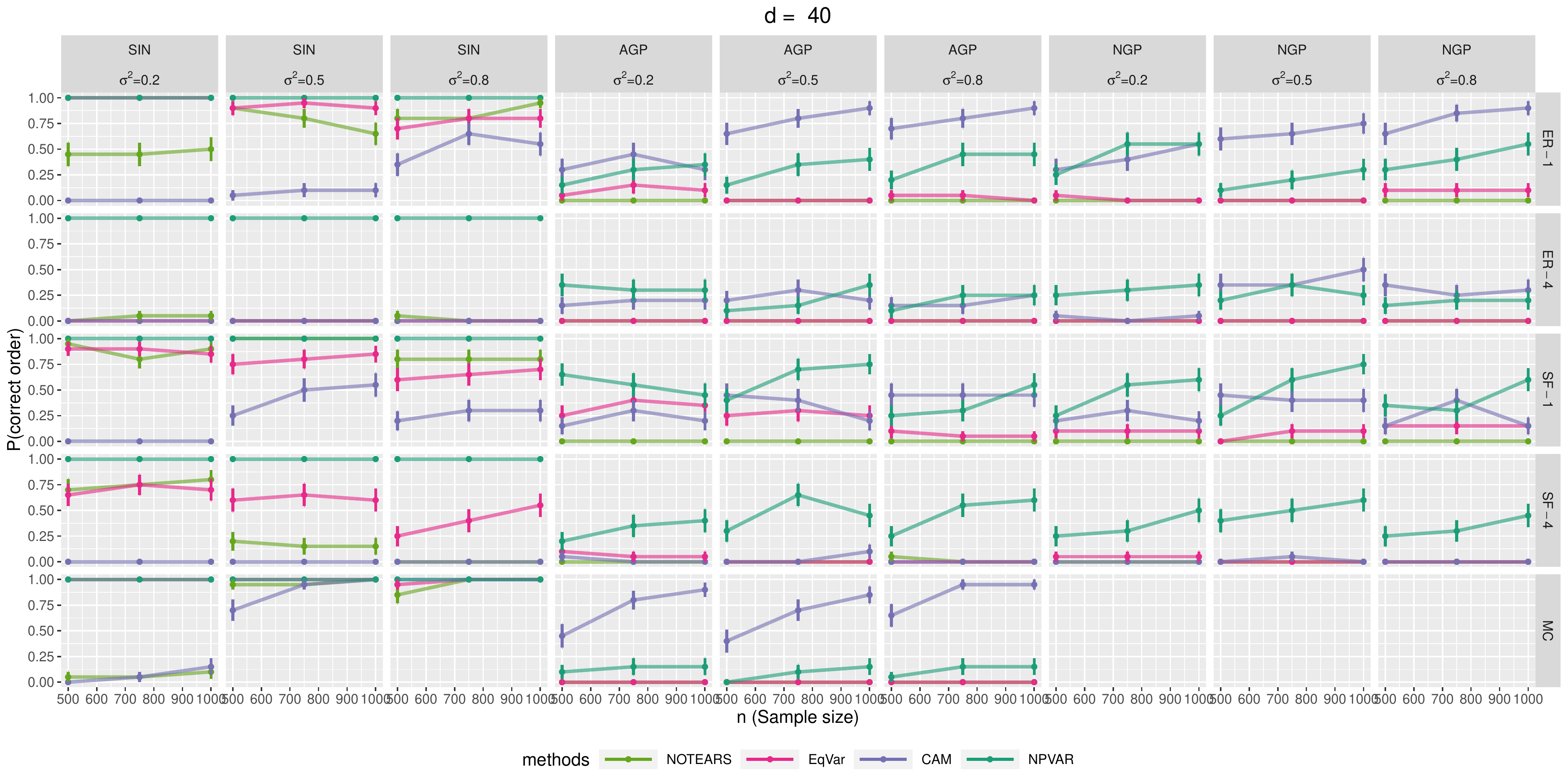}\\
\caption{Ordering recovery vs $n$ for fixed $d=40$.}
\label{fig:order_vs_n_d40}
\end{figure}

\begin{figure}[h] 
\centering
\includegraphics[width=\textwidth]{./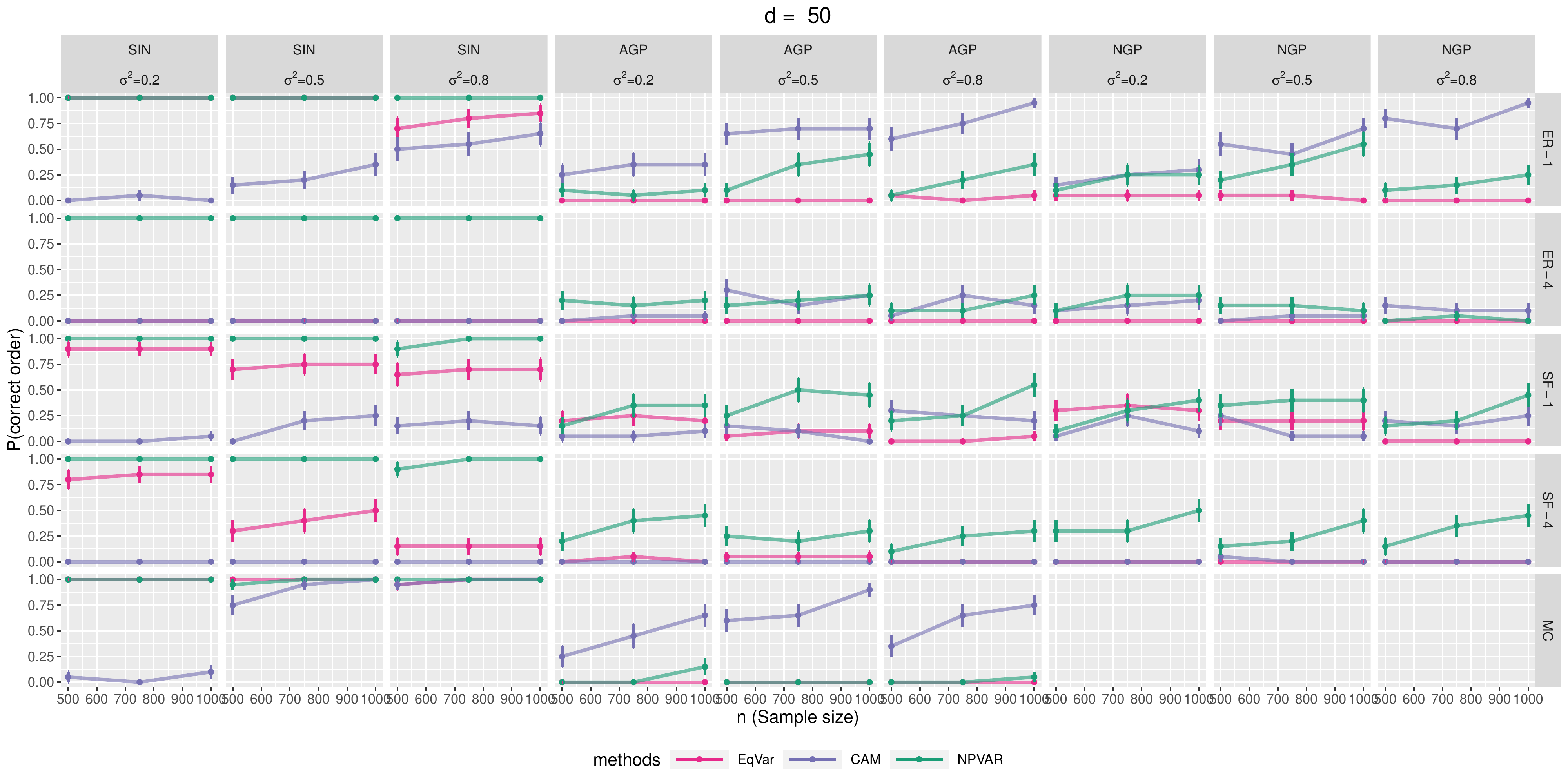}\\
\caption{Ordering recovery vs $n$ for fixed $d=50$.}
\label{fig:order_vs_n_d50}
\end{figure}

\begin{figure}[h]
\centering
\includegraphics[width=\textwidth]{./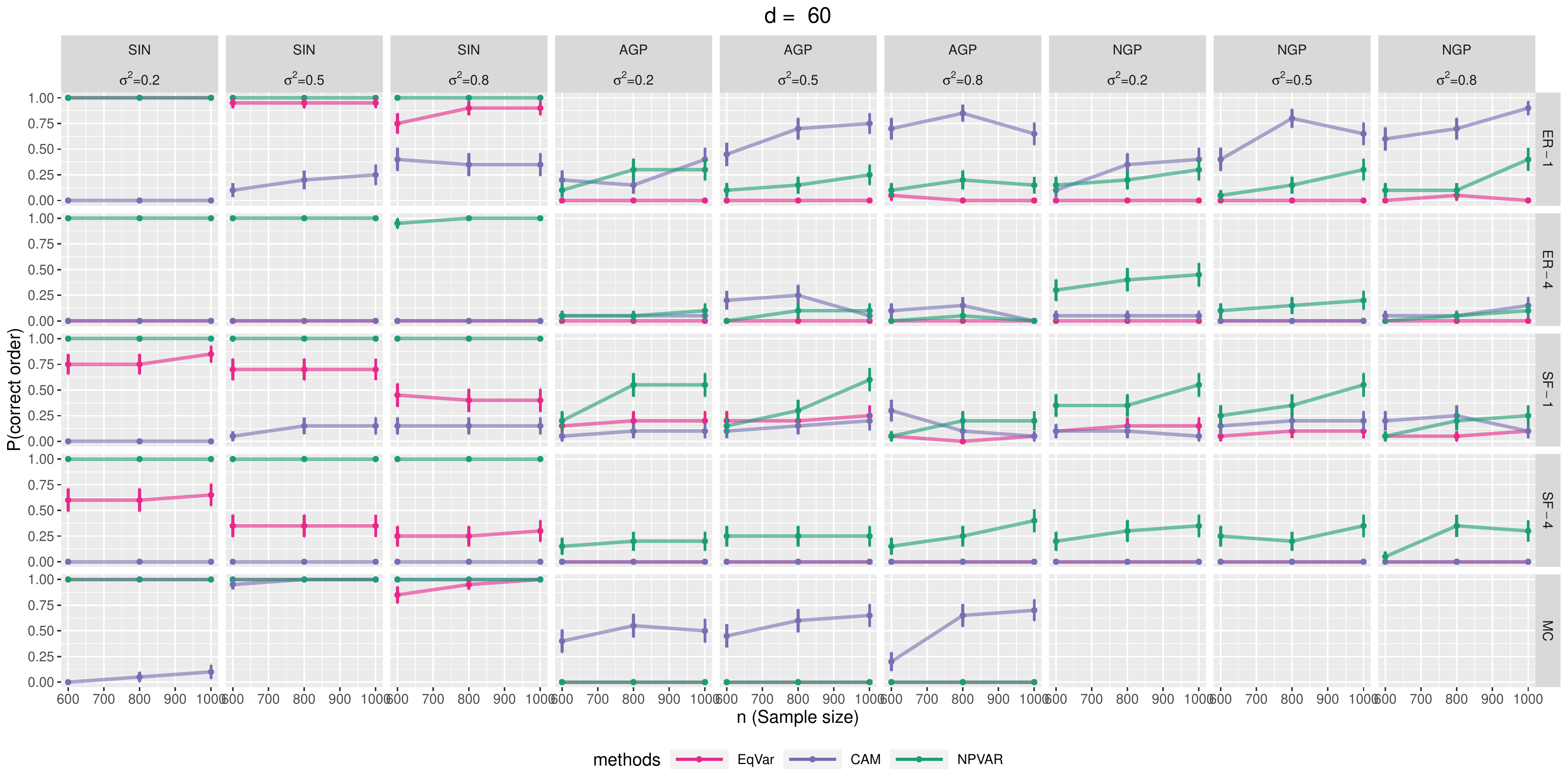}\\
\caption{Ordering recovery vs $n$ for fixed $d=60$.}
\label{fig:order_vs_n_d60}
\end{figure}

\begin{figure}[h]
\centering
\includegraphics[width=\textwidth]{./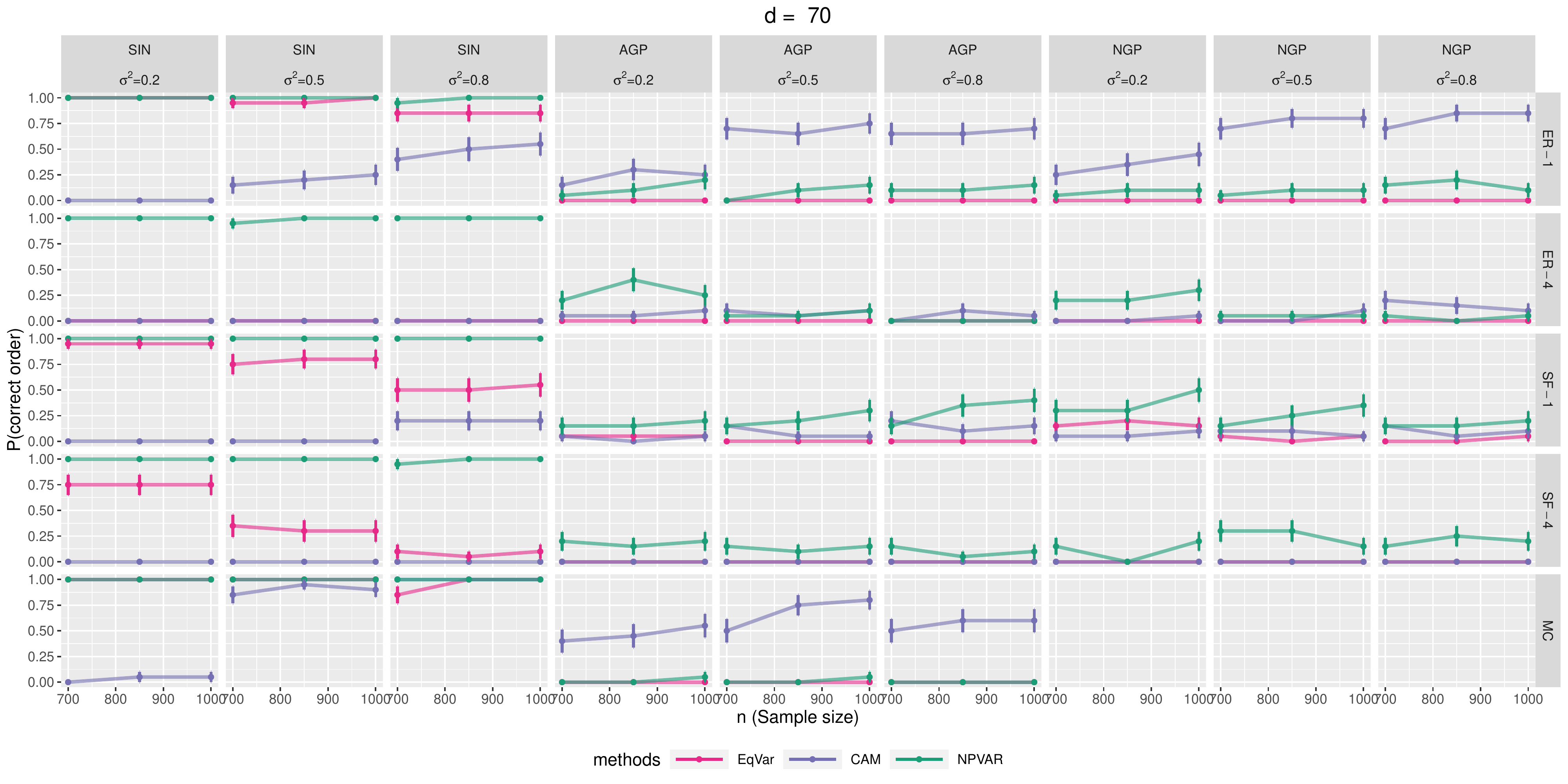}\\
\caption{Ordering recovery vs $n$ for fixed $d=70$.}
\label{fig:order_vs_n_d70}
\end{figure}

\clearpage
\bibliography{genbib,personalbib}
\bibliographystyle{bbib}
\end{document}